\newtheorem{theorem}{Theorem}
\newtheorem{corollary}[theorem]{Corollary}
\newtheorem{lemma}[theorem]{Lemma}
\newtheorem{definition}[theorem]{Definition}
\renewcommand{\eqref}[1]{Eq.~(\ref{#1})}
\newcommand{\figref}[1]{Fig.~\ref{#1}}
\title{Scaling SVM and Least Absolute Deviations via Exact Data Reduction}
\author[1]{Jie Wang}
\author[1]{Peter Wonka}
\author[1]{Jieping Ye}
\affil[1]{Computer Science and Engineering, Arizona State University,
            USA}
\begin{document}

\maketitle

\begin{abstract}
The support vector machine (SVM) is a widely used method for classification. Although many efforts have been devoted to develop efficient solvers, it remains challenging to apply SVM to large-scale problems. A nice property of SVM is that the non-support vectors have no effect on the resulting classifier. Motivated by this observation, we present fast and efficient screening rules to discard non-support vectors by analyzing the {\bf d}ual problem of SVM via {\bf v}ariational {\bf i}nequalities (DVI). As a result, the number of data instances to be entered into the optimization can be substantially reduced. Some appealing features of our screening method are: (1) DVI is safe in the sense that the vectors discarded by DVI are guaranteed to be non-support vectors; (2) the data set needs to be scanned only once to run the screening, whose computational cost is negligible compared to that of solving the SVM problem; (3) DVI is independent of the solvers and can be integrated with any existing efficient solvers. We also show that the DVI technique can be extended to detect non-support vectors in the least absolute deviations regression (LAD). To the best of our knowledge, there are currently no screening methods for LAD.
We have evaluated DVI on both synthetic and real data sets. Experiments indicate that DVI significantly outperforms the existing state-of-the-art screening rules for SVM, and is very effective in discarding non-support vectors for LAD. The speedup gained by DVI rules can be up to two orders of magnitude.
\end{abstract}
\section{Introduction}

The support vector machine is one of the most popular classification tools in machine learning. Many efforts have been devoted to develop efficient solvers for SVM \cite{Hastie2004,Joachims2006,Shalev-Shwartz2007,Hsieh2008,Fan2008}. However, the applications of SVM to large-scale problems still pose significant challenges. To address this issue, one promising approach is by ``{\it screening}". The key idea of screening is motivated by the well known feature of SVM, that is, the resulting classifier is determined only by the so called ``{\it support vectors}". If we first identify the non-support vectors via screening, and then remove them from the optimization, it may lead to substantial savings in the computational cost and memory. Another useful tool in machine learning and statistics is the least absolute deviations regression (LAD) \cite{Powell1984,Wang2006,Chen2008,Rao2008} or $\ell_1$ method. When the protection against outliers is a major concern, LAD provides a useful and plausible alternative to the classical least squares or $\ell_2$ method for linear regression. In this paper, we study both SVM and LAD under a unified framework.


The idea of screening has been successfully applied to a large class of $\ell_1$-regularized problems \cite{Ghaoui2012,Xiang2011,Wang2013,Tibshirani11}, including Lasso, $\ell_1$-regularized logistic regression, elastic net, and more general convex problems. Those methods are able to discard a large portion of ``inactive" features, which has $0$ coefficients in the optimal solution, and the speedup can be several orders of magnitude.

Recently, Ogawa et al. \cite{Ogawa2013} proposed a ``{\it {\bf s}afe {\bf s}creening}" rule to identify {\bf n}on-{\bf s}upport {\bf v}ectors for SVM; in this paper, we refer to this method as SSNSV for convenience. Notice that, the former approaches for $\ell_1$-regularized problems aim to discard inactive ``{\it features}", while SSNSV is to identify the non-support ``{\it vectors}". This essential difference makes SSNSV a nontrivial extension of the existing feature screening methods. Although there exist many methods for data reduction for SVM \cite{Achlioptas2002,Yu2003,Cao2006}, they are not safe, in the sense that the resulting classification model may be different.
To the best of our knowledge, SSNSV is the only existing safe screening method \cite{Ogawa2013} to identify the non-support vectors for SVM. However, in order to run the screening, SSNSV needs to iteratively determine an appropriate parameter value and an associated feasible solution, which can be very time consuming.

In this paper, we develop novel efficient and effective screening rules, called ``DVI", for a class of supervised learning problems including SVM and LAD \cite{Buchinsky1998,Jin2001}. The proposed method, DVI, shares the same advantage as SSNSV \cite{Ogawa2013}, that is, both rules are safe in the sense that the discarded vectors are guaranteed to be non-support vectors. The proposed DVI identifies the non-support vectors by estimating a lower bound of the inner product between each vector and the optimal solution, which is unknown. The more accurate the estimation is, the more non-support vectors can be detected. However, the estimation turns out to be highly non-trivial since the optimal solution is not available. To overcome this difficulty, we propose a novel framework to accurately estimate the optimal solution via the estimation of the ``dual optimal solution", as the primal and dual optimal solutions can be related by the KKT conditions \cite{Guler2010}. Our main technical contribution is to estimate the dual optimal solution via the so called ``variational inequalities" \cite{Guler2010}. Our experiments on both synthetic and real data demonstrate that DVI can identify far more non-support vectors than SSNSV. Moreover, by using the same technique, that is, variational inequalities, we can strictly improve SSNSV in identifying the non-support vectors. Our results also show that DVI is very effective in discarding non-support vectors for LAD. The speedup gained by DVI rules can be up to two orders of magnitude.

The rest of this paper is organized as follows. In Section \ref{section:basics}, we study the SVM and LAD problems under a unified framework. We then introduce our DVI rules in detail for the general formulation in Sections \ref{section:estimate_dual} and \ref{section:DVI_general}. In Sections \ref{section:DVI} and \ref{section:LAD}, we extend the DVI rules derived in Section \ref{section:DVI_general} to SVM and LAD respectively. In Section \ref{section:experiments}, we evaluate our DVI rules for SVM and LAD using both synthetic and real data. We conclude this paper in Section \ref{section:conclusion}.

{\bf Notations:} Throughout this paper, we use $\langle{\bf x}, {\bf y}\rangle=\sum_i x_i y_i$ to denote the inner product of vectors ${\bf x}$ and ${\bf y}$, and $\|{\bf x}\|^2=\langle{\bf x},{\bf x}\rangle$. For vector ${\bf x}$, let $[{\bf x}]_i$ be the $i^{th}$ component of ${\bf x}$. If ${\bf M}$ is a matrix, ${\bf m}_i$ is the $i^{th}$ column of ${\bf M}$ and $[{\bf M}]_{i,j}$ is the $(i,j)^{th}$ entry of ${\bf M}$. Given a scalar $x$, we denote $\max\{x,0\}$ by $[x]_+$. For the index set $\mathcal{I}:=\{1,\ldots,l\}$, let $\mathcal{J}:=\{j_1,\ldots,j_k\}\subseteq\mathcal{I}$ and $\mathcal{J}^{\rm c}:=\mathcal{I}\setminus\mathcal{J}$. For a vector ${\bf x}$ or a matrix ${\bf M}$, let $[{\bf x}]_{\mathcal{J}}=([{\bf x}]_{j_1},\ldots,[{\bf x}]_{j_k})^T$ and $[{\bf M}]_{\mathcal{J}}=({\bf m}_{j_1},\ldots,{\bf m}_{j_k})$. Moreover, let $\Gamma_0(\Re^n)$ be the class of proper and lower semicontinuous convex functions from $\Re^n$ to $(-\infty,\infty]$. The conjugate of $f\in\Gamma_0(\Re^n)$ is the function $f^*\in\Gamma_0(\Re^n)$ given by
\begin{equation}\label{eqn:conjugate}
f^*:\Re^n\rightarrow(-\infty,\infty]:\theta\mapsto\sup_{{\bf x}\in\Re^n}{\bf x}^T\theta-f({\bf x}).
\end{equation}
The biconjugate of $f\in\Gamma_0(\Re^n)$ is the function $f^{**}\in\Gamma_0(\Re^n)$ given by
\begin{equation}
f^{**}:\Re^n\rightarrow(-\infty,\infty]:{\bf x}\mapsto\sup_{\theta\in\Re^n}{\bf x}^T\theta-f^*(\theta).
\end{equation}

\section{Basics and Motivations}\label{section:basics}
In this section, we study the SVM and LAD problems under a unified framework. Then, we motivate the general screening rules via the KKT conditions.
Consider the convex optimization problems of the following form:
\begin{align}\label{prob:general}
\min_{{\bf w}\in\Re^n} \frac{1}{2}\|{\bf w}\|^2+C\Phi({\bf w}),
\end{align}
where $\Phi:\Re^n\rightarrow\Re$ is a convex function but not necessarily differentiable and $C>0$ is a regularization parameter. Notice that, the function $\Phi$ is generally referred to as the empirical loss. More specifically,
suppose we have a set of observations $\{{\bf x}_i, y_i\}_{i=1}^l$, where ${\bf x}_i\in\Re^n$ and $y_i\in\Re$ are the $i^{th}$ data instance and the corresponding response. We focus on the following function class:
\begin{align}
\Phi({\bf w})=\sum_{i=1}^l\varphi\left({\bf w}^T(a_i{\bf x}_i)+b_iy_i\right),
\end{align}
where $\varphi:\Re\rightarrow\Re_+$ is a nonconstant continuous sublinear function, and $a_i,b_i$ are scalars. We provide the definition of sublinear function as follows.
\begin{definition}\label{def:sublinear}
\cite{Hiriart-Urruty1993} A function $\sigma:\Re^n\rightarrow(-\infty,\infty]$ is said to be sublinear if it is convex, and positively homogeneous, i.e.,
\begin{align}
\sigma(tx)=t\sigma(x),\,\,\forall x\in\Re^n {\rm and}\,\,t>0.
\end{align}
\end{definition}
We will see that SVM and LAD are both special cases of problem (\ref{prob:general}).
A nice property of the function $\varphi$ is that the biconjugate $\varphi^{**}$ is exactly $\varphi$ itself, as stated in Lemma \ref{lemma:biconjugate}.
\begin{lemma}\label{lemma:biconjugate}
For the function $\varphi:\Re\rightarrow\Re_+$ which is continuous and sublinear, we have $\varphi\in\Gamma_0(\Re)$, and thus $\varphi^{**}=\varphi$.
\end{lemma}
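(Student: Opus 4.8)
The plan is to invoke the Fenchel–Moreau biconjugation theorem, which states that for any function $f$, one has $f^{**}=f$ if and only if $f\in\Gamma_0(\Re^n)$ (i.e. $f$ is proper, lower semicontinuous, and convex). Thus the entire lemma reduces to verifying the membership $\varphi\in\Gamma_0(\Re)$, after which $\varphi^{**}=\varphi$ follows immediately. So my first step would be to state this equivalence cleanly as the engine of the proof, citing it from the convex-analysis reference already in the paper, and then to check the three defining conditions of $\Gamma_0(\Re)$ one at a time.

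Checking the three conditions is the bulk of the work, and each is short. \emph{Convexity} is immediate: by Definition~\ref{def:sublinear}, a sublinear function is by hypothesis convex, so nothing is needed here. \emph{Lower semicontinuity} is also free, since $\varphi$ is assumed continuous on all of $\Re$, and continuity implies lower semicontinuity. The one condition requiring a genuine (if brief) argument is \emph{properness}, i.e. that $\varphi$ never equals $+\infty$ and is not identically $+\infty$. Here I would use the stated codomain: $\varphi:\Re\rightarrow\Re_+$ takes only finite nonnegative values, so $\varphi(x)<\infty$ for every $x$ and $\varphi$ is finite everywhere. Combined with the fact that $\varphi$ is nonconstant (hence not the everywhere-$+\infty$ function, which is trivially ruled out anyway since $\varphi$ is real-valued), properness holds.

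I would expect the main (and only) subtlety to be purely expository: making sure the reader sees that positive homogeneity, convexity, continuity, and finiteness together land $\varphi$ squarely in $\Gamma_0(\Re)$, so that the biconjugation theorem applies with no side conditions. There is no hard analytic obstacle, because the codomain $\Re_+$ hands us finiteness (and thus properness) outright, continuity hands us lower semicontinuity, and the definition of sublinearity hands us convexity directly. Once all three are in place, the conclusion $\varphi^{**}=\varphi$ is simply the statement of the Fenchel–Moreau theorem applied to $\varphi$, completing the proof. The positive-homogeneity half of the sublinearity definition is not actually needed for this particular lemma—only convexity is used—so I would note that sublinearity is invoked here only through its convexity component, with homogeneity reserved for later results where the conjugate $\varphi^*$ is computed explicitly.
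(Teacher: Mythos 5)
Your proposal is correct and follows essentially the same route as the paper: both invoke the Fenchel--Moreau equivalence ($f^{**}=f$ iff $f\in\Gamma_0$, cited in the paper as Lemma~\ref{lemma:biconjugate_iff}) and then verify properness from the codomain $\Re_+$, lower semicontinuity from continuity, and convexity from sublinearity. Your closing observation that only the convexity half of sublinearity is used here is accurate but not something the paper remarks on.
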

It is straightforward to check the statement in Lemma \ref{lemma:biconjugate} by verifying the requirements of the function class $\Gamma_0(\Re)$. For self-completeness, we provide a proof in the supplement. According to Lemma \ref{lemma:biconjugate}, problem (\ref{prob:general}) can be rewritten as

\begin{align}\label{eqn:general_fenchel}
&\min_{{\bf w}\in\Re^n} \frac{1}{2}\|{\bf w}\|^2+C\sum_{i=1}^l\varphi^{**}\left({\bf w}^T(a_i{\bf x}_i)+b_iy_i\right)\\ \nonumber
=&\min_{{\bf w}\in\Re^n} \frac{1}{2}\|{\bf w}\|^2+C\sum_{i=1}^l\left\{\sup_{\theta_i\in\Re}\theta_i\left[{\bf w}^T(a_i{\bf x}_i)+b_iy_i\right]-\varphi^*(\theta_i)\right\}\\ \nonumber
=&\min_{{\bf w}\in\Re^n}\sup_{\substack{\theta_i\in\Re\\i=1,\ldots,l}}\frac{1}{2}\|{\bf w}\|^2+C\sum_{i=1}^l\left\{\theta_i\left[{\bf w}^T(a_i{\bf x}_i)+b_iy_i\right]-\varphi^*(\theta_i)\right\}\\ \nonumber
=&\sup_{\theta\in\Re^l}-C\sum_{i=1}^l\varphi^*(\theta_i)+\min_{{\bf w}\in\Re^n}\frac{1}{2}\|{\bf w}\|^2+C\langle{\bf Z}{\bf w}+\bar{\bf y},\theta\rangle,
\end{align}

where $\theta=(\theta_1,\ldots,\theta_l)^T$, ${\bf Z}=(a_i{\bf x}_i,\ldots,a_l{\bf x}_l)^T$ and $\bar{\bf y}=(b_1y_1,\ldots,b_ly_l)^T$.
Let $\ell({\bf w}):=\frac{1}{2}\|{\bf w}\|^2+C\langle{\bf Z}{\bf w}+\bar{\bf y},\theta\rangle$. The reason we can exchange the order of $\min$ and $\sup$ in \eqref{eqn:general_fenchel} is due to the strong duality of problem (\ref{prob:general}) \cite{Boyd04}.

By setting $\frac{\partial \ell({\bf w})}{\partial {\bf w}}=0$, we have
\begin{align}\label{eqn:w_theta0}
{\bf w}^*=-C{\bf Z}^T\theta,
\end{align}
and thus
\begin{align}
\min_{\bf w}\ell({\bf w})=\ell({\bf w}^*)=-\frac{C^2}{2}\|{\bf Z}^T\theta\|^2+C\langle\bar{\bf y},\theta\rangle.
\end{align}
Hence, \eqref{eqn:general_fenchel} becomes
\begin{align}\label{prob:fenchel_dual1}
\sup_{\theta\in\Re^l}-C\sum_{i=1}^l\varphi^*(\theta_i)-\frac{C^2}{2}\|{\bf Z}^T\theta\|^2+C\langle\bar{\bf y},\theta\rangle.
\end{align}
Moreover, because $\varphi\in\Gamma_0(\Re)$ is sublinear by Lemma \ref{lemma:biconjugate}, we know that $\varphi^*$ is the indicator function for a closed convex set. In fact, we have the following result:

\begin{lemma}\label{lemma:biconjugate_indicator}
For the nonconstant continuous sublinear function $\varphi:\Re\rightarrow\Re_+$, there exists a nonempty closed interval $I_{\varphi}=[\alpha,\beta]$ with $\alpha,\beta\in\Re$ and $\alpha<\beta$ such that
\begin{align}
\varphi^*(t):=\iota_{[\alpha,\beta]}=
\begin{cases}
0,\hspace{4mm}{\rm if}\hspace{2mm}t\in [\alpha,\beta],\\
\infty,\hspace{2mm}{\rm otherwise.}
\end{cases}
\end{align}
\end{lemma}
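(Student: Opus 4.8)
The plan is to exploit the fact that a sublinear function on $\Re$ is completely determined by its two one-sided slopes, to compute the conjugate explicitly from the definition in \eqref{eqn:conjugate}, and then to read off the interval together with its endpoints.

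First I would record the values of $\varphi$ on the two half-lines. Positive homogeneity applied at $x=0$ gives $\varphi(0)=t\varphi(0)$ for every $t>0$, hence $\varphi(0)=0$. Setting $\beta:=\varphi(1)$ and $\alpha:=-\varphi(-1)$, positive homogeneity applied at the points $1$ and $-1$ yields $\varphi(x)=\beta x$ for $x\ge 0$ and $\varphi(x)=\alpha x$ for $x\le 0$. Since $\varphi$ takes values in $\Re_+$ we get $\beta\ge 0$ and $-\alpha\ge 0$, i.e. $\alpha\le 0\le\beta$, and both numbers are finite because $\varphi$ is real-valued. Combining the two branches gives the compact description $\varphi(x)=\max\{\alpha x,\beta x\}$, a pointwise maximum of two linear functions, which is automatically continuous and convex (this also re-confirms $\varphi\in\Gamma_0(\Re)$ as in Lemma~\ref{lemma:biconjugate}).

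Next I would compute $\varphi^*(t)=\sup_x\{xt-\varphi(x)\}$ by splitting the supremum over $x\ge 0$ and $x\le 0$. On $x\ge 0$ the integrand equals $(t-\beta)x$ and on $x\le 0$ it equals $(t-\alpha)x$. For $t\in[\alpha,\beta]$ both expressions are $\le 0$, with equality at $x=0$, so the supremum is $0$; for $t>\beta$ the first term tends to $+\infty$ as $x\to+\infty$, and for $t<\alpha$ the second term tends to $+\infty$ as $x\to-\infty$. Hence $\varphi^*(t)=0$ exactly on $[\alpha,\beta]$ and $\varphi^*(t)=+\infty$ off it, which is precisely $\iota_{[\alpha,\beta]}$; the endpoints belong to the set because the supremum is attained (at $x=0$), so the effective domain is a genuinely closed interval.

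The one point that needs real care is the strict inequality $\alpha<\beta$, and this is where the nonconstant hypothesis must be used. Since $\alpha\le 0\le\beta$, the equality $\alpha=\beta$ would force $\alpha=\beta=0$ and therefore $\varphi\equiv 0$ by the branch formula, contradicting that $\varphi$ is nonconstant; thus $\alpha<\beta$ and $I_\varphi=[\alpha,\beta]$ is nondegenerate. (More conceptually, a continuous sublinear $\varphi\in\Gamma_0(\Re)$ is the support function of the closed convex set $C=\{\theta:\theta x\le\varphi(x)\ \forall x\}$, whose conjugate is $\iota_C$, and in $\Re$ such a set is a closed interval; the explicit computation above merely identifies its endpoints as $\alpha$ and $\beta$.) Everything else is routine: finiteness of $\alpha,\beta$ follows from $\varphi$ being real-valued, and closedness of the interval from attainment of the supremum at the origin.
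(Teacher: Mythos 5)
Your proof is correct, but it takes a different and more elementary route than the paper. The paper works abstractly: it invokes two cited results (that a sublinear function in $\Gamma_0$ is the support function of the closed convex set $Z=\{s: st\le\varphi(t)\ \forall t\}$, and that the conjugate of a support function is the indicator of that set), then shows $Z$ is a bounded closed interval by testing the defining inequality at $t=\pm1$, and finally rules out degeneracy exactly as you do ($0\in Z$ by nonnegativity, and $Z=\{0\}$ would force $\varphi\equiv0$). You instead exploit the one-dimensional structure directly: positive homogeneity pins down $\varphi(x)=\beta x$ on $x\ge0$ and $\varphi(x)=\alpha x$ on $x\le0$ with $\beta=\varphi(1)$, $\alpha=-\varphi(-1)$, and a two-case computation of $\sup_x\{xt-\varphi(x)\}$ yields $\iota_{[\alpha,\beta]}$ with no external machinery. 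What your approach buys is self-containedness and an explicit identification of the endpoints as $\alpha=-\varphi(-1)$ and $\beta=\varphi(1)$ (the paper only establishes existence of the interval); what the paper's approach buys is that it generalizes verbatim to $\Re^n$, where the conjugate of a sublinear function is the indicator of a general closed convex set rather than an interval. Your parenthetical remark at the end correctly identifies the paper's route as the "conceptual" version of your computation. One small point worth making explicit: your formula $\varphi(x)=\max\{\alpha x,\beta x\}$ is only valid because $\alpha\le0\le\beta$ (which you do establish from nonnegativity), so the order of the two steps matters; as written you have it right.
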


Let $I_{\varphi}^l=[\alpha, \beta]^l$. We can rewrite problem (\ref{prob:fenchel_dual1}) as
\begin{align}\label{prob:general_dual}
\sup_{\theta\in I_{\varphi}^l}-\frac{C^2}{2}\|{\bf Z}^T\theta\|^2+C\langle\bar{\bf y},\theta\rangle.
\end{align}
Problem (\ref{prob:general_dual}) is in fact the dual problem of (\ref{prob:general}). Moreover, the ``$\sup$" in problem (\ref{prob:general_dual}) can be replaced by ``$\max$" due to the strong duality \cite{Boyd04} of problem (\ref{prob:general}). Since $C>0$, problem (\ref{prob:general_dual}) is equivalent to
\begin{align}\label{prob:general_dual1}
\min_{\theta\in I_{\varphi}^l}\frac{C}{2}\|{\bf Z}^T\theta\|^2-\langle\bar{\bf y},\theta\rangle.
\end{align}
Let ${\bf w}^*(C)$ and $\theta^*(C)$ be the optimal solutions of (\ref{prob:general}) and (\ref{prob:general_dual}) respectively. \eqref{eqn:w_theta0} implies that
\begin{align}\label{eqn:primal_dual}
{\bf w}^*(C)=-C{\bf Z}^T\theta^*(C).
\end{align}
The KKT conditions\footnote{Please refer to the supplement for details.} of problem (\ref{prob:general_dual1}) are
\begin{align}\label{eqn:KKT_general}
[\theta^*(C)]_i\in
\begin{cases}
\beta,\hspace{7mm}{\rm if }\hspace{1mm}-\langle{\bf w}^*(C),a_i{\bf x}_i\rangle<b_iy_i;\\
[\alpha,\beta],\hspace{1mm}{\rm if }\hspace{1mm}-\langle{\bf w}^*(C),a_i{\bf x}_i\rangle=b_iy_i;\\
\alpha,\hspace{7mm}{\rm if}\hspace{1mm}-\langle{\bf w}^*(C),a_i{\bf x}_i\rangle>b_iy_i;\\
\end{cases}
\hspace{-4mm}i=1,\ldots,l.
\end{align}
For notational convenience, let
\begin{align*}
\mathcal{R}&=\{i:-\langle{\bf w}^*(C),a_i{\bf x}_i\rangle>b_iy_i\},\\
\mathcal{E}&=\{i:-\langle{\bf w}^*(C),a_i{\bf x}_i\rangle=b_iy_i\},\\
\mathcal{L}&=\{i:-\langle{\bf w}^*(C),a_i{\bf x}_i\rangle<b_iy_i\}.
\end{align*}
We call the vectors in the set $\mathcal{E}$ as ``support vectors". All the other vectors in $\mathcal{R}$ and $\mathcal{L}$ are called ``non-support vectors". The KKT conditions in (\ref{eqn:KKT_general}) imply that, if some of the data instances are known to be members of $\mathcal{R}$ and $\mathcal{L}$, then the corresponding components of $\theta^*(C)$ can be set accordingly and we only need the other components of $\theta^*(C)$. More precisely, we have the following result:

\begin{lemma}\label{lemma:dual_reduce}
Given index sets $\hat{\mathcal{R}}\subseteq\mathcal{R}$ and $\hat{\mathcal{L}}\subseteq\mathcal{L}$, we have

1. $[\theta^*(C)]_{\hat{\mathcal{R}}}=\alpha$ and $[\theta^*(C)]_{\hat{\mathcal{L}}}=\beta$.

2. Let $\hat{\mathcal{S}}=\hat{\mathcal{R}}\bigcup\hat{\mathcal{L}}$,  $|\hat{\mathcal{S}}^{\rm c}|$ be the cardinality of the set $\hat{\mathcal{S}}^{\rm c}$, $\hat{{\bf G}}_{11}=[{\bf Z}^T]_{\hat{\mathcal{S}}^{\rm c}}^T[{\bf Z}^T]_{\hat{\mathcal{S}}^{\rm c}}$, $\hat{{\bf G}}_{12}=[{\bf X}^T]_{\hat{\mathcal{S}}^{\rm c}}^T[{\bf X}^T]_{\hat{\mathcal{S}}}$ and $\hat{\bf y}={\bf y}_{\hat{\mathcal{S}}^{\rm c}}-C\hat{\bf G}_{12}[\theta^*(C)]_{\hat{\mathcal{S}}}$. Then,  $[\theta^*(C)]_{\hat{\mathcal{S}}^{\rm c}}$ can be computed by solving the following problem:
\begin{align}\label{prob:dual_reduced}
\min_{\hat{\theta}\in\Re^{|\hat{\mathcal{S}}^{\rm c}|}}\frac{C}{2}\hat{\theta}^T\hat{{\bf G}}_{11}\hat{\theta}-\hat{{\bf y}}^T\hat{\theta},\,\,{\rm s.t. }\,\,\hat{\theta}\in[\alpha,\beta]^{|\hat{\mathcal{S}}^{\rm c}|}.
\end{align}
\end{lemma}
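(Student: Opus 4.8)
The plan is to treat the two parts separately, using the KKT characterization (\ref{eqn:KKT_general}) for the first and a substitution-plus-restriction argument on the dual (\ref{prob:general_dual1}) for the second. Part 1 should be essentially immediate: the conditions (\ref{eqn:KKT_general}) pin down $[\theta^*(C)]_i=\alpha$ exactly when $-\langle{\bf w}^*(C),a_i{\bf x}_i\rangle>b_iy_i$, i.e.\ when $i\in\mathcal{R}$, and $[\theta^*(C)]_i=\beta$ exactly when $i\in\mathcal{L}$. Since by hypothesis $\hat{\mathcal{R}}\subseteq\mathcal{R}$ and $\hat{\mathcal{L}}\subseteq\mathcal{L}$, reading off these identities coordinatewise gives $[\theta^*(C)]_{\hat{\mathcal{R}}}=\alpha$ and $[\theta^*(C)]_{\hat{\mathcal{L}}}=\beta$, which is the claim.

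For Part 2, I would first record the key observation that justifies ``freezing'' the known coordinates: since $\theta^*(C)$ is the global minimizer of the box-constrained quadratic (\ref{prob:general_dual1}), its free block $[\theta^*(C)]_{\hat{\mathcal{S}}^{\rm c}}$ must also minimize the objective over the slice on which $[\theta]_{\hat{\mathcal{S}}}$ is fixed to the values found in Part 1. Indeed, any $\hat{\theta}\in[\alpha,\beta]^{|\hat{\mathcal{S}}^{\rm c}|}$ can be completed with $[\theta^*(C)]_{\hat{\mathcal{S}}}$ to a full feasible point of (\ref{prob:general_dual1}), whose objective value is therefore at least that of $\theta^*(C)$ itself; hence $[\theta^*(C)]_{\hat{\mathcal{S}}^{\rm c}}$ solves the restricted problem, and it suffices to write that restricted problem out explicitly.

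It then remains to compute the restricted objective. Writing ${\bf G}={\bf Z}{\bf Z}^T$ so that $\|{\bf Z}^T\theta\|^2=\theta^T{\bf G}\theta$, I would split $\theta$ into its $\hat{\mathcal{S}}^{\rm c}$- and $\hat{\mathcal{S}}$-blocks and expand the quadratic into the three pieces $\hat{\theta}^T\hat{\bf G}_{11}\hat{\theta}$, $2\hat{\theta}^T\hat{\bf G}_{12}[\theta^*(C)]_{\hat{\mathcal{S}}}$, and the constant $[\theta^*(C)]_{\hat{\mathcal{S}}}^T\hat{\bf G}_{22}[\theta^*(C)]_{\hat{\mathcal{S}}}$ (with $\hat{\bf G}_{22}=[{\bf Z}^T]_{\hat{\mathcal{S}}}^T[{\bf Z}^T]_{\hat{\mathcal{S}}}$), where $\hat{\bf G}_{11}$ and $\hat{\bf G}_{12}$ are exactly the diagonal and cross blocks of ${\bf G}$ named in the statement. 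Likewise $\langle\bar{\bf y},\theta\rangle$ splits into a term linear in $\hat{\theta}$ and a constant. Collecting the $\hat{\theta}$-dependent terms, the linear coefficient becomes $-\bigl([\bar{\bf y}]_{\hat{\mathcal{S}}^{\rm c}}-C\hat{\bf G}_{12}[\theta^*(C)]_{\hat{\mathcal{S}}}\bigr)^T=-\hat{\bf y}^T$, and dropping the additive constant (which does not affect the minimizer) leaves precisely (\ref{prob:dual_reduced}) under the constraint $\hat{\theta}\in[\alpha,\beta]^{|\hat{\mathcal{S}}^{\rm c}|}$. The only part requiring care is the bookkeeping of this block expansion and matching it to the definitions of $\hat{\bf G}_{11}$, $\hat{\bf G}_{12}$, and $\hat{\bf y}$; the conceptual content --- that fixing coordinates of a global minimizer leaves the remaining coordinates optimal for the frozen subproblem --- is routine, so I do not anticipate a genuine obstacle beyond this algebra.
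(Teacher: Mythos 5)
Your proposal is correct and follows essentially the same route as the paper: Part 1 is read off from the KKT conditions, and Part 2 is the same block decomposition of ${\bf G}={\bf Z}{\bf Z}^T$ into $\hat{\bf G}_{11},\hat{\bf G}_{12},\hat{\bf G}_{22}$ with the $\hat{\mathcal{S}}$-terms absorbed into $\hat{\bf y}$ and a constant. The only difference is that you explicitly justify why the free block of the global minimizer solves the frozen subproblem (by completing any feasible $\hat{\theta}$ to a full feasible point), a step the paper leaves implicit; this is a welcome addition but not a different argument.
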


Clearly, if $|\hat{\mathcal{S}}|$ is large compared to $|\mathcal{I}|=l$, the computational cost for solving problem (\ref{prob:dual_reduced}) can be much cheaper than solving the full problem (\ref{prob:general_dual1}). To determine the membership of the data instances, \eqref{eqn:primal_dual} and (\ref{eqn:KKT_general}) imply that
\begin{equation}\tag{R1}\label{rule1}
\hspace{-5mm}C\langle{\bf Z}^T\theta^*(C),a_i{\bf x}_i\rangle>b_iy_i\Rightarrow[\theta^*(C)]_i=\alpha\Leftrightarrow i\in\mathcal{R};
\end{equation}
\begin{equation}\tag{R2}\label{rule2}
\hspace{-5mm}C\langle{\bf Z}^T\theta^*(C),a_i{\bf x}_i\rangle<b_iy_i\Rightarrow[\theta^*(C)]_i=\beta\Leftrightarrow i\in\mathcal{L}.
\end{equation}
However, (\ref{rule1}) and (\ref{rule2}) are generally not applicable since $\theta^*(C)$ is unknown. To overcome this difficulty, we can estimate a region $\Theta$ such that $\theta^*(C)\in\Theta$. As a result, we obtain the relaxed version of (\ref{rule1}) and (\ref{rule2}):
\begin{equation}\tag{R1$'$}\label{rule1'}
\hspace{-6mm}\min_{\theta\in\Theta}C\langle{\bf Z}^T\theta,a_i{\bf x}_i\rangle>b_iy_i\Rightarrow[\theta^*(C)]_i=\alpha\Leftrightarrow i\in\mathcal{R};
\end{equation}
\begin{equation}\tag{R2$'$}\label{rule2'}
\hspace{-6mm}\max_{\theta\in\Theta}C\langle{\bf Z}^T\theta,a_i{\bf x}_i\rangle<b_iy_i\Rightarrow[\theta^*(C)]_i=\beta\Leftrightarrow i\in\mathcal{L}.
\end{equation}
Notice that, (\ref{rule1'}) and (\ref{rule2'}) serve as the foundation of the proposed DVI rules and the method in \cite{Ogawa2013}. In the subsequent sections, we first estimate the region $\Theta$ which includes $\theta^*(C)$, and then derive the screening rules based on (\ref{rule1'}) and (\ref{rule2'}).

\noindent{\bf Method to solve problem (\ref{prob:dual_reduced})} 

It is known that, problem (\ref{prob:dual_reduced}) can be efficiently solved by the dual coordinate descent method \cite{Hsieh2008}. More precisely, the optimization procedure starts from an initial point $\hat{\theta}^0\in\Re^{|\hat{\mathcal{S}}^{\rm c}|}$ and generates a sequence of points $\{\hat{\theta}^k\}_{k=0}^{\infty}$. The process from $\hat{\theta}^k$ to $\hat{\theta}^{k+1}$ is referred to as an outer iteration. In each outer iteration, we update the components of $\hat{\theta}^k$ one at a time and thus get a sequence of points $\hat{\theta}^{k,i}\in\Re^{|\hat{\mathcal{S}}^{\rm c}|}$, $i=1,\ldots,|\hat{\mathcal{S}}^{\rm c}|$. Suppose we are at the $k^{th}$ outer iteration. To get $\hat{\theta}^{k,i}$ from $\hat{\theta}^{k,i-1}$, we need to solve the following optimization problem:

\begin{align}\label{prob:dual_CD}
\min_{t}\,\,\,&\frac{C}{2}(\hat{\theta}^{k,i-1}+t{\bf e}_{i})^T\hat{\bf G}_{11}(\hat{\theta}^{k,i-1}+t{\bf e}_{i})-\hat{\bf y}^T(\hat{\theta}^{k,i-1}+t{\bf e}_{i})\\ \nonumber
{\rm s.t.}\,\,\,&[\hat{\theta}^{k,i-1}]_{i}+t\in[\alpha,\beta],\,\,i=1,\ldots,l,
\end{align}

where ${\bf e}_{i}=(0,\ldots,1,\ldots,0)^T$.
Clearly, problem (\ref{prob:dual_CD}) is equivalent to the following 1D optimization problem:
\begin{align}\label{prob:dual_CD_1d}
\min_{t}\,\,\,&\frac{C}{2}[\hat{\bf G}_{11}]_{i,i}t^2+(C{\bf e}_{i}^T\hat{\bf G}_{11}\hat{\theta}^{k,i-1}-[\hat{\bf y}]_{i})t\\ \nonumber
{\rm s.t.}\,\,\,&[\hat{\theta}^{k,i-1}]_{i}+t\in[\alpha,\beta],
\end{align}
which admits a closed form solution $t^*$.
Once $t^*$ is available, we can set $\hat{\theta}^{k,i}=\hat{\theta}^{k,i-1}+t^*{\bf e}_i$. For more details, please refer to \cite{Hsieh2008}.

In Section \ref{section:estimate_dual}, we first give an accurate estimation of the set $\Theta$ which includes $\theta^*(C)$ as in (\ref{rule1'}) and (\ref{rule2'}) via the variational inequalities. Then in Section \ref{section:DVI_general}, we present the novel DVI rules for problem (\ref{prob:general}) in detail.

\section{Estimation of the Dual Optimal Solution}\label{section:estimate_dual}

For problem (\ref{prob:general_dual1}), suppose we are given two parameter values $0<C_0<C$ and $\theta^*(C_0)$ is known. Then, Theorem \ref{thm:bound_svm} shows that $\theta^*(C)$ can be effectively bounded in terms of $\theta^*(C_0)$. The main technique we use is the so called variational inequalities. For self-completeness, we cite the definition of variational inequalities as follows.

\begin{theorem}\label{thm:vi}
\cite{Guler2010} Let $A\subseteq\Re^n$ be a convex set, and let $h$ be a G$\hat{a}$teaux differentiable function on an open set containing $A$. If ${\bf x}^*$ is a local minimizer of $h$ on $A$, then
\begin{align}
\langle\nabla h({\bf x}^*),{\bf x}-{\bf x}^*\rangle\geq0,\hspace{2mm}\forall {\bf x}\in A.
\end{align}
\end{theorem}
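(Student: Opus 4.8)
The plan is to reduce the statement to a one-dimensional argument along feasible line segments, where local optimality forces a nonnegative difference quotient that, in the limit, is exactly the claimed inner product. First I would fix an arbitrary point ${\bf x}\in A$ and consider the segment ${\bf x}^*+t({\bf x}-{\bf x}^*)$ for $t\in[0,1]$. Because $A$ is convex and both ${\bf x}^*$ and ${\bf x}$ lie in $A$, every such point is feasible; moreover, since $h$ is G$\hat{a}$teaux differentiable on an open set containing $A$, the scalar function $t\mapsto h({\bf x}^*+t({\bf x}-{\bf x}^*))$ is well defined for $t$ in a neighborhood of $0$.

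Next I would invoke the hypothesis that ${\bf x}^*$ is a local minimizer of $h$ on $A$: there exists $\epsilon>0$ such that $h({\bf x}^*)\leq h({\bf y})$ for every ${\bf y}\in A$ with $\|{\bf y}-{\bf x}^*\|<\epsilon$. Since $\|{\bf x}^*+t({\bf x}-{\bf x}^*)-{\bf x}^*\|=t\|{\bf x}-{\bf x}^*\|\to0$ as $t\to0^+$, for all sufficiently small $t>0$ the segment point is simultaneously feasible and inside the $\epsilon$-ball, so $h({\bf x}^*+t({\bf x}-{\bf x}^*))\geq h({\bf x}^*)$. Rearranging and dividing by $t>0$ yields the nonnegative difference quotient
\begin{align*}
\frac{h({\bf x}^*+t({\bf x}-{\bf x}^*))-h({\bf x}^*)}{t}\geq0.
\end{align*}
Letting $t\to0^+$ and applying the definition of G$\hat{a}$teaux differentiability, the left-hand side converges to the directional derivative of $h$ at ${\bf x}^*$ in the direction ${\bf x}-{\bf x}^*$, namely $\langle\nabla h({\bf x}^*),{\bf x}-{\bf x}^*\rangle$. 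Hence $\langle\nabla h({\bf x}^*),{\bf x}-{\bf x}^*\rangle\geq0$, and since ${\bf x}\in A$ was arbitrary, this establishes the claim.

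I do not expect a genuine obstacle here, as the result is the classical first-order necessary condition for a constrained minimum; the points requiring care are bookkeeping rather than conceptual. Specifically, one must choose $t$ small enough that both feasibility (from convexity of $A$) and membership in the $\epsilon$-neighborhood (from local minimality) hold at once, which is immediate from the continuity of $t\mapsto t\|{\bf x}-{\bf x}^*\|$. One should also note that although local minimality only supplies information as $t\to0^+$, G$\hat{a}$teaux differentiability delivers a genuine two-sided limit, so the one-sided difference quotient still converges to $\langle\nabla h({\bf x}^*),{\bf x}-{\bf x}^*\rangle$; no convexity of $h$, no smoothness beyond G$\hat{a}$teaux differentiability, and no compactness of $A$ is required.
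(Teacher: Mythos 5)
Your argument is correct: it is the standard proof of the first-order necessary optimality condition, using convexity of $A$ to keep the segment ${\bf x}^*+t({\bf x}-{\bf x}^*)$ feasible, local minimality to get a nonnegative one-sided difference quotient for small $t>0$, and G\^ateaux differentiability to pass to the limit $\langle\nabla h({\bf x}^*),{\bf x}-{\bf x}^*\rangle\geq0$. Note that the paper itself does not prove this statement; it cites it directly from G\"uler's text as a known result, so there is no in-paper proof to compare against --- your write-up simply supplies the classical argument that the reference would contain, and it does so without gaps.
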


Via the variational inequalities, the following theorem shows that $\theta^*(C)$ can estimated in terms of $\theta^*(C_0)$.

\begin{theorem}\label{thm:bound_svm}
For problem (\ref{prob:general_dual1}), let $C>C_0>0$. Then
\begin{align*}
\|{\bf Z}^T\theta^*(C)-\tfrac{C_0+C}{2C}{\bf Z}^T\theta^*(C_0)\|\leq\tfrac{C-C_0}{2C}\|{\bf Z}^T\theta^*(C_0)\|.
\end{align*}
\end{theorem}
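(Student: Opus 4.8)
The plan is to apply the variational inequality of Theorem \ref{thm:vi} to the dual objective at both parameters $C$ and $C_0$, cross-substitute the two optima as feasible test points, add the resulting inequalities, and finally recognize the combination as a completed square equivalent to the claimed ball containment. First I would name the dual objective $g_C(\theta)=\frac{C}{2}\|{\bf Z}^T\theta\|^2-\langle\bar{\bf y},\theta\rangle$, so that by \eqref{prob:general_dual1} the point $\theta^*(C)$ is the minimizer of $g_C$ over the convex set $I_\varphi^l=[\alpha,\beta]^l$. Since $g_C$ is a smooth quadratic, hence G\^ateaux differentiable on all of $\Re^l$, with $\nabla g_C(\theta)=C{\bf Z}{\bf Z}^T\theta-\bar{\bf y}$, Theorem \ref{thm:vi} yields
\[
\langle C{\bf Z}{\bf Z}^T\theta^*(C)-\bar{\bf y},\,\theta-\theta^*(C)\rangle\geq0,\quad\forall\theta\in I_\varphi^l,
\]
together with the analogous inequality obtained at $C_0$ for $\theta^*(C_0)$.

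The key move is to use each optimum as the test point in the other's inequality: substitute $\theta=\theta^*(C_0)$ into the inequality for $\theta^*(C)$, and $\theta=\theta^*(C)$ into the inequality for $\theta^*(C_0)$. Both substitutions are legitimate because $\theta^*(C),\theta^*(C_0)\in I_\varphi^l$. Adding the two resulting inequalities, the shared linear term $\bar{\bf y}$ cancels, leaving
\[
\langle C{\bf Z}{\bf Z}^T\theta^*(C)-C_0{\bf Z}{\bf Z}^T\theta^*(C_0),\,\theta^*(C)-\theta^*(C_0)\rangle\leq0.
\]
Writing $a={\bf Z}^T\theta^*(C)$ and $b={\bf Z}^T\theta^*(C_0)$ and using $\langle{\bf Z}{\bf Z}^Tp,q\rangle=\langle{\bf Z}^Tp,{\bf Z}^Tq\rangle$, this is precisely $\langle Ca-C_0b,\,a-b\rangle\leq0$.

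It then remains to complete the square. Expanding the last inequality gives $C\|a\|^2-(C+C_0)\langle a,b\rangle+C_0\|b\|^2\leq0$; dividing by $C$ and adding and subtracting $\left(\tfrac{C+C_0}{2C}\right)^2\|b\|^2$ rewrites the left-hand side as $\|a-\tfrac{C+C_0}{2C}b\|^2-\big[\left(\tfrac{C+C_0}{2C}\right)^2-\tfrac{C_0}{C}\big]\|b\|^2$. Using the identity $\left(\tfrac{C+C_0}{2C}\right)^2-\tfrac{C_0}{C}=\left(\tfrac{C-C_0}{2C}\right)^2$ and taking square roots yields $\|a-\tfrac{C+C_0}{2C}b\|\leq\tfrac{C-C_0}{2C}\|b\|$, which is exactly the asserted bound.

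The only genuinely substantive step is the cross-substitution that, after the $\bar{\bf y}$ terms cancel, produces the monotonicity-type inequality $\langle Ca-C_0b,a-b\rangle\leq0$; everything downstream is the routine ``sum of two first-order optimality conditions equals a completed square'' computation. The main subtlety to keep in mind is that the two quadratic terms carry \emph{different} coefficients $C$ and $C_0$ rather than a common constant, which is exactly why the ball is centered at the weighted point $\tfrac{C+C_0}{2C}{\bf Z}^T\theta^*(C_0)$ rather than at ${\bf Z}^T\theta^*(C_0)$ itself; I would double-check that the algebraic identity relating the two coefficients reproduces the stated radius before concluding.
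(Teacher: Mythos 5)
Your proof is correct and follows essentially the same route as the paper: apply the variational inequality at both $\theta^*(C)$ and $\theta^*(C_0)$, cross-substitute the two optima as test points, add so that $\bar{\bf y}$ cancels, and complete the square. The only difference is that you write out the completing-the-square algebra explicitly, which the paper leaves as "the statement follows."
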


\begin{proof}
Let $g(\theta)$ be the objective function of problem (\ref{prob:general_dual1}). The variational inequality implies that
\begin{equation}\label{ineqn:vi1}
\langle\nabla g(\theta^*(C_0)),\theta-\theta^*(C_0)\rangle\geq0,\,\,\forall \theta\in[\alpha,\beta]^l;
\end{equation}
\begin{equation}\label{ineqn:vi2}
\langle\nabla g(\theta^*(C)),\theta-\theta^*(C)\rangle\geq0,\,\,\forall \theta\in[\alpha,\beta]^l.
\end{equation}
Notice that $\nabla g(\theta) = C{\bf Z}{\bf Z}^T\theta-\bar{\bf y}$, and $\theta^*(C_0)\in[\alpha,\beta]^l$ and $\theta^*(C)\in[\alpha,\beta]^l$. Plugging $\nabla g(\theta^*(C))$ and $\nabla g(\theta^*(C_0))$ into (\ref{ineqn:vi1}) and (\ref{ineqn:vi2}) leads to

\begin{equation}\label{ineqn:vi1'}
\langle C_0{\bf Z}{\bf Z}^T\theta^*(C_0)-\bar{\bf y},\theta^*(C)-\theta^*(C_0)\rangle\geq0;
\end{equation}
\begin{equation}\label{ineqn:vi2'}
\langle C{\bf Z}{\bf Z}^T\theta^*(C)-\bar{\bf y},\theta^*(C_0)-\theta^*(C)\rangle\geq0.
\end{equation}
We can see that the inequality in (\ref{ineqn:vi2'}) is equivalent to
\begin{equation}\label{ineqn:vi2''}
\langle \bar{\bf y}-C{\bf Z}{\bf Z}^T\theta^*(C),\theta^*(C)-\theta^*(C_0)\rangle\geq0.
\end{equation}
Then the statement follows by adding the inequalities in (\ref{ineqn:vi1'}) and (\ref{ineqn:vi2''}) together.
\end{proof}

\section{The Proposed DVI Rules}\label{section:DVI_general}

Given $C>C_0>0$ and $\theta^*(C_0)$, we can estimate $\theta^*(C)$ via Theorem \ref{thm:bound_svm}. Combining (\ref{rule1'}), (\ref{rule2'}) and Theorem (\ref{thm:bound_svm}), we develop the basic screening rule for problem (\ref{prob:general}) as summarized in the following theorem:

\begin{theorem}\label{thm:basic_dvi}
{\rm({\bf DVI})} For problem (\ref{prob:general_dual1}), suppose we are given $\theta^*(C_0)$. Then, for any $C>C_0$, we have $[\theta^*(C)]_i=\alpha$, i.e., $i\in\mathcal{R}$, if the following holds
\begin{align*}
&\tfrac{C+C_0}{2}\langle{\bf Z}^T\theta^*(C_0),a_i{\bf x}_i\rangle-\tfrac{C-C_0}{2}\|{\bf Z}^T\theta^*(C_0)\|\|a_i{\bf x}_i\|>b_iy_i.
\end{align*}
Similarly, we have $[\theta^*(C)]_i=\beta$, i.e., $i\in\mathcal{L}$, if
\begin{align*}
&\tfrac{C+C_0}{2}\langle{\bf Z}^T\theta^*(C_0),a_i{\bf x}_i\rangle+\tfrac{C-C_0}{2}\|{\bf Z}^T\theta^*(C_0)\|\|a_i{\bf x}_i\|<b_iy_i.
\end{align*}
\end{theorem}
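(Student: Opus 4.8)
The plan is to feed the ball-shaped estimate from Theorem~\ref{thm:bound_svm} into the relaxed rules (\ref{rule1'}) and (\ref{rule2'}). The first thing I would observe is that Theorem~\ref{thm:bound_svm} says precisely that the unknown vector ${\bf u} := {\bf Z}^T\theta^*(C)$ lies in the Euclidean ball
\[
B = \{{\bf u}\in\Re^n : \|{\bf u} - {\bf o}\|\le r\},\quad {\bf o} = \tfrac{C_0+C}{2C}{\bf Z}^T\theta^*(C_0),\quad r = \tfrac{C-C_0}{2C}\|{\bf Z}^T\theta^*(C_0)\|.
\]
Since the quantity $C\langle{\bf Z}^T\theta^*(C),a_i{\bf x}_i\rangle = C\langle{\bf u},a_i{\bf x}_i\rangle$ appearing in (\ref{rule1'}) and (\ref{rule2'}) depends on $\theta^*(C)$ only through ${\bf u}$, I would replace the optimization over the unknown region $\Theta$ by the optimization of the linear functional ${\bf u}\mapsto C\langle{\bf u},a_i{\bf x}_i\rangle$ over $B$; because ${\bf u}\in B$, the minimum over $B$ is a valid lower bound and the maximum over $B$ a valid upper bound for the true (unknown) value.

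Next I would compute these two extrema. Writing ${\bf u} = {\bf o} + {\bf d}$ with $\|{\bf d}\|\le r$, the functional splits as $C\langle{\bf o},a_i{\bf x}_i\rangle + C\langle{\bf d},a_i{\bf x}_i\rangle$, and the Cauchy--Schwarz inequality shows that the second term ranges exactly over $[-Cr\|a_i{\bf x}_i\|,\,Cr\|a_i{\bf x}_i\|]$ as ${\bf d}$ varies over the ball of radius $r$, the endpoints being attained at ${\bf d} = \pm r\,a_i{\bf x}_i/\|a_i{\bf x}_i\|$. Hence
\[
\min_{{\bf u}\in B} C\langle{\bf u},a_i{\bf x}_i\rangle = C\langle{\bf o},a_i{\bf x}_i\rangle - Cr\|a_i{\bf x}_i\|,\qquad \max_{{\bf u}\in B} C\langle{\bf u},a_i{\bf x}_i\rangle = C\langle{\bf o},a_i{\bf x}_i\rangle + Cr\|a_i{\bf x}_i\|.
\]

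I would then substitute the explicit values of ${\bf o}$ and $r$, noting that the factors of $C$ cancel: $C\langle{\bf o},a_i{\bf x}_i\rangle = \tfrac{C+C_0}{2}\langle{\bf Z}^T\theta^*(C_0),a_i{\bf x}_i\rangle$ and $Cr\|a_i{\bf x}_i\| = \tfrac{C-C_0}{2}\|{\bf Z}^T\theta^*(C_0)\|\,\|a_i{\bf x}_i\|$. Plugging the resulting lower bound into the premise of (\ref{rule1'}) reproduces exactly the first displayed inequality of the theorem; since this lower bound does not exceed $C\langle{\bf Z}^T\theta^*(C),a_i{\bf x}_i\rangle$, the hypothesis forces $C\langle{\bf Z}^T\theta^*(C),a_i{\bf x}_i\rangle > b_iy_i$, which by (\ref{rule1}) gives $[\theta^*(C)]_i = \alpha$ and $i\in\mathcal{R}$. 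The completely symmetric computation with the upper bound together with (\ref{rule2'}) and (\ref{rule2}) yields the second inequality and the conclusion $[\theta^*(C)]_i=\beta$, $i\in\mathcal{L}$.

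I do not expect a real obstacle, because the substantive work (the variational-inequality bound) is already packaged in Theorem~\ref{thm:bound_svm}; what remains is the elementary fact that a linear functional over a ball is extremized by Cauchy--Schwarz. The only point deserving care is the direction of the implication: one must check that optimizing over the (possibly larger) ball $B$ rather than over $\Theta$ itself still yields a sound, i.e.\ safe, screening rule, which it does precisely because ${\bf Z}^T\theta^*(C)\in B$ guarantees the true inner product lies between the computed minimum and maximum.
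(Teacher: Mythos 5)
Your proposal is correct and is essentially the paper's own argument: the paper likewise decomposes $C\langle{\bf Z}^T\theta^*(C),a_i{\bf x}_i\rangle$ about the center $\tfrac{C_0+C}{2C}{\bf Z}^T\theta^*(C_0)$ and applies Cauchy--Schwarz together with the radius bound from Theorem~\ref{thm:bound_svm}, which is exactly your minimization of the linear functional over the ball. The only difference is presentational (you phrase it geometrically as extremizing over $B$, the paper writes the same chain of inequalities directly), so there is nothing to add.
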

\begin{proof}
We will prove the first half of the statement. The second half can be proved analogously.
To show $[\theta^*(C)]_i=\alpha$, i.e., $i\in\mathcal{R}$, (\ref{rule1}) implies that we only need to show $C\langle{\bf Z}^T\theta^*(C),a_i{\bf x}_i\rangle>b_iy_i$. Thus, we can see that
\begin{align*}
C\langle{\bf Z}^T\theta^*(C),a_i{\bf x}_i\rangle=&C\left\langle{\bf Z}^T\theta^*(C)-\tfrac{C_0+C}{2C}{\bf Z}^T\theta^*(C_0),a_i{\bf x}_i\right\rangle + C\left\langle\tfrac{C_0+C}{2C}{\bf Z}^T\theta^*(C_0),a_i{\bf x}_i\right\rangle\\
\geq&\tfrac{C_0+C}{2}\langle{\bf Z}^T\theta^*(C_0),a_i{\bf x}_i\rangle - C\left\|{\bf Z}^T\theta^*(C)-\tfrac{C_0+C}{2C}{\bf X}^T\theta^*(C_0)\right\|\|a_i{\bf x}_i\|\\
\geq&\tfrac{C_0+C}{2}\langle{\bf Z}^T\theta^*(C_0),a_i{\bf x}_i\rangle-\tfrac{C-C_0}{2}\|{\bf Z}^T\theta^*(C_0)\|\|a_i{\bf x}\|\\
>&b_iy_i.
\end{align*}

Note that, the second inequality is due to Theorem \ref{thm:bound_svm}, and the last line is due to the statement. This completes the proof.
\end{proof}

In real applications, the optimal parameter value of $C$ is unknown and we need to estimate it. Commonly used model selection strategies such as cross validation and stability selection need to solve the optimization problems over a grid of turning parameters $0<C_1<C_2<\ldots<C_{\mathcal{K}}$ to determine an appropriate value for $C$. This procedure is usually very time consuming, especially for large scale problems. To this end, we propose a sequential version of the proposed DVI below.

\begin{corollary}\label{corollary:sdvi_g}
{\rm ({\bf DVI$_s^*$})} For problem (\ref{prob:general_dual1}), suppose we are given a sequence of parameters $0<C_1<C_2<\ldots<C_{\mathcal{K}}$. Assume $\theta^*(C_k)$ is known for an arbitrary integer $1\leq k<\mathcal{K}$. Then, for $C_{k+1}$, we have $[\theta^*(C_{k+1})]_i=\alpha$, i.e., $i\in\mathcal{R}$, if the following holds
\begin{align*}
&\tfrac{C_{k+1}+C_k}{2}\langle{\bf Z}^T\theta^*(C_k),a_i{\bf x}_i\rangle-\tfrac{C_{k+1}-C_k}{2}\|{\bf Z}^T\theta^*(C_k)\|\|a_i{\bf x}_i\|>b_iy_i.
\end{align*}
Similarly, we have $[\theta^*(C_{k+1})]_i=\beta$, i.e., $i\in\mathcal{L}$, if
\begin{align*}
&\tfrac{C_{k+1}+C_k}{2}\langle{\bf Z}^T\theta^*(C_k),a_i{\bf x}_i\rangle+\tfrac{C_{k+1}-C_k}{2}\|{\bf Z}^T\theta^*(C_k)\|\|a_i{\bf x}_i\|<b_iy_i.
\end{align*}
\end{corollary}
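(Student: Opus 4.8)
The plan is to recognize that Corollary \ref{corollary:sdvi_g} is a direct specialization of Theorem \ref{thm:basic_dvi} rather than a result requiring an independent argument. Theorem \ref{thm:basic_dvi} already delivers the two screening implications for an arbitrary pair of parameters $C>C_0>0$, provided only that $\theta^*(C_0)$ is available. Hence the entire task reduces to instantiating that theorem on consecutive members of the given parameter grid.

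Concretely, I would fix an integer $1\le k<\mathcal{K}$ and set $C_0:=C_k$ and $C:=C_{k+1}$ in the statement of Theorem \ref{thm:basic_dvi}. The only hypotheses of that theorem are that $C>C_0>0$ and that $\theta^*(C_0)$ is known, and both are guaranteed here: the grid ordering $0<C_1<C_2<\ldots<C_{\mathcal{K}}$ forces $C_{k+1}>C_k>0$, while the corollary explicitly assumes $\theta^*(C_k)$ is known. With this substitution, the first conclusion of Theorem \ref{thm:basic_dvi} is verbatim the first displayed inequality of the corollary, yielding $[\theta^*(C_{k+1})]_i=\alpha$ (equivalently $i\in\mathcal{R}$), and the second conclusion reproduces the second inequality, yielding $[\theta^*(C_{k+1})]_i=\beta$ (equivalently $i\in\mathcal{L}$). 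No new estimation is needed, because the bound from Theorem \ref{thm:bound_svm} on which Theorem \ref{thm:basic_dvi} rests holds for every admissible parameter pair.

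Since the derivation is a pure substitution, I expect essentially no technical obstacle; the only point worth emphasizing is conceptual rather than computational. The value of the corollary is algorithmic: along the solution path it lets the already-computed dual solution $\theta^*(C_k)$ at one grid point drive the screening at the next grid point, so that at each step the reduced problem (\ref{prob:dual_reduced}) can be solved on a smaller index set. The safety of these sequential rules is inherited directly from the safety of Theorem \ref{thm:basic_dvi}, which in turn follows from the exact enclosure of $\theta^*(C_{k+1})$ established by the variational inequality argument.
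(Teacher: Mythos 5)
Your proposal is correct and matches the paper's (implicit) reasoning exactly: the corollary is stated without proof precisely because it is the verbatim instantiation of Theorem \ref{thm:basic_dvi} with $C_0=C_k$ and $C=C_{k+1}$, whose hypotheses are guaranteed by the grid ordering and the assumed availability of $\theta^*(C_k)$. Nothing further is needed.
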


The main computational cost of {\rm \bf DVI$_s^*$} is due to the evaluation of $\langle{\bf Z}^T\theta^*(C_k),a_i{\bf x}_i\rangle$, $\|{\bf Z}^T\theta^*(C_k)\|$ and $\|a_i{\bf x}_i\|$. Let ${\bf G}={\bf Z}{\bf Z}^T$. It is easy to see that
\begin{align*}
&\langle{\bf Z}^T\theta^*(C_k),a_i{\bf x}_i\rangle = {\bf g}_i^T\theta^*(C_k),\\
&\|{\bf Z}^T\theta^*(C_k)\|^2=\theta^*(C_k)^T{\bf G}\theta^*(C_k),\\
&\|\bar{\bf x}_i\|^2=[{\bf G}]_{i,i}.
\end{align*}
where ${\bf g}_i$ is the $i^{th}$ column of ${\bf G}$. Since ${\bf G}$ is independent of $C_k$, it can be computed only once and thus the computational cost of {\rm \bf DVI$_s^*$} reduces to $O(l^2)$ to scan the entire data set. Indeed, by noting \eqref{eqn:primal_dual}, we can reconstruct DVI rules without the explicit computation of ${\bf G}$.

\begin{corollary}\label{corollary:sdvi_p_g}
{\rm ({\bf DVI$_s$})} For problem (\ref{prob:general}), suppose we are given a sequence of parameters $0<C_1<C_2<\ldots<C_{\mathcal{K}}$. Assume ${\bf w}^*(C_k)$ is known for an arbitrary integer $1\leq k<\mathcal{K}$. Then, for $C_{k+1}$, we have $[\theta^*(C_{k+1})]_i=\alpha$, i.e., $i\in\mathcal{R}$, if the following holds
\begin{align*}
&-\tfrac{C_k+C_{k+1}}{2C_k}\langle{\bf w}^*(C_k),a_i{\bf x}_i\rangle-\tfrac{C_{k+1}-C_k}{2C_k}\|{\bf w}^*(C_k)\|\|a_i{\bf x}_i\|>b_iy_i.
\end{align*}
Similarly, we have $[\theta^*(C_{k+1})]_i=\beta$, i.e., $i\in\mathcal{L}$, if
\begin{align*}
&-\tfrac{C_k+C_{k+1}}{2C_k}\langle{\bf w}^*(C_k),a_i{\bf x}_i\rangle+\tfrac{C_{k+1}-C_k}{2C_k}\|{\bf w}^*(C_k)\|\|a_i{\bf x}_i\|<b_iy_i.
\end{align*}
\end{corollary}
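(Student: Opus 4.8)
The plan is to derive DVI$_s$ directly from the already-established DVI$_s^*$ (Corollary \ref{corollary:sdvi_g}) by eliminating the dual optimal solution $\theta^*(C_k)$ in favor of the primal optimal solution ${\bf w}^*(C_k)$ through the primal--dual relationship (\ref{eqn:primal_dual}). No new optimization-theoretic machinery is required: the statement is nothing more than a change of variables applied to conditions that Corollary \ref{corollary:sdvi_g} has already certified as safe.

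First I would record the governing identity. Applying (\ref{eqn:primal_dual}) at the parameter value $C_k$ gives ${\bf w}^*(C_k)=-C_k{\bf Z}^T\theta^*(C_k)$, and since $C_k>0$ this can be solved for
\begin{align*}
{\bf Z}^T\theta^*(C_k)=-\tfrac{1}{C_k}{\bf w}^*(C_k).
\end{align*}
Next I would substitute this identity term by term into each condition of Corollary \ref{corollary:sdvi_g}. Linearity of the inner product gives $\langle{\bf Z}^T\theta^*(C_k),a_i{\bf x}_i\rangle=-\tfrac{1}{C_k}\langle{\bf w}^*(C_k),a_i{\bf x}_i\rangle$, while positive homogeneity of the norm together with $C_k>0$ gives $\|{\bf Z}^T\theta^*(C_k)\|=\tfrac{1}{C_k}\|{\bf w}^*(C_k)\|$. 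Plugging both into the $\mathcal{R}$-condition of Corollary \ref{corollary:sdvi_g} and absorbing the factor $\tfrac{1}{C_k}$ into the denominators of the coefficients reproduces the $\mathcal{R}$-condition of DVI$_s$ verbatim (using $C_{k+1}+C_k=C_k+C_{k+1}$); the $\mathcal{L}$-condition follows by the identical substitution.

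The only point requiring care---the closest thing to an obstacle in what is otherwise a one-line argument---is the sign bookkeeping. Because the substitution replaces ${\bf Z}^T\theta^*(C_k)$ by the \emph{equal} quantity $-\tfrac{1}{C_k}{\bf w}^*(C_k)$, it is an equality-preserving rewrite of the left-hand side, so the strict inequality against $b_iy_i$ is untouched and no direction is reversed. Positivity of $C_k$ enters in exactly two places: to solve (\ref{eqn:primal_dual}) for ${\bf Z}^T\theta^*(C_k)$ in the first place, and to pull the constant out of the norm as $\|-\tfrac{1}{C_k}{\bf w}^*(C_k)\|=\tfrac{1}{C_k}\|{\bf w}^*(C_k)\|$ since the magnitude $|{-1/C_k}|=1/C_k$. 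The negative sign therefore survives only in the inner-product term, which is precisely why the coefficients of the $\langle{\bf w}^*(C_k),a_i{\bf x}_i\rangle$ terms in DVI$_s$ carry a leading minus, whereas the norm term does not. Finally, the membership equivalences $[\theta^*(C_{k+1})]_i=\alpha\Leftrightarrow i\in\mathcal{R}$ and $[\theta^*(C_{k+1})]_i=\beta\Leftrightarrow i\in\mathcal{L}$ are inherited directly from Corollary \ref{corollary:sdvi_g}, so the proof is complete once the two conditions have been rewritten.
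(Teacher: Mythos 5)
Your proposal is correct and follows exactly the route the paper intends: the paper states Corollary \ref{corollary:sdvi_p_g} immediately after remarking that ``by noting \eqref{eqn:primal_dual}, we can reconstruct DVI rules without the explicit computation of ${\bf G}$,'' i.e., by substituting ${\bf Z}^T\theta^*(C_k)=-\tfrac{1}{C_k}{\bf w}^*(C_k)$ into Corollary \ref{corollary:sdvi_g}, which is precisely your argument. Your sign and norm bookkeeping (the minus surviving only in the inner-product term, the $1/C_k$ entering both coefficients) is accurate and matches the stated conditions verbatim.
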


\section{Screening Rules for SVM}\label{section:DVI}

In Section \ref{subsection:DVI_SVM}, we first present the sequential DVI rules for SVM based on the results in Section \ref{section:DVI_general}. Then, in Section \ref{subsection:improve}, we show how to strictly improve SSNSV \cite{Ogawa2013} by the same technique used in DVI.
%

\subsection{DVI rules for SVM}\label{subsection:DVI_SVM}

Given a set of observations $\{{\bf x}_i,y_i\}_{i=1}^l$, where ${\bf x}_i$ and $y_i\in\{1,-1\}$ are the $i^{th}$ data instance and the corresponding class label,
the SVM takes the form of:
\begin{align}\label{prob:primal}
\min_{{\bf w}}\frac{1}{2}\|{\bf w}\|^2 + C\sum_{i=1}^l\left[1-{\bf w}^T(y_i{{\bf x}}_i)\right]_+.
\end{align}
It is easy to see that, if we set $\varphi(t)=[t]_+$ and $-a_i=b_i=y_i$, problem (\ref{prob:general}) becomes the SVM problem. To construct the DVI rules for SVM by Corollaries \ref{corollary:sdvi_g} and \ref{corollary:sdvi_p_g}, we only need to find $\alpha$ and $\beta$. In fact, we have the following result:

\begin{lemma}\label{lemma:biconjugate_svm}
Let $\varphi(t)=[t]_+$, then $\alpha=0$ and $\beta=1$, i.e.,
\begin{align}
\varphi^*(s)=\iota_{[0,1]}.
\end{align}
\end{lemma}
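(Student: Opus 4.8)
The plan is to compute the conjugate $\varphi^*$ directly from the definition in \eqref{eqn:conjugate} applied to $\varphi(t)=[t]_+=\max\{t,0\}$, and then identify the resulting interval $[\alpha,\beta]$. By Lemma~\ref{lemma:biconjugate_indicator} we already know that $\varphi^*$ must be the indicator function $\iota_{[\alpha,\beta]}$ of some nonempty closed interval, so the entire task reduces to pinning down the two endpoints $\alpha$ and $\beta$. This is a one-dimensional conjugation, so no real obstacle is expected; the work is just evaluating a supremum.

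First I would write out
\begin{align*}
\varphi^*(s)=\sup_{t\in\Re}\,\bigl(st-[t]_+\bigr),
\end{align*}
and split the supremum according to the sign of $t$, since $[t]_+$ is piecewise linear with a kink at $t=0$. For $t\geq 0$ the inner expression is $st-t=(s-1)t$, and for $t\leq 0$ it is $st-0=st$. Thus $\varphi^*(s)=\max\bigl\{\sup_{t\geq0}(s-1)t,\ \sup_{t\leq0}st\bigr\}$, and each branch is a supremum of a linear function of $t$ over a half-line, which is either $0$ (attained at $t=0$) or $+\infty$.

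Next I would analyze when each branch stays finite. The branch $\sup_{t\geq0}(s-1)t$ equals $0$ if $s-1\leq 0$ (i.e.\ $s\leq 1$) and $+\infty$ if $s>1$; the branch $\sup_{t\leq0}st$ equals $0$ if $s\geq 0$ and $+\infty$ if $s<0$. Combining the two, $\varphi^*(s)=0$ precisely when both $s\leq 1$ and $s\geq 0$ hold, i.e.\ when $s\in[0,1]$, and $\varphi^*(s)=+\infty$ otherwise. This shows $\varphi^*=\iota_{[0,1]}$, so that $\alpha=0$ and $\beta=1$, which is exactly the claim. To close the argument I would simply note that this matches the form guaranteed by Lemma~\ref{lemma:biconjugate_indicator} with the endpoints now explicitly determined. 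The only point requiring a little care—though hardly an obstacle—is handling the two half-line suprema and their boundary cases ($s=0$ and $s=1$), where the value $0$ is attained at $t=0$ rather than escaping to infinity; keeping track of the weak versus strict inequalities at these endpoints is what yields the closed interval $[0,1]$.
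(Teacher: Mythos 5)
Your computation is correct and is exactly what the paper intends: the authors omit the proof, stating only that it is ``a direct application of \eqref{eqn:conjugate},'' and your case split of the supremum over $t\geq0$ and $t\leq0$ is the standard way to carry that out. No issues.
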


We omit the proof of Lemma \ref{lemma:biconjugate_svm} since it is a direct application of \eqref{eqn:conjugate}.
Then, we immediately have the following screening rules for the SVM problem. (For notational convenience, let $\bar{\bf x}_i=y_i{\bf x}_i$ and $\overline{\bf X}=(\bar{\bf x}_1,\ldots,\bar{\bf x}_l)^T$.)

\begin{corollary}\label{corollary:sdvi}
{\rm ({\bf DVI$_s^*$} for {\bf SVM})} For problem (\ref{prob:primal}), suppose we are given a sequence of parameters $0<C_1<C_2<\ldots<C_{\mathcal{K}}$. Assume $\theta^*(C_k)$ is known for an arbitrary integer $1\leq k<\mathcal{K}$. Then, for $C_{k+1}$, we have $[\theta^*(C_{k+1})]_i=0$, i.e., $i\in\mathcal{R}$, if the following holds
\begin{align*}
\hspace{-2mm}&\tfrac{C_{k+1}+C_k}{2}\langle\overline{\bf X}^T\theta^*(C_k),\bar{\bf x}_i\rangle-\tfrac{C_{k+1}-C_k}{2}\|\overline{\bf X}^T\theta^*(C_k)\|\|\bar{\bf x}_i\|>1.
\end{align*}
Similarly, we have $[\theta^*(C_{k+1})]_i=1$, i.e., $i\in\mathcal{L}$, if
\begin{align*}
\hspace{-2mm}&\tfrac{C_{k+1}+C_k}{2}\langle\overline{\bf X}^T\theta^*(C_k),\bar{\bf x}_i\rangle+\tfrac{C_{k+1}-C_k}{2}\|\overline{\bf X}^T\theta^*(C_k)\|\|\bar{\bf x}_i\|<1.
\end{align*}
\end{corollary}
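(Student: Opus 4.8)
The plan is to obtain this corollary as a direct specialization of the general sequential rule \textbf{DVI$_s^*$} (Corollary \ref{corollary:sdvi_g}) to the SVM instance of problem (\ref{prob:general}). First I would record the correspondence already fixed in this subsection: for SVM we take $\varphi(t)=[t]_+$ and $-a_i=b_i=y_i$, so that by Lemma \ref{lemma:biconjugate_svm} the interval endpoints of $I_\varphi$ are $\alpha=0$ and $\beta=1$. These are exactly the values appearing in the desired conclusion ($[\theta^*(C_{k+1})]_i=0$ for $i\in\mathcal{R}$ and $[\theta^*(C_{k+1})]_i=1$ for $i\in\mathcal{L}$), so nothing more is needed on that front.

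The one genuine computational step is to rewrite the data matrix ${\bf Z}$ and the vectors $a_i{\bf x}_i$ in the SVM notation. Since $a_i{\bf x}_i=-y_i{\bf x}_i=-\bar{\bf x}_i$, we have ${\bf Z}=-\overline{\bf X}$ and hence ${\bf Z}^T\theta^*(C_k)=-\overline{\bf X}^T\theta^*(C_k)$. I would then substitute these identities into the two inequalities of Corollary \ref{corollary:sdvi_g} and track the signs carefully: in the bilinear term $\langle{\bf Z}^T\theta^*(C_k),a_i{\bf x}_i\rangle$ the two sign flips cancel and it becomes $\langle\overline{\bf X}^T\theta^*(C_k),\bar{\bf x}_i\rangle$; in both norm factors the sign is immaterial, so they become $\|\overline{\bf X}^T\theta^*(C_k)\|$ and $\|\bar{\bf x}_i\|$; and on the right-hand side $b_iy_i=y_i^2=1$ because $y_i\in\{1,-1\}$.

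After these substitutions the first inequality of Corollary \ref{corollary:sdvi_g} reads exactly
\begin{align*}
\tfrac{C_{k+1}+C_k}{2}\langle\overline{\bf X}^T\theta^*(C_k),\bar{\bf x}_i\rangle-\tfrac{C_{k+1}-C_k}{2}\|\overline{\bf X}^T\theta^*(C_k)\|\|\bar{\bf x}_i\|>1,
\end{align*}
and the second inequality turns into the stated $\mathcal{L}$-rule with the $+$ sign, so the corollary follows verbatim. I do not expect any real obstacle here, since the argument is pure substitution into an already-proved rule; the only point requiring care is the bookkeeping of the two minus signs coming from $a_i=-y_i$, which one must verify to cancel in the inner-product term while leaving the two norm terms unchanged, together with the elementary simplification $b_iy_i=1$.
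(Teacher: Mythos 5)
Your proposal is correct and matches the paper's own (implicit) derivation: the paper obtains this corollary as an immediate specialization of Corollary \ref{corollary:sdvi_g} using $\varphi(t)=[t]_+$, $-a_i=b_i=y_i$, and $\alpha=0$, $\beta=1$ from Lemma \ref{lemma:biconjugate_svm}. Your explicit bookkeeping of the sign cancellation in $\langle{\bf Z}^T\theta^*(C_k),a_i{\bf x}_i\rangle$ and the simplification $b_iy_i=y_i^2=1$ is exactly the intended substitution.
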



\begin{corollary}\label{corollary:sdvi_p}
{\rm ({\bf DVI$_s$} for {\bf SVM})} For problem (\ref{prob:primal}), suppose we are given a sequence of parameters $0<C_1<C_2<\ldots<C_{\mathcal{K}}$. Assume ${\bf w}^*(C_k)$ is known for an arbitrary integer $1\leq k<\mathcal{K}$. Then, for $C_{k+1}$, we have $[\theta^*(C_{k+1})]_i=0$, i.e., $i\in\mathcal{R}$, if the following holds
\begin{align*}
&\tfrac{C_k+C_{k+1}}{2C_k}\langle{\bf w}^*(C_k),\bar{\bf x}_i\rangle-\tfrac{C_{k+1}-C_k}{2C_k}\|{\bf w}^*(C_k)\|\|\bar{\bf x}_i\|>1.
\end{align*}
Similarly, we have $[\theta^*(C_{k+1})]_i=1$, i.e., $i\in\mathcal{L}$, if
\begin{align*}
&\tfrac{C_k+C_{k+1}}{2C_k}\langle{\bf w}^*(C_k),\bar{\bf x}_i\rangle+\tfrac{C_{k+1}-C_k}{2C_k}\|{\bf w}^*(C_k)\|\|\bar{\bf x}_i\|<1.
\end{align*}
\end{corollary}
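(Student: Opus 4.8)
The plan is to derive Corollary \ref{corollary:sdvi_p} as a direct specialization of the general sequential rule in Corollary \ref{corollary:sdvi_p_g} to the SVM instance of problem (\ref{prob:general}). The setup already identifies SVM with the choices $\varphi(t)=[t]_+$ and $-a_i=b_i=y_i$, and Lemma \ref{lemma:biconjugate_svm} fixes the interval endpoints as $\alpha=0$ and $\beta=1$. With these in hand, the result follows by substituting the SVM-specific quantities into the two generic inequalities of Corollary \ref{corollary:sdvi_p_g} and simplifying.

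First I would translate the data-dependent terms. Using $a_i=-y_i$ together with the definition $\bar{\bf x}_i=y_i{\bf x}_i$, I get $a_i{\bf x}_i=-y_i{\bf x}_i=-\bar{\bf x}_i$, so that $\langle{\bf w}^*(C_k),a_i{\bf x}_i\rangle=-\langle{\bf w}^*(C_k),\bar{\bf x}_i\rangle$ and $\|a_i{\bf x}_i\|=\|-\bar{\bf x}_i\|=\|\bar{\bf x}_i\|$. On the right-hand side, $b_i=y_i$ gives $b_iy_i=y_i^2=1$ because $y_i\in\{1,-1\}$.

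Plugging these into the first inequality of Corollary \ref{corollary:sdvi_p_g}, the coefficient $-\tfrac{C_k+C_{k+1}}{2C_k}$ multiplying $-\langle{\bf w}^*(C_k),\bar{\bf x}_i\rangle$ produces a positive sign, while the norm term is unchanged, so the condition for $i\in\mathcal{R}$ becomes exactly $\tfrac{C_k+C_{k+1}}{2C_k}\langle{\bf w}^*(C_k),\bar{\bf x}_i\rangle-\tfrac{C_{k+1}-C_k}{2C_k}\|{\bf w}^*(C_k)\|\|\bar{\bf x}_i\|>1$. An identical substitution in the second inequality yields the stated condition for $i\in\mathcal{L}$. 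Since $\alpha=0$ and $\beta=1$, the membership statements $[\theta^*(C_{k+1})]_i=\alpha$ and $[\theta^*(C_{k+1})]_i=\beta$ read as $[\theta^*(C_{k+1})]_i=0$ and $[\theta^*(C_{k+1})]_i=1$, matching the claim.

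There is essentially no deep obstacle here; the only point requiring care is tracking the sign reversal induced by $a_i=-y_i$, which flips the sign of the inner-product term but leaves the nonnegative norm term untouched via $\|-\bar{\bf x}_i\|=\|\bar{\bf x}_i\|$. Correctly propagating this sign through both inequalities is precisely what makes the final coefficients come out as stated.
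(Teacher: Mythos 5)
Your proposal is correct and matches the paper's approach exactly: the paper gives no separate proof, treating Corollary \ref{corollary:sdvi_p} as an immediate specialization of Corollary \ref{corollary:sdvi_p_g} with $\varphi(t)=[t]_+$, $-a_i=b_i=y_i$, $\alpha=0$, $\beta=1$, which is precisely the substitution you carry out. Your explicit tracking of the sign flip from $a_i{\bf x}_i=-\bar{\bf x}_i$ and of $b_iy_i=y_i^2=1$ correctly fills in the routine details the paper leaves implicit.
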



\subsection{Improving the existing method}\label{subsection:improve}

In the rest of this section, we briefly describe how to strictly improve SSNSV \cite{Ogawa2013} by using the same technique used in DVI rules (please refer to the supplement for more details). In view of \eqref{eqn:primal_dual}, (\ref{rule1'}) and (\ref{rule2'}) can be rewritten as:
\begin{equation}\tag{R1$''$}\label{rule1''}
\min_{{\bf w}\in\Omega}\langle{\bf w},\bar{\bf x}_i\rangle>1\Rightarrow[\theta^*(C)]_i=0\Leftrightarrow i\in\mathcal{R},
\end{equation}
\begin{equation}\tag{R2$''$}\label{rule2''}
\max_{{\bf w}\in\Omega}\langle{\bf w},\bar{\bf x}_i\rangle<1\Rightarrow[\theta^*(C)]_i=1\Leftrightarrow i\in\mathcal{L},
\end{equation}
where $\Omega$ is a set which includes ${\bf w}^*(C)$ (notice that, we have already set $-a_i=b_i=y_i$, $\alpha=0$ and $\beta=1$). It is easy to see that, the smaller $\Omega$ is, the tighter the bounds are in (\ref{rule1''}) and (\ref{rule2''}). Thus, more data instances' membership can be identified.

{\bf Estimation of ${\bf w}^*$ in SSNSV}

In \cite{Ogawa2013}, the authors consider the following equivalent formulation of SVM:
\begin{align}
\min_{\bf w}\frac{1}{2}\|{\bf w}\|^2,\hspace{1mm}{\rm s.t.}\hspace{1mm}\sum_{i=1}^l[1-y_i{\bf w}^T{\bf x}_i]_+\leq s
\end{align}
Let $\mathcal{F}_s=\{{\bf w}:\sum_{i=1}^l[1-y_i{\bf w}^T{\bf x}_i]_+\leq s\}$. Suppose we have two scalars $s_a>s_b>0$, and $\mathcal{F}_{s_b}\neq\emptyset$, $\hat{\bf w}(s_b)\in\mathcal{F}_{s_b}$. Then for $s\in[s_b,s_a]$, ${\bf w}^*(s)$ is inside the following region:
\begin{align}\label{eqn:jan_bound}
\Omega_{[s_b,s_a]}:=\left\{{\bf w}:
\begin{array}{l}
\langle{\bf w}^*(s_a),{\bf w}-{\bf w}^*(s_a)\rangle\geq0,\\
\|{\bf w}\|^2\leq\|\hat{\bf w}(s_b)\|^2
\end{array}
\right\}
\end{align}

{\bf Estimation of ${\bf w}^*$ via VI}

By using the same technique as in DVI, we can conclude that ${\bf w}^*(s)$
is inside the region:
\begin{align}\label{eqn:jan_bound_vi}
\Omega'_{[s_b,s_a]}:=\left\{{\bf w}:
\begin{array}{l}
\langle{\bf w}^*(s_a),{\bf w}-{\bf w}^*(s_a)\rangle\geq0,\\
\|{\bf w}-\frac{1}{2}\hat{\bf w}(s_b)\|\leq\frac{1}{2}\|\hat{\bf w}(s_b)\|
\end{array}
\right\}
\end{align}
We can see that $\Omega'_{[s_b,s_a]}\subset\Omega_{[s_b,s_a]}$, and thus SSNSV can be strictly improved by the estimation in (\ref{eqn:jan_bound_vi}). The rule based on $\Omega'[s_b,s_a]$ is presented in Theorem \ref{thm:snsv_i} in the supplement, which is call the ``enhanced" SSNSV (ESSNSV).

\section{Screening Rules for LAD}\label{section:LAD}

In this section, we extend DVI rules in Section \ref{section:DVI_general} to the least absolute deviations regression (LAD). Suppose we have a training set $\{{\bf x}_i,y_i\}_{i=1}^l$, where ${\bf x}_i\in\Re^n$ and $y_i\in\Re$.
The LAD problem takes the form of

\begin{equation}\label{prob:primal_rls}
\min_{\bf w}\frac{1}{2}\|{\bf w}\|^2+C\sum_{i=1}^l|y_i-{\bf w}^T{\bf x}_i|.
\end{equation}

We can see that, if we set $\varphi(t)=|t|$ and $-a_i=b_i=1$, problem (\ref{prob:general}) becomes the LAD problem. To construct the DVI rules for LAD based on Corollaries \ref{corollary:sdvi_g} and \ref{corollary:sdvi_p_g}, we need to find $\alpha$ and $\beta$.
Indeed, we have the following result:

\begin{lemma}\label{lemma:biconjugate_rls}
Let $\varphi(t)=|t|$, then $\alpha=-1$ and $\beta=1$, i.e.,
\begin{align}
\varphi^*(s)=\iota_{[-1,1]}.
\end{align}
\end{lemma}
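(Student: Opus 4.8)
The plan is to evaluate the conjugate $\varphi^*$ directly from its definition in \eqref{eqn:conjugate} and simply read off the endpoints $\alpha$ and $\beta$. Since $\varphi(t)=|t|$ is convex, positively homogeneous, continuous, nonconstant, and maps into $\Re_+$, it is a nonconstant continuous sublinear function, so Lemma~\ref{lemma:biconjugate_indicator} already guarantees that $\varphi^*$ is the indicator $\iota_{[\alpha,\beta]}$ of some nonempty closed interval. Thus the whole task reduces to determining the precise set of $s\in\Re$ for which $\varphi^*(s)=\sup_{t\in\Re}\bigl(ts-|t|\bigr)$ is finite (and equal to $0$) rather than $+\infty$; this mirrors the approach used for the SVM case in Lemma~\ref{lemma:biconjugate_svm}.

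First I would split the supremum according to the sign of $t$. For $t\ge 0$ we have $ts-|t|=t(s-1)$, and for $t\le 0$ we have $ts-|t|=t(s+1)$. Hence $\sup_{t\ge 0}t(s-1)$ equals $0$ when $s\le 1$ and $+\infty$ when $s>1$, while $\sup_{t\le 0}t(s+1)$ equals $0$ when $s\ge -1$ and $+\infty$ when $s<-1$. Taking the overall supremum over all $t$, and noting that both half-lines share the point $t=0$ which contributes the value $0$, I obtain $\varphi^*(s)=0$ precisely when $-1\le s\le 1$ and $\varphi^*(s)=+\infty$ otherwise.

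This identifies $\varphi^*=\iota_{[-1,1]}$, so $\alpha=-1$ and $\beta=1$, as claimed. The computation is entirely elementary, so there is no substantive obstacle; the only point requiring a little care is the treatment of the two endpoints $s=\pm 1$, where the linear coefficient $s\mp 1$ degenerates to $0$ and one must confirm that the supremum is attained at $t=0$ with value $0$ rather than blowing up. Checking these boundary cases verifies that both endpoints lie in the domain of finiteness and that the interval is closed, consistent with Lemma~\ref{lemma:biconjugate_indicator}.
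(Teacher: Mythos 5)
Your computation is correct and is exactly the ``direct application of \eqref{eqn:conjugate}'' that the paper invokes (it omits the explicit details): splitting the supremum $\sup_{t}(ts-|t|)$ over $t\ge 0$ and $t\le 0$ gives $0$ on $[-1,1]$ and $+\infty$ otherwise, so $\varphi^*=\iota_{[-1,1]}$. No gaps; your treatment of the endpoints $s=\pm 1$ is also right.
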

We again omit the proof of Lemma \ref{lemma:biconjugate_rls} since it is a direct application of \eqref{eqn:conjugate}.
Then, it is straightforward to derive the sequential DVI rules for the LAD problem.

\begin{corollary}\label{corollary:sdvi_rls}
{\rm ({\bf DVI}$_s^*$ for {\bf LAD})} For problem (\ref{prob:primal_rls}), suppose we are given a sequence of parameter values $0<C_1<C_2<\ldots<C_{\mathcal{K}}$. Assume $\theta^*(C_k)$ is known for an arbitrary integer $1\leq k<\mathcal{K}$. Then, for $C_{k+1}$, we have $[\theta^*(C_{k+1})]_i=-1$ or $1$, i.e., $i\in\mathcal{R}$ or $i\in\mathcal{L}$, if the following holds respectively
\begin{align*}
{\rm 1.}\hspace{2mm}&\tfrac{C_{k+1}+C_k}{2}\langle{\bf X}^T\theta^*(C_k),{\bf x}_i\rangle-\tfrac{C_{k+1}-C_k}{2}\|{\bf X}^T\theta^*(C_k)\|\|{\bf x}_i\|>y_i.\\
{\rm 2.}\hspace{2mm}&\tfrac{C_{k+1}+C_k}{2}\langle{\bf X}^T\theta^*(C_k),{\bf x}_i\rangle+\tfrac{C_{k+1}-C_k}{2}\|{\bf X}^T\theta^*(C_k)\|\|{\bf x}_i\|<y_i.
\end{align*}
\end{corollary}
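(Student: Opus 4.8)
The plan is to derive this corollary as a direct specialization of the general sequential rule in Corollary \ref{corollary:sdvi_g}, since LAD was already placed inside the template of problem (\ref{prob:general}). First I would record the correspondence made just before Lemma \ref{lemma:biconjugate_rls}: LAD is the instance of (\ref{prob:general}) obtained by taking $\varphi(t)=|t|$ and $-a_i=b_i=1$, so that $a_i=-1$ and $b_i=1$ for every $i$. Lemma \ref{lemma:biconjugate_rls} then fixes the interval endpoints as $\alpha=-1$ and $\beta=1$, which is exactly why membership in $\mathcal{R}$ reads $[\theta^*(C_{k+1})]_i=-1$ and membership in $\mathcal{L}$ reads $[\theta^*(C_{k+1})]_i=1$ in the statement.

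The second step is to translate the matrix ${\bf Z}$ appearing in Corollary \ref{corollary:sdvi_g} into the LAD data matrix ${\bf X}$. Because $a_i=-1$ is the same constant for all $i$, the $i$-th column $a_i{\bf x}_i$ of ${\bf Z}^T$ equals $-{\bf x}_i$, hence ${\bf Z}^T\theta=-{\bf X}^T\theta$ for every $\theta$. I would then substitute the LAD-specific quantities into the two inequalities of Corollary \ref{corollary:sdvi_g}: the right-hand side $b_iy_i$ becomes $y_i$; each Euclidean norm is sign-invariant, so $\|{\bf Z}^T\theta^*(C_k)\|=\|{\bf X}^T\theta^*(C_k)\|$ and $\|a_i{\bf x}_i\|=\|{\bf x}_i\|$; and the inner product becomes
\[
\langle{\bf Z}^T\theta^*(C_k),a_i{\bf x}_i\rangle=\langle-{\bf X}^T\theta^*(C_k),-{\bf x}_i\rangle=\langle{\bf X}^T\theta^*(C_k),{\bf x}_i\rangle,
\]
where the two sign flips cancel. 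Making these replacements turns the $\alpha$-rule and the $\beta$-rule of Corollary \ref{corollary:sdvi_g} verbatim into statements 1 and 2, which completes the argument.

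There is no genuine analytic obstacle here; the only thing to handle carefully is the sign bookkeeping introduced by $a_i=-1$. The point worth stressing is that this sign enters the inner product twice, once through ${\bf Z}^T\theta^*(C_k)$ and once through the factor $a_i{\bf x}_i$, so it cancels and leaves a plain inner product with ${\bf X}^T\theta^*(C_k)$, whereas it enters each norm only as the overall factor $|-1|=1$ and therefore drops out. Because $a_i$ is constant across instances, this cancellation is uniform in $i$ and one never needs to track per-instance signs, which is precisely what makes the specialization clean.
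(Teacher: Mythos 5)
Your proposal is correct and coincides with the paper's own (deliberately omitted, ``straightforward'') derivation: specialize Corollary \ref{corollary:sdvi_g} with $a_i=-1$, $b_i=1$, $\alpha=-1$, $\beta=1$ from Lemma \ref{lemma:biconjugate_rls}, and observe that the two factors of $-1$ cancel in the inner product while leaving the norms and the right-hand side $b_iy_i=y_i$ unchanged. The sign bookkeeping you spell out is exactly the only content of the step.
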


\begin{corollary}\label{corollary:sdvi_p_rls}
{\rm ({\bf DVI}$_s$ for {\bf LAD})} For problem (\ref{prob:primal_rls}), suppose we are given a sequence of parameter values $0<C_1<C_2<\ldots<C_{\mathcal{K}}$. Assume ${\bf w}^*(C_k)$ is known for an arbitrary integer $1\leq k<\mathcal{K}$. Then, for $C_{k+1}$, we have $[\theta^*(C_{k+1})]_i=-1$ or $1$, i.e., $i\in\mathcal{R}$ or $i\in\mathcal{L}$, if the following holds respectively
\begin{align*}
{\rm 1.}\hspace{2mm}&\tfrac{C_{k+1}+C_k}{2C_k}\langle{\bf w}^*(C_k),{\bf x}_i\rangle-\tfrac{C_{k+1}-C_k}{2C_k}\|{\bf w}^*(C_k)\|\|{\bf x}_i\|>y_i,\\
{\rm 2.}\hspace{2mm}&\tfrac{C_{k+1}+C_k}{2C_k}\langle{\bf w}^*(C_k),{\bf x}_i\rangle+\tfrac{C_{k+1}-C_k}{2C_k}\|{\bf w}^*(C_k)\|\|{\bf x}_i\|<y_i.
\end{align*}
\end{corollary}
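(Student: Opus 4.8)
The plan is to obtain Corollary~\ref{corollary:sdvi_p_rls} as a direct specialization of the general primal rule in Corollary~\ref{corollary:sdvi_p_g}, exactly parallel to how Corollary~\ref{corollary:sdvi_p} was derived for SVM. First I would recall that the LAD problem~\eqref{prob:primal_rls} is the instance of the general problem~\eqref{prob:general} obtained by setting $\varphi(t)=|t|$ and $-a_i=b_i=1$, i.e.\ $a_i=-1$ and $b_i=1$ for every $i$. By Lemma~\ref{lemma:biconjugate_rls} the conjugate $\varphi^*=\iota_{[-1,1]}$, so the interval endpoints appearing in the general dual feasibility set are $\alpha=-1$ and $\beta=1$. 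With these identifications the hypotheses of Corollary~\ref{corollary:sdvi_p_g} are met whenever ${\bf w}^*(C_k)$ is known, and the two screening conditions there (for $[\theta^*(C_{k+1})]_i=\alpha$ and $[\theta^*(C_{k+1})]_i=\beta$) become, respectively, the conditions for $i\in\mathcal{R}$ and $i\in\mathcal{L}$ claimed in the statement.

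The remaining work is purely algebraic substitution. Plugging $a_i=-1$ into the generic rule, I would use $\langle{\bf w}^*(C_k),a_i{\bf x}_i\rangle=-\langle{\bf w}^*(C_k),{\bf x}_i\rangle$ and $\|a_i{\bf x}_i\|=\|{\bf x}_i\|$, together with $b_iy_i=y_i$. The one point that deserves care is the sign bookkeeping on the inner-product term: in Corollary~\ref{corollary:sdvi_p_g} the coefficient of $\langle{\bf w}^*(C_k),a_i{\bf x}_i\rangle$ carries an explicit leading minus sign, and this minus sign is exactly cancelled by the $a_i=-1$ factor, turning $-\tfrac{C_k+C_{k+1}}{2C_k}\langle{\bf w}^*(C_k),a_i{\bf x}_i\rangle$ into $+\tfrac{C_k+C_{k+1}}{2C_k}\langle{\bf w}^*(C_k),{\bf x}_i\rangle$. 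The norm terms are sign-insensitive, so they pass through unchanged, while the right-hand side simplifies from $b_iy_i$ to $y_i$; the $\mathcal{L}$ case is identical except that the norm term enters with a plus sign. This yields precisely conditions~1 and~2 of the corollary.

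As a cross-check, I would verify consistency with the dual form in Corollary~\ref{corollary:sdvi_rls} via the primal--dual relation~\eqref{eqn:primal_dual}, namely ${\bf w}^*(C_k)=-C_k{\bf Z}^T\theta^*(C_k)$: substituting ${\bf Z}^T\theta^*(C_k)=-\tfrac{1}{C_k}{\bf w}^*(C_k)$ into Corollary~\ref{corollary:sdvi_rls} should reproduce the same inequalities, confirming that the primal and dual statements coincide. Since the entire argument is substitution into an already-proved corollary, there is no genuine obstacle; the only place an error could creep in is the sign cancellation just described, which I would double-check explicitly.
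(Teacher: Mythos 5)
Your proposal is correct and matches the paper's (implicit) derivation: the paper itself obtains Corollary~\ref{corollary:sdvi_p_rls} as the direct specialization of Corollary~\ref{corollary:sdvi_p_g} with $\varphi(t)=|t|$, $a_i=-1$, $b_i=1$, $\alpha=-1$, $\beta=1$ (via Lemma~\ref{lemma:biconjugate_rls}), and your sign bookkeeping ($-\langle{\bf w}^*(C_k),a_i{\bf x}_i\rangle=+\langle{\bf w}^*(C_k),{\bf x}_i\rangle$, $\|a_i{\bf x}_i\|=\|{\bf x}_i\|$, $b_iy_i=y_i$) is exactly right. The cross-check against Corollary~\ref{corollary:sdvi_rls} via \eqref{eqn:primal_dual} also goes through, since ${\bf Z}=-{\bf X}$ gives ${\bf X}^T\theta^*(C_k)=\tfrac{1}{C_k}{\bf w}^*(C_k)$.
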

To the best of our knowledge, ours are the first screening rules for LAD.

\section{Experiments}\label{section:experiments}

We evaluate DVI rules on both synthetic and real data sets. To measure the performance of the screening rules, we compute the rejection rate, that is, the ratio between the number of data instances whose membership can be identified by the rules and the total number of data instances. We test the rules along a sequence of $100$ parameters of $C\in[10^{-2},10]$ equally spaced in the logarithmic scale.


In Section \ref{subsection:experiment_DVI}, we compare the performance of DVI rules with SSNSV \cite{Ogawa2013}, which is the only existing method for identifying non-support vectors in SVM. Notice that, both of DVI rules and SSNSV are safe in the sense that no support vectors will be mistakenly discarded. We then evaluate DVI rules for LAD in Section \ref{subsection:experiment_LAD}.

\subsection{DVI for SVM}\label{subsection:experiment_DVI}

In this experiment, we first apply DVI$_s$ to three simple 2D synthetic data sets to illustrate the effectiveness of the proposed screening methods. Then we compare the performance of DVI$_s$, SSNSV and ESSNSV on: (a) IJCNN1 data set \cite{Prokhorov2001}; (b) Wine Quality data set \cite{Cortez2009}; (c) Forest Covertype data set \cite{Hettich1999}. The original Forest Covertype data set includes $7$ classes. We randomly pick two of the seven classes to construct the data set used in this paper.

\begin{figure*}[ht!]
\centering{
\includegraphics[width=0.3\columnwidth]{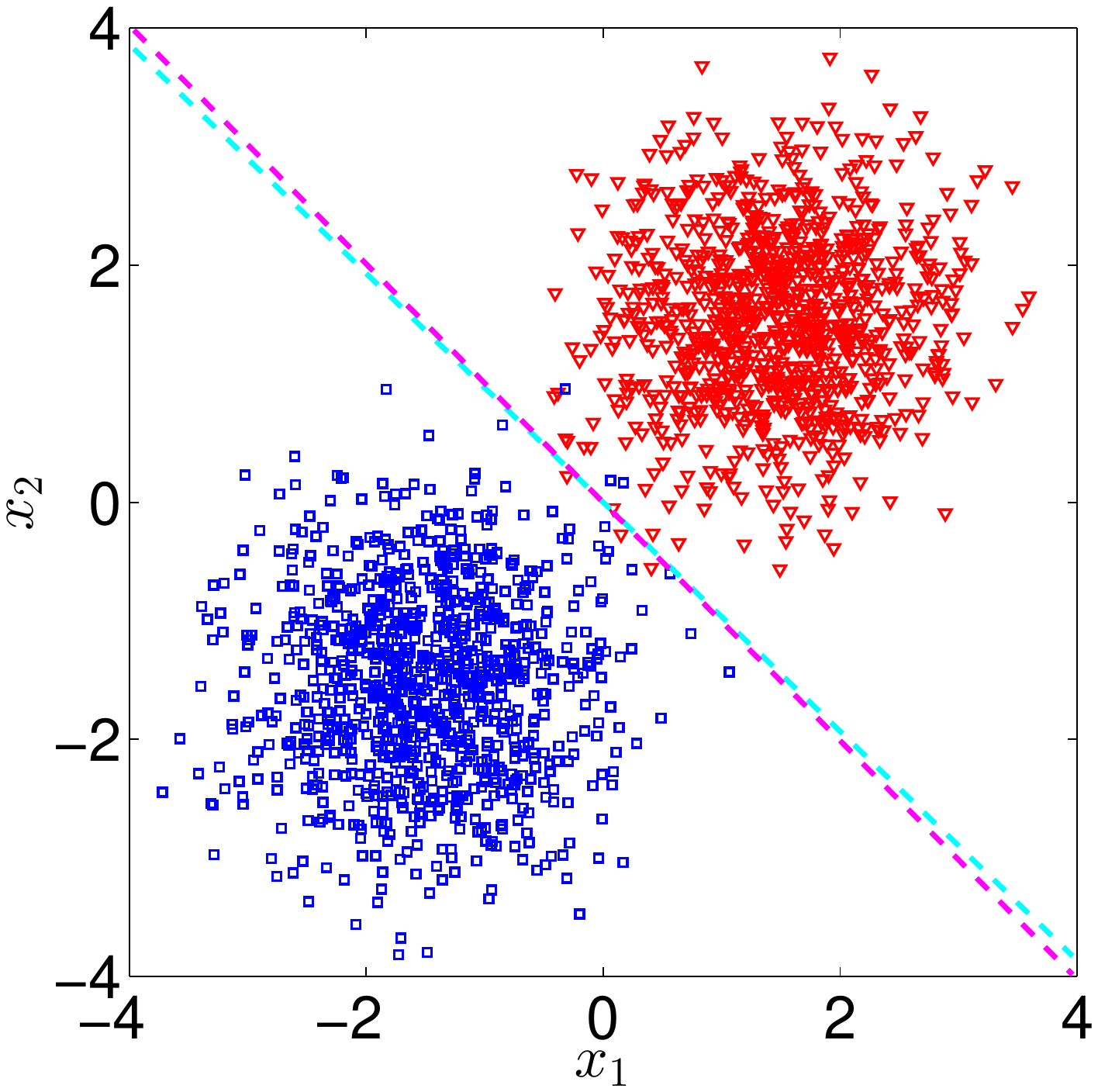}
\includegraphics[width=0.3\columnwidth]{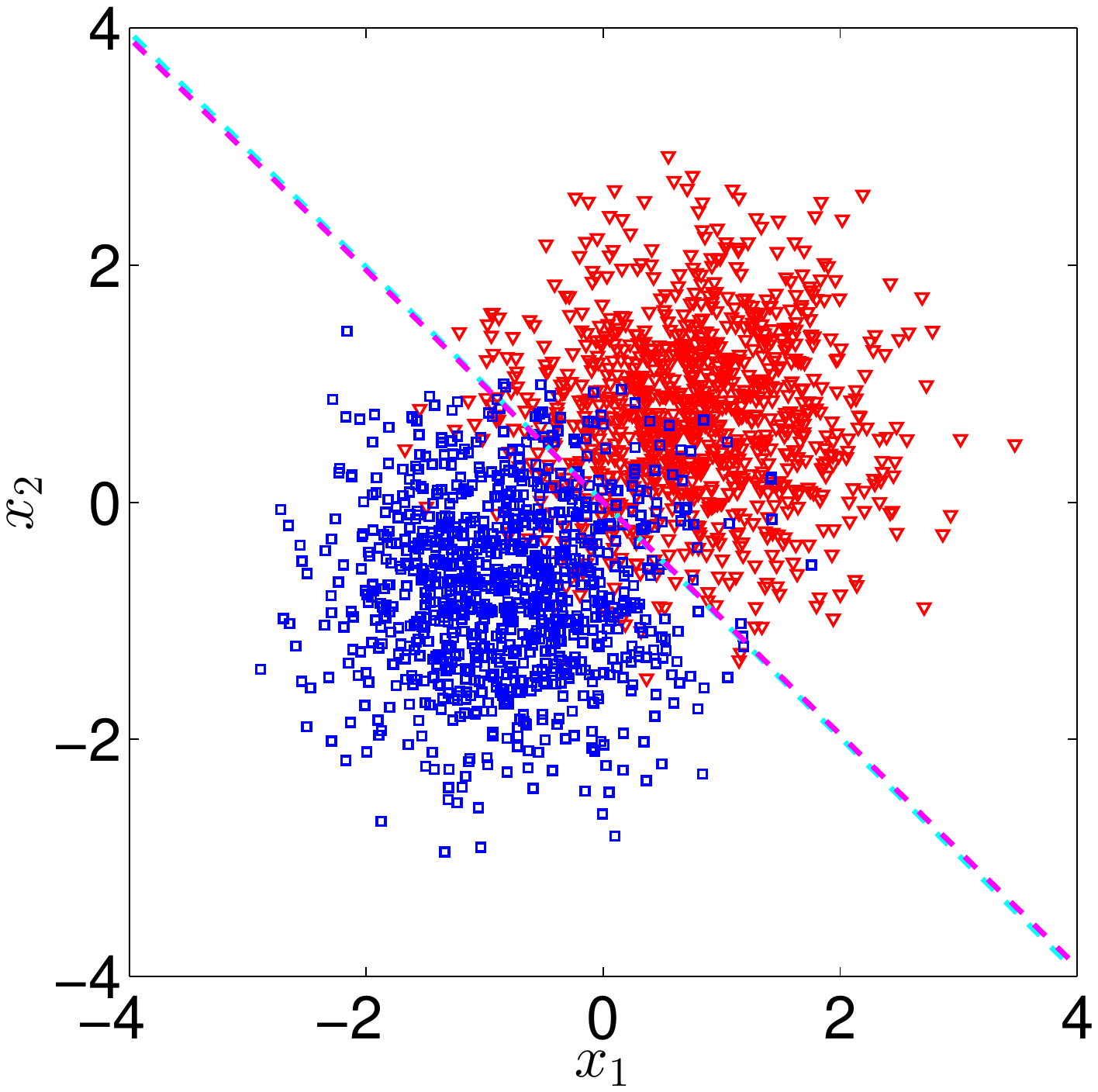}
\includegraphics[width=0.3\columnwidth]{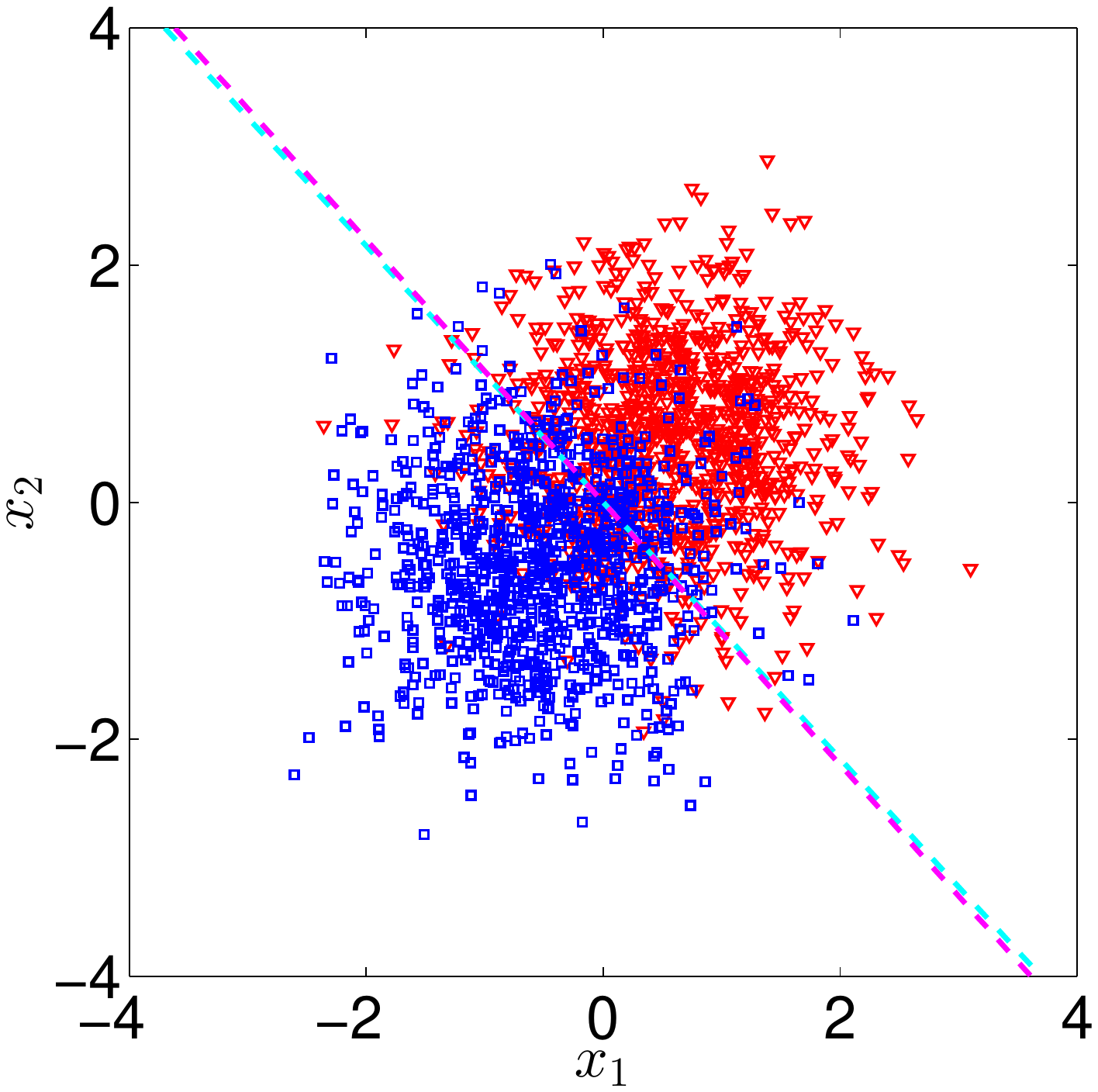}
\\
\subfigure[Toy1] { \label{fig:q175e1}
\includegraphics[width=0.22\columnwidth,angle=90]{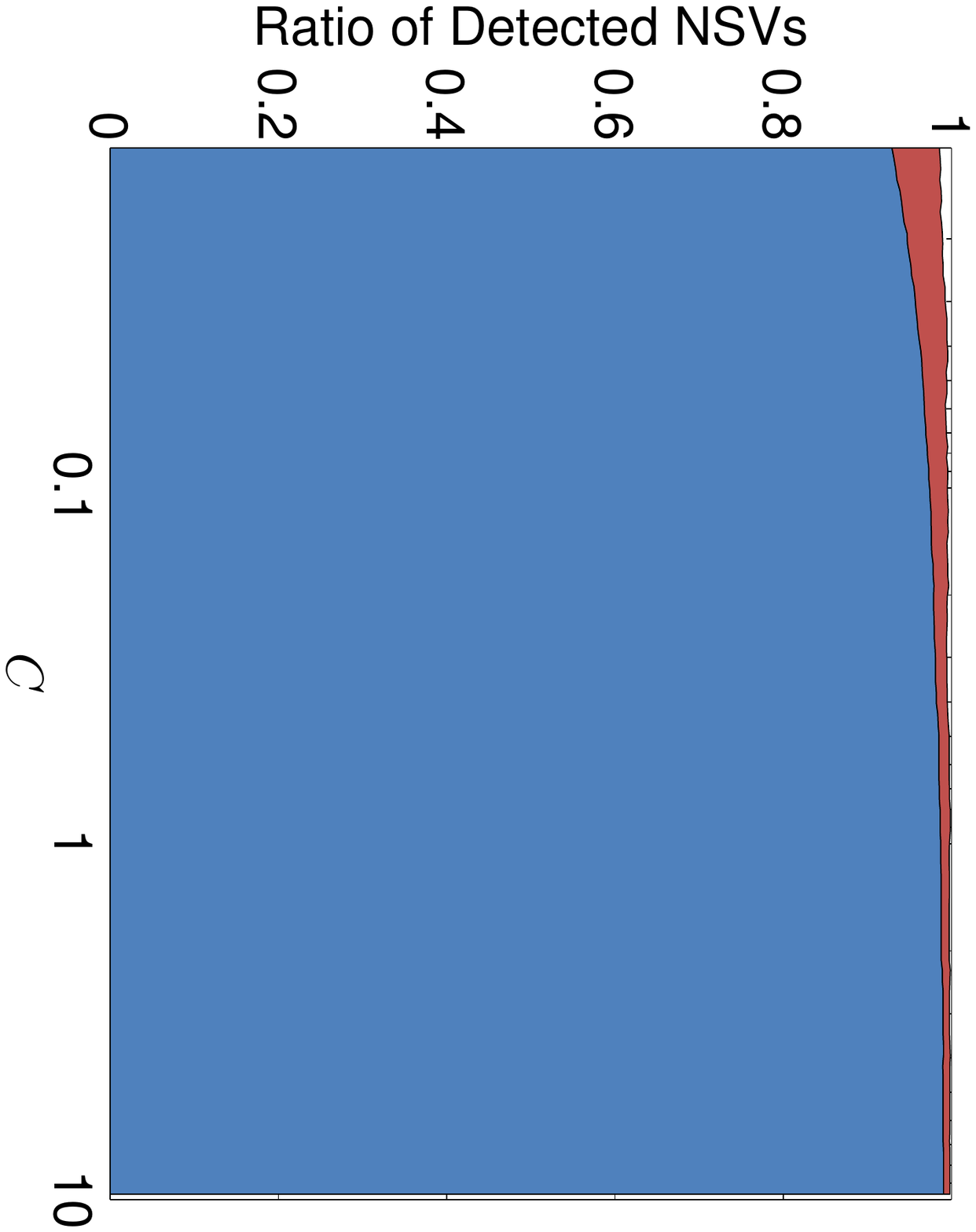}
}
\subfigure[Toy2] { \label{fig:q2e1}
\includegraphics[width=0.22\columnwidth,angle=90]{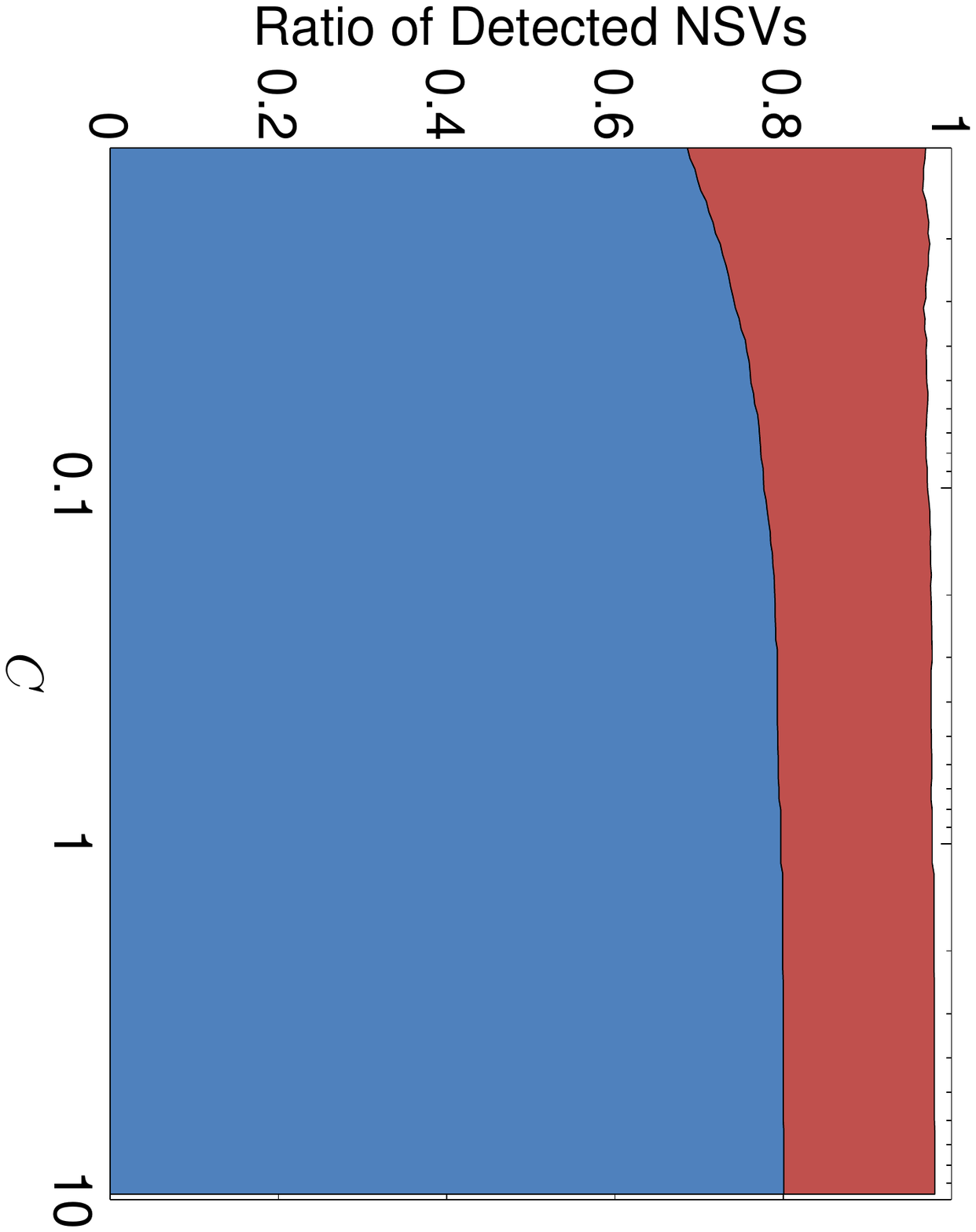}
}
\subfigure[Toy3] { \label{fig:q233e1}
\includegraphics[width=0.22\columnwidth,angle=90]{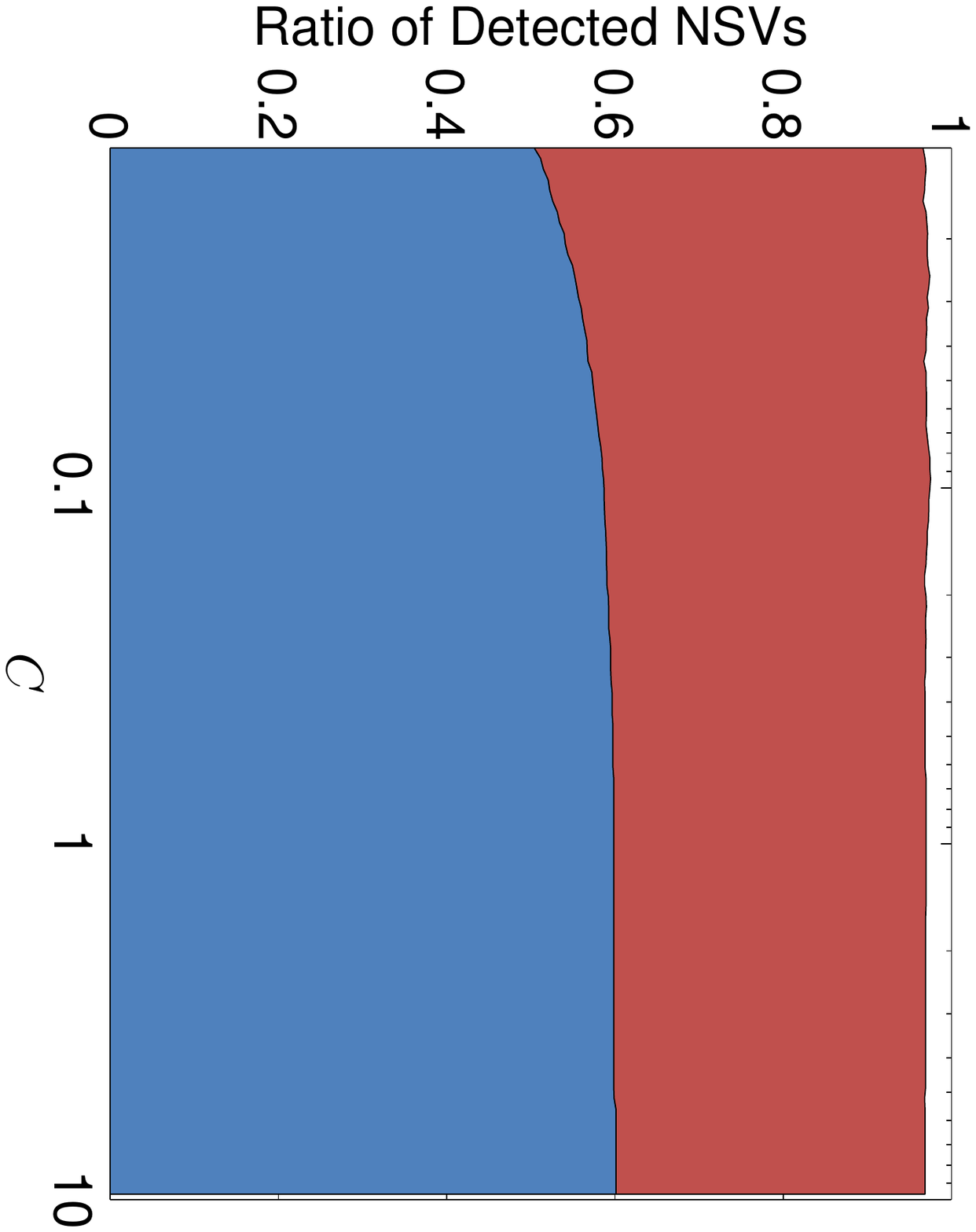}
}
}
\caption{DVI$_s$ for three 2D synthetic data sets. The first row shows the plots of the data. Cyan and magenta dotted lines are the resulting decision functions at $C=10^{-2}$ and $C=10$, respectively. The second row presents the rejection rates of DVI$_s$ with the given $100$ parameter values.}
\label{fig:rej_ratio_synthetic}
\end{figure*}

\begin{figure*}[ht]
\centering{
\subfigure[IJCNN1] {
\includegraphics[width=0.3\columnwidth]{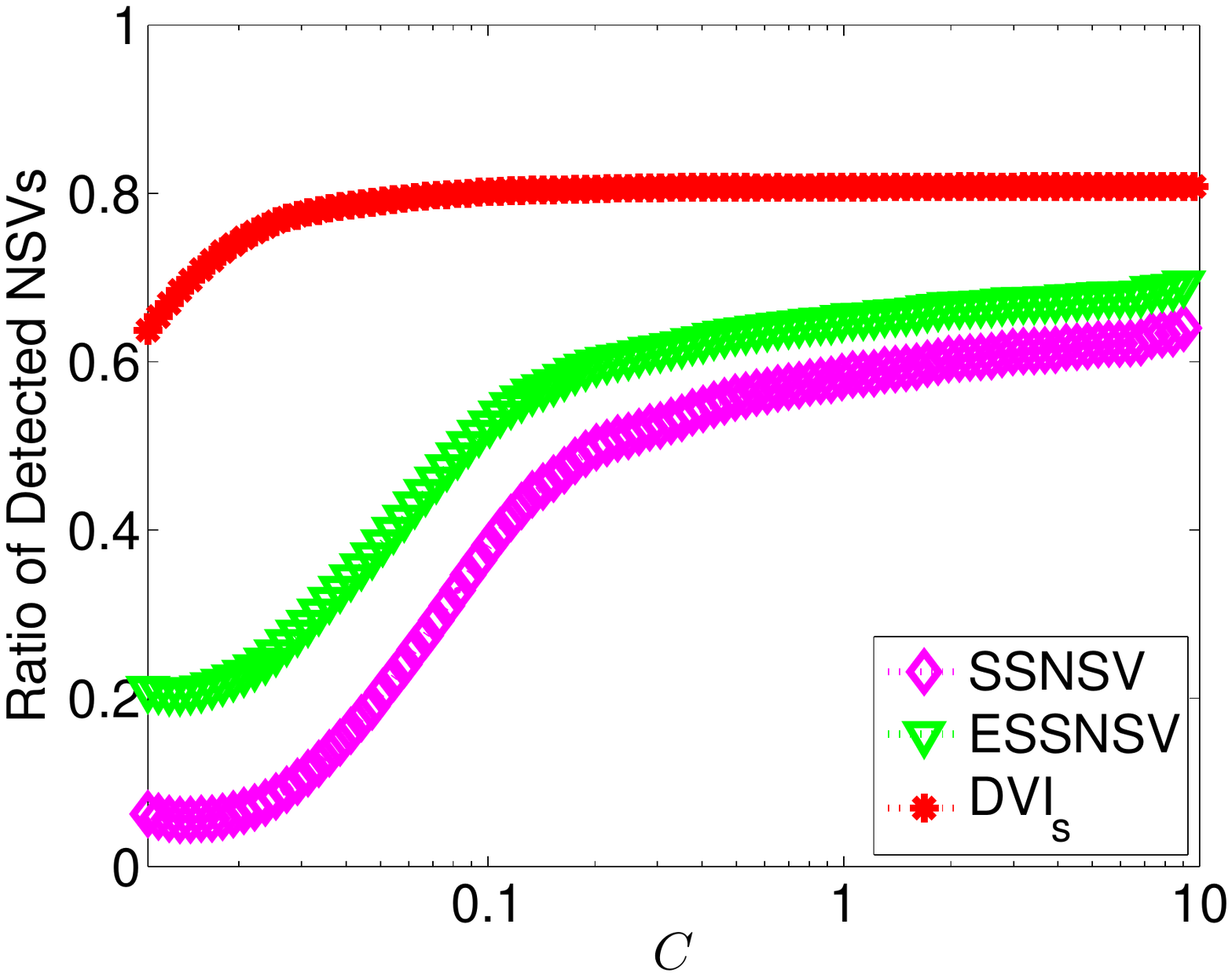}
}
\subfigure[Wine] {
\includegraphics[width=0.3\columnwidth]{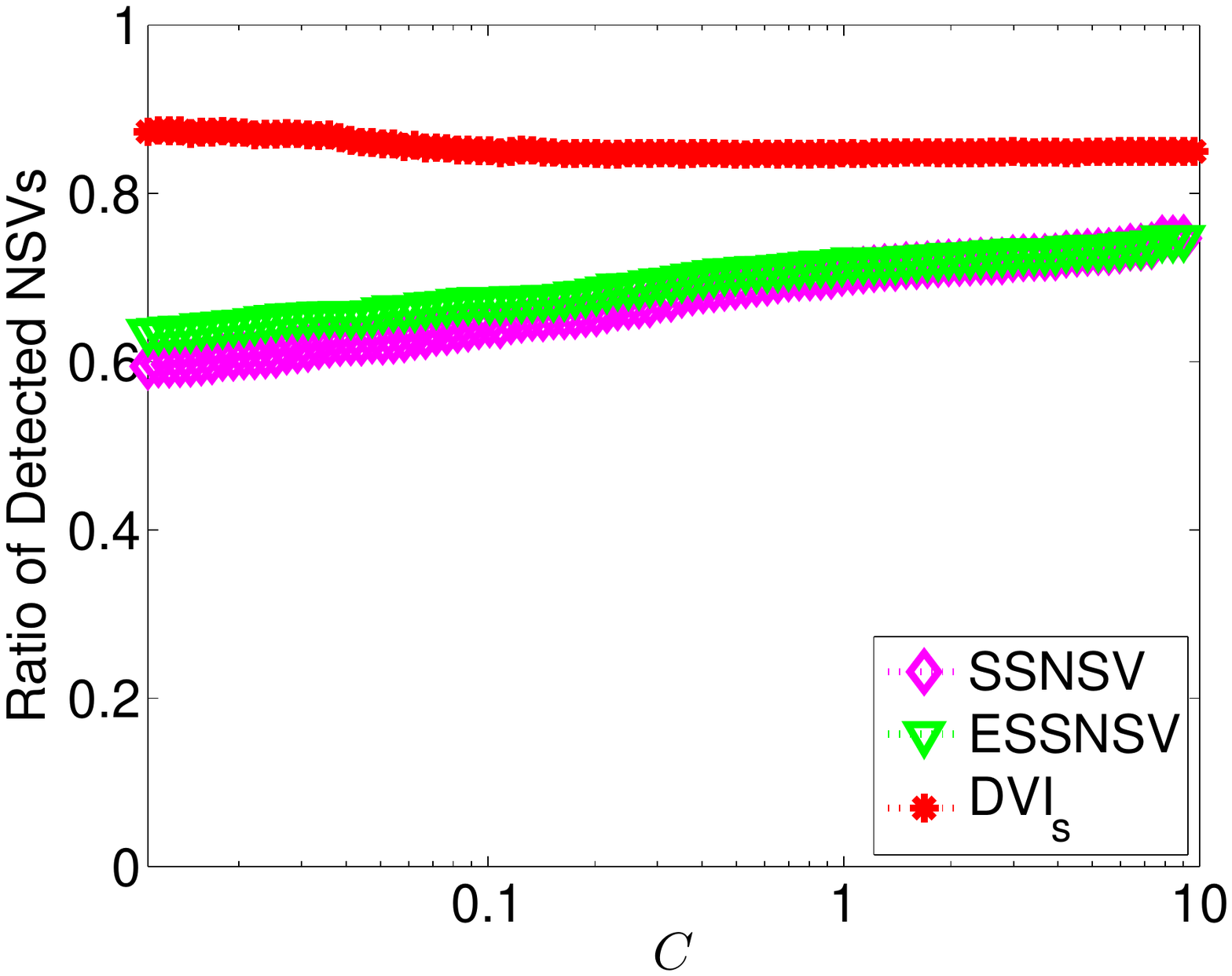}
}
\subfigure[Forest Covertype] {
\includegraphics[width=0.3\columnwidth]{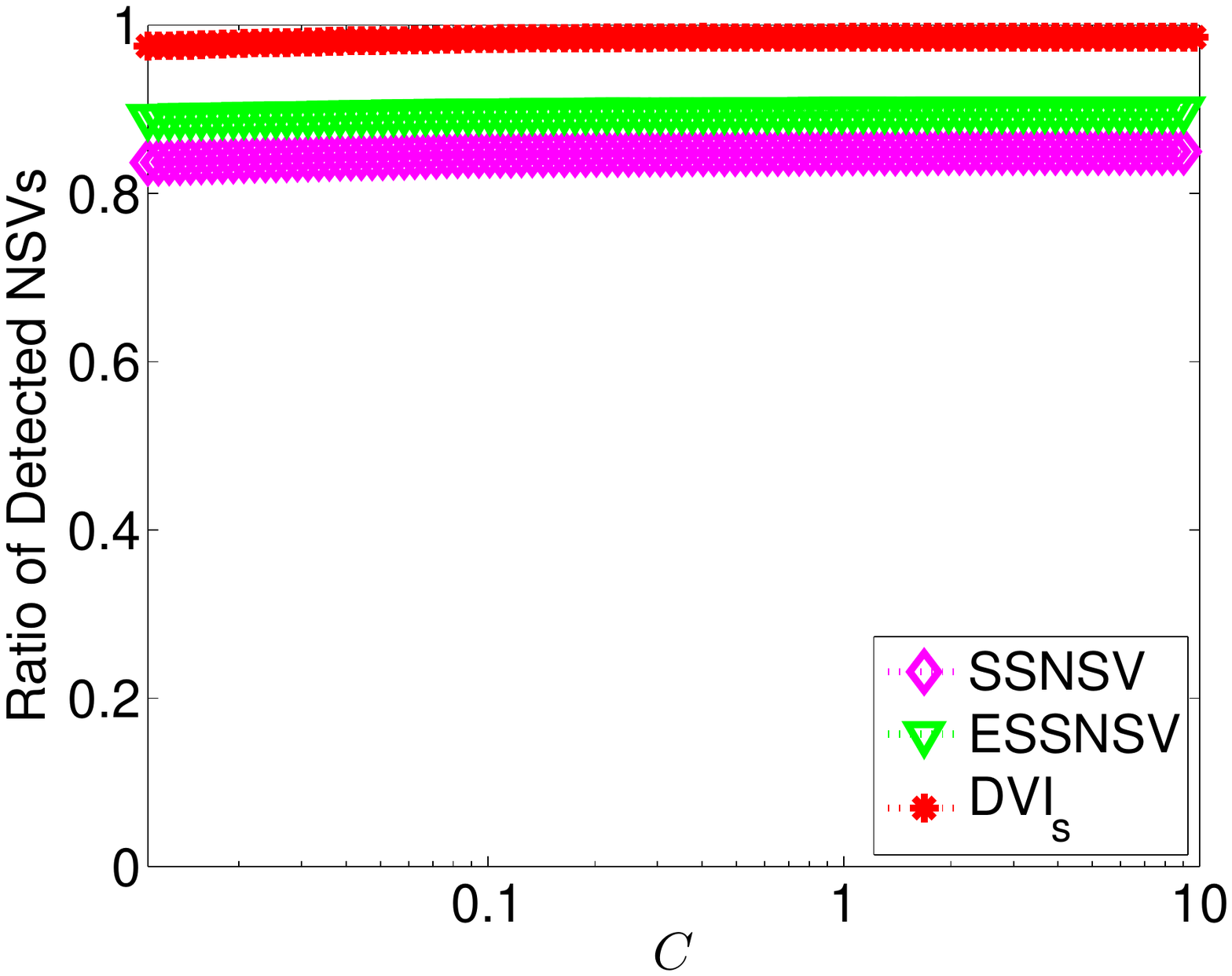}
}
}
\caption{Comparison of the performance of SSNSV, ESSNSV and DVI$s$ for SVM on three real data sets.}
\label{fig:rej_ratio_real}
\end{figure*}

\begin{figure*}[ht!]
\centering{
\subfigure[Magic Gamma Telescope] {
\includegraphics[width=0.3\columnwidth]{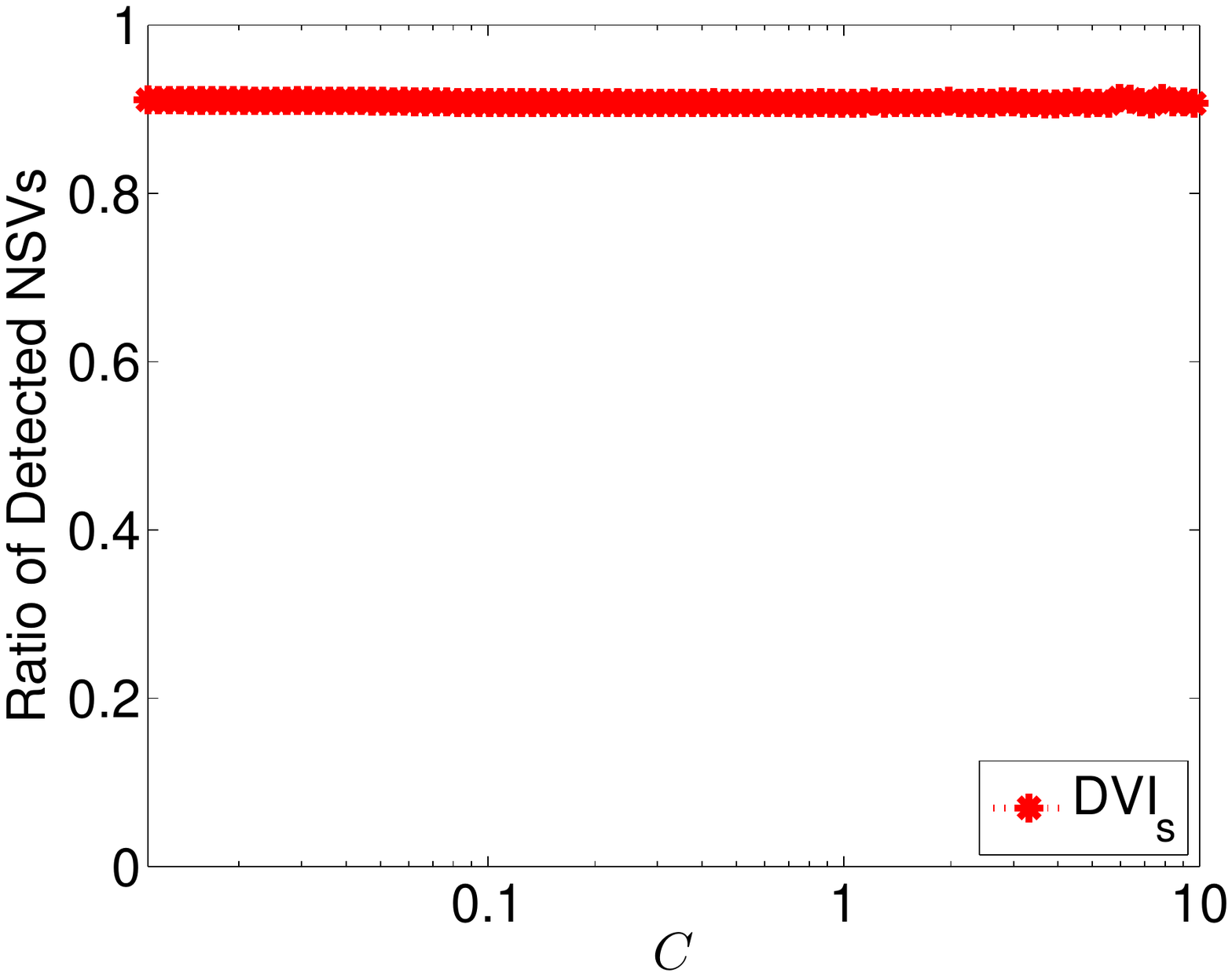}
}
\subfigure[Computer] {
\includegraphics[width=0.3\columnwidth]{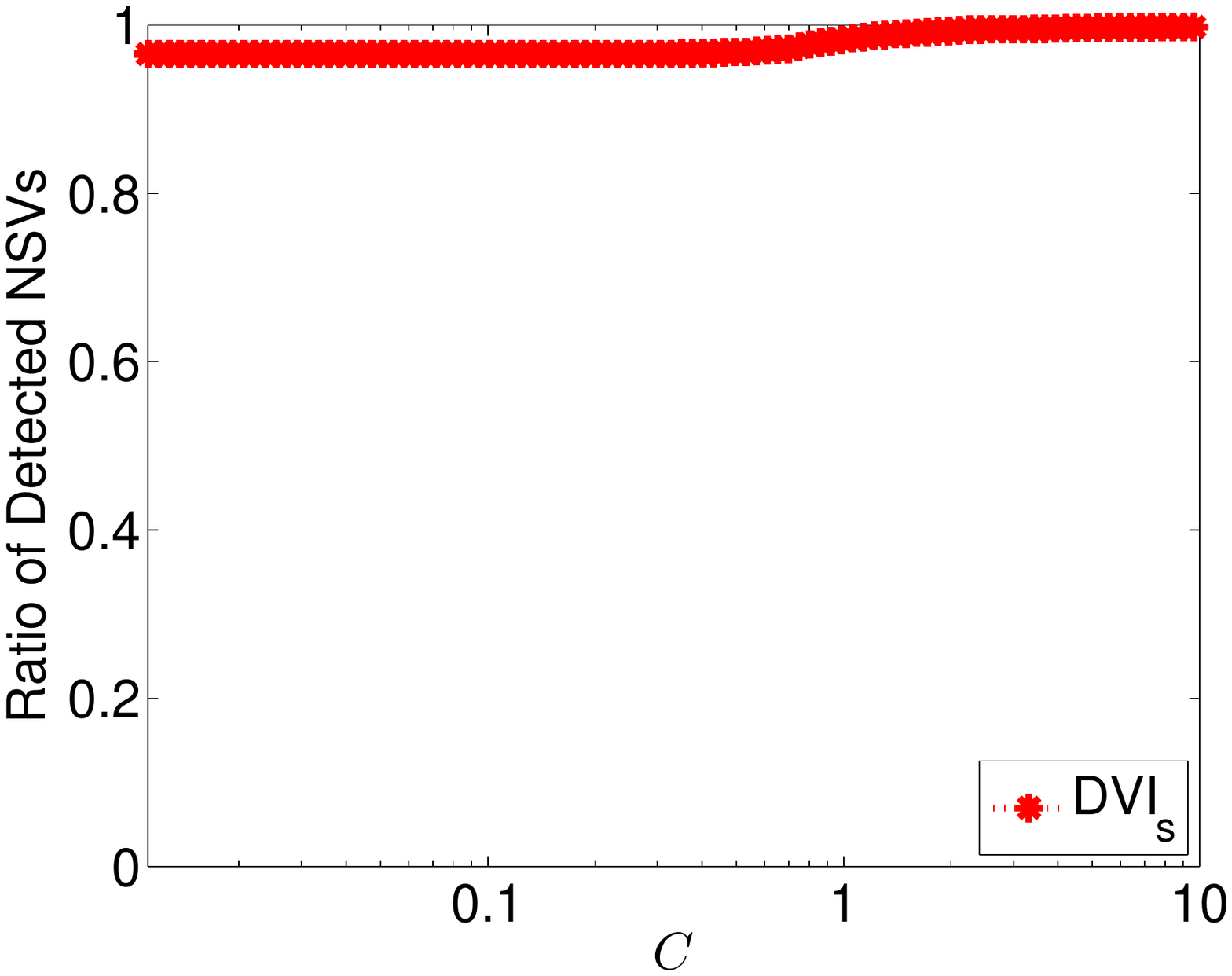}
}
\subfigure[Houses] {
\includegraphics[width=0.3\columnwidth]{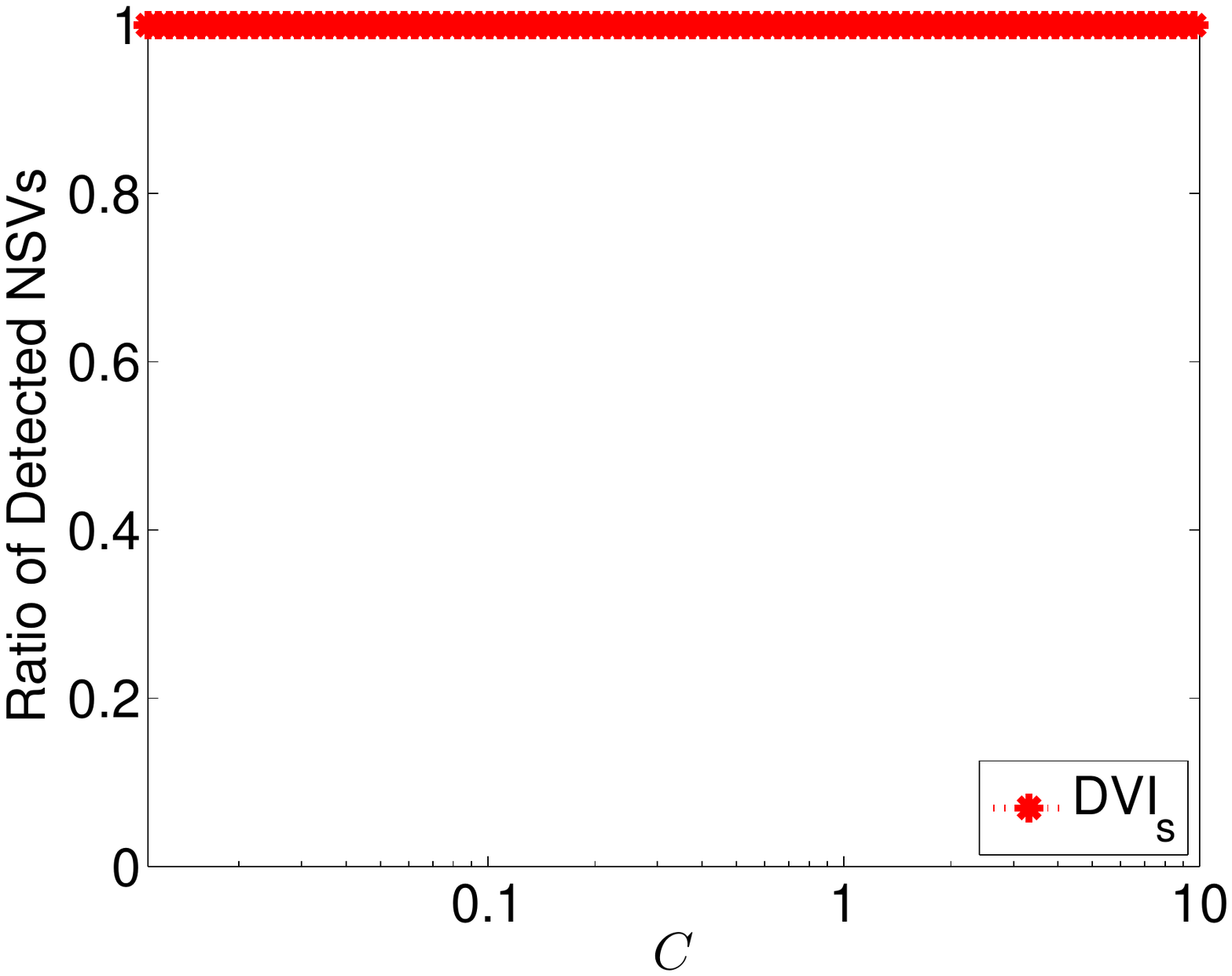}
}
}
\caption{Rejection ratio of DVI$s$ for LAD on three real data sets.}
\label{fig:rej_ratio_real_rls}
\end{figure*}

{\bf Synthetic Data Sets} In this experiment, we show that DVI$_s$ are very effective in discarding non-support vectors even for largely overlapping classes. We evaluate DVI$_s$ rules on three synthetic data sets, i.e., Toy1, Toy2 and Toy3, plotted in the first row of \figref{fig:rej_ratio_synthetic}. For each data set, we generate two classes. Each class has $1000$ data points and is generated from $N(\{\mu,\mu\}^T,0.75^2{\bf I})$, where ${\bf I}\in\Re^{2\times 2}$ is the identity matrix. For the positive classes (the red dots), $\mu=1.5, 0.75, 0.5$, for Toy1, Toy2 and Toy 3, respectively; and $\mu=-1.5, -0.75, -0.5$, for the negative classes (the blue dots). From the plots, we can observe that when $|\mu|$ decreases, the two classes increasingly overlap and thus the number of data instances belong to the set $\mathcal{L}$ increases.
\begin{table}
\vspace{-0.05in}
\caption{Running time (in seconds) for solving the SVM problems with $100$ parameter values by (a) ``Solver" (solver without screening); (b) ``Solver+DVI$_s$" (solver combined with DVI$_s$). ``DVI$_s$" is the total running time (in seconds) of the rule. ``Init." is the running time to solve SVM with the smallest parameter value.}
\label{table:time_synthetic_svm}
\begin{center}
\begin{tabular}{ l|c|c|c|c|c }
  \hline
   & Solver & Solver+DVI$_s$ & DVI$_s$ & Init. & \multicolumn{1}{c}{Speedup} \\
  \hline\hline
  Toy1  & 11.83 & 0.20 & 0.02 & 0.12 & {\bf 59.15}  \\\hline
  Toy2  & 13.68 & 0.52 & 0.03 & 0.15 & {\bf 26.31} \\\hline
  Toy3  & 15.35 & 0.61 & 0.03 & 0.16 & {\bf 25.16}  \\
  \hline
\end{tabular}
\end{center}
\end{table}

The second row of \figref{fig:rej_ratio_synthetic} presents the stacked area charts of the rejection rates. For convenience, let $\widetilde{\mathcal{R}}$ and $\widetilde{\mathcal{L}}$ be the indices of data instances which are identified by DVI$_s$ as members of $\mathcal{R}$ and $\mathcal{L}$, respectively. Then, the blue and red regions present the ratios of $|\widetilde{\mathcal{R}}|/l$ and $|\widetilde{\mathcal{L}}|/l$ (recall that, $l$ is the number of data instances, which is $2000$ for this experiment). We can see that, for Toy1, the two classes are clearly apart from each other and thus most of the data instances belong to the set $\mathcal{R}$. The first chart in the second row of \figref{fig:rej_ratio_synthetic} indicates that the proposed DVI$_s$ can identify almost all of the non-support vectors and thus the speedup is almost $60$ times compared to the solver without screening (please refer to Table \ref{table:time_synthetic_svm}). When the two classes have a large overlap, e.g., Toy3, the number of data instances in $\mathcal{L}$ significantly increases. This will generally impose great challenge for the solver. But even for this challenging case, DVI$_s$ is still able to identify a large portion of the non-support vectors as indicated by the last charts in the second row of \figref{fig:rej_ratio_synthetic}. Notice that, for Toy3, $|\widetilde{\mathcal{L}}|$ is comparable to $|\widetilde{\mathcal{R}}|$. Table \ref{table:time_synthetic_svm} shows that the speedup gained by DVI$_s$ is about $25$ times for this challenging case. It is worthwhile to mention that the running time of ``Solver$+$DVI$_s$" in Table \ref{table:time_synthetic_svm} includes the running time (the $5^{th}$ column of Table \ref{table:time_synthetic_svm}) for solving SVM with the smallest parameter value.

{\bf Real Data Sets} In this experiment, we compare the performance of SSNSV, ESSNSV and DVI$_s$ in terms of the rejection ratio, that is, the ratio between the number of data instances identified as members of $\mathcal{R}$ or $\mathcal{L}$ by the screening rules and the number of total data instances.
\begin{table}
\caption{Running time (in seconds) for solving the SVM problems along the $100$ parameter values on three real data sets. In ``Solver+SSNSV" and ``Solver+ESSNSV", ``Init." reports the running time for solving SVM at the smallest and the largest parameter values since they are required to run SSNSV and ESSNSV. In ``Solver+DVI$_s$", ``Init." reports the running time for solving SVM at the smallest parameter value which is sufficient to run DVI$_s$. The running time reported by Init. is included in the total running time of the solver equipped with the screening methods.
}
\label{table:time_real_svm}
\begin{center}
\begin{tabular}{ l|c|c|c }
  \hline
   \multicolumn{3}{c|}{IJCNN1  ($l=49990,n=22$)} & \multicolumn{1}{c}{Speedup} \\
  \hline\hline
  Solver  & Total & 4669.14 & - \\\hline\hline
  \multirow{3}{*}{Solver+SSNSV}  & SSNSV & 2.08 &\\ \cline{2-3}
                                 & Init. & 92.45 & 2.31\\ \cline{2-3}
                                 &  Total & 2018.55 &\\ \hline\hline
  \multirow{3}{*}{Solver+ESSNSV}  & ESSNSV & 2.09\\ \cline{2-3}
                                 & Init. & 91.33 & 3.01\\ \cline{2-3}
                                 &  Total & 1552.72\\ \hline\hline
  \multirow{3}{*}{Solver+DVI$_s$}  & DVI$_s$ & 0.99\\ \cline{2-3}
                                 & Init.& 42.67 & {\bf 5.64}\\ \cline{2-3}
                                 & Total & 828.02\\ \hline
  \multicolumn{3}{c}{} \\
  \hline
  \multicolumn{3}{c}{Wine  ($l=6497,n=12$)} & \multicolumn{1}{|c}{Speedup} \\
  \hline\hline
  Solver  & Total & 76.52 & - \\\hline\hline
  \multirow{3}{*}{Solver+SSNSV}  & SSNSV & 0.02\\ \cline{2-3}
                                 & Init. & 1.56 & 3.50\\ \cline{2-3}
                                 &  Total & 21.85\\ \hline\hline
  \multirow{3}{*}{Solver+ESSNSV}  & ESSNSV & 0.03\\ \cline{2-3}
                                 & Init. & 1.60 & 4.47\\ \cline{2-3}
                                 &  Total & 17.17\\ \hline\hline
  \multirow{3}{*}{Solver+DVI$_s$}  & DVI$_s$ & 0.01\\ \cline{2-3}
                                 & Init.& 0.67 & {\bf 6.59}\\ \cline{2-3}
                                 & Total & 11.62\\ \hline
  \multicolumn{3}{c}{} \\
  \hline
  \multicolumn{3}{c}{Forest Covertype  ($l=37877,n=54$)} & \multicolumn{1}{|c}{Speedup} \\
  \hline\hline
  Solver  & Total & 1675.46 & - \\\hline\hline
  \multirow{3}{*}{Solver+SSNSV}  & SSNSV & 2.73\\ \cline{2-3}
                                 & Init. & 35.52 & 7.60\\ \cline{2-3}
                                 &  Total & 220.58\\ \hline\hline
  \multirow{3}{*}{Solver+ESSNSV}  & ESSNSV & 2.89\\ \cline{2-3}
                                 & Init. & 36.13 & 10.72\\ \cline{2-3}
                                 &  Total & 156.23\\ \hline\hline
  \multirow{3}{*}{Solver+DVI$_s$}  & DVI$_s$ & 1.27\\ \cline{2-3}
                                 & Init.& 12.57 & {\bf 79.18}\\ \cline{2-3}
                                 & Total & 21.16\\ \hline
  \multicolumn{3}{c}{} \\

\end{tabular}
\end{center}
\end{table}
\figref{fig:rej_ratio_real} shows the rejection ratios of the three screening rules on three real data sets. We can observe that DVI$_s$ rules identify far more non-support vectors than SSNSV and ESSNSV. For IJCNN1, about $80\%$ of the data instances are identified as non-support vectors by DVI$_s$. Therefore, as indicated by Table \ref{table:time_real_svm} the speedup gained by DVI$_s$ is about $5$ times. For the Wine data set, more than $80\%$ of the data instances are identified to belong to $\mathcal{R}$ or $\mathcal{L}$ by DVI$_s$. As indicated in Table \ref{table:time_real_svm}, the speedup is about $6$ times gained by DVI$_s$. For the Forest Covertype data set, almost all of data instances' membership can be determined by DVI$_s$. Table \ref{table:time_real_svm} shows that the speedup gained by DVI$_s$ is almost $80$ times, which is much higher than that of SSNSV and ESSNSV. Moreover, \figref{fig:rej_ratio_real} demonstrates that ESSNSV is more effective in identifying non-support vectors than SSNSV, which is consistent with our analysis.

\subsection{DVI for LAD}\label{subsection:experiment_LAD}

\begin{table}
\caption{Running time (in seconds) for solving the LAD problems with the given $100$ parameter values on three real data sets. In ``Solver+DVI$_s$", ``Init." reports the running time for solving LAD at the smallest parameter value which is required to run DVI$_s$. Init. is included in the total running time of Solver$+$DVI$_s$.
}\label{table:time_real_rls}
\begin{center}
\begin{tabular}{ l|c|c|c }
  \hline
   \multicolumn{3}{c|}{Magic Gamma Telescope ($l=19020,n=10$)} & \multicolumn{1}{c}{Speedup} \\
  \hline\hline
  Solver  & Total & 122.34 & - \\\hline\hline
  \multirow{3}{*}{Solver+DVI$_s$}  & DVI$_s$ & 0.28\\ \cline{2-3}
                                 & Init.& 0.12 & {\bf 9.86}\\ \cline{2-3}
                                 & Total & 12.41\\ \hline
  \multicolumn{3}{c}{} \\
  \hline
  \multicolumn{3}{c}{Computer  ($l=8192,n=12$)} & \multicolumn{1}{|c}{Speedup}\\
  \hline\hline
  Solver  & Total & 5.38 & - \\\hline\hline
  \multirow{3}{*}{Solver+DVI$_s$}  & DVI$_s$ & 0.08\\ \cline{2-3}
                                 & Init.& 0.05 & {\bf 19.21}\\ \cline{2-3}
                                 & Total & 0.28\\ \hline

  \multicolumn{3}{c}{} \\
  \hline
  \multicolumn{3}{c}{Houses ($l=20640,n=8$)} & \multicolumn{1}{|c}{Speedup}\\
  \hline\hline
  Solver  & Total & 21.43 & - \\\hline\hline
  \multirow{3}{*}{Solver+DVI$_s$}  & DVI$_s$ & 0.06\\ \cline{2-3}
                                 & Init.& 0.10 & {\bf 114.91}\\ \cline{2-3}
                                 & Total & 0.19\\ \hline
  \multicolumn{3}{c}{} \\

\end{tabular}
\end{center}
\end{table}

In this experiment, we evaluate the performance of DVI$_s$ for LAD on three real data sets: (a) Magic Gamma Telescope data set \cite{Bache+Lichman:2013}; (b) Computer data set \cite{Rasmussen}; (c) Houses data set \cite{Pace1997}. \figref{fig:rej_ratio_real_rls} shows the rejection ratio of DVI$_s$ rules for the three data sets. We can observe that the rejection ratio of DVI$_s$ on Magic Gamma Telescope data set is about $90\%$, leading to a $10$ times speedup as indicated in Table \ref{table:time_real_rls}. For the Computer and Houses data sets, we can see that the rejection rates are very close to $100\%$, i.e., almost all of the data instances' membership can be determined by the DVI$_s$ rules. As expected, Table \ref{table:time_real_rls} shows that the resulting speedup are about $20$ and $115$ times, respectively. Notice that, the speedup for the Houses data set is more than two orders of magnitude. These results demonstrate the effectiveness of the proposed DVI rules.

\section{Conclusion}\label{section:conclusion}

In this paper, we develop new screening rules for a class of supervised learning problems by studying their dual formulation with the variational inequalities. Our framework includes two well known models, i.e., SVM and LAD, as special cases. The proposed DVI rules are very effective in identifying non-support vectors for both SVM and LAD, and thus result in substantial savings in the computational cost and memory. Extensive experiments on both synthetic and real data sets demonstrate the effectiveness of the proposed DVI rules. We plan to extend the framework of DVI to other supervised learning problems, e.g., weighted SVM \cite{Yang2005}, RWLS (robust weighted least squres) \cite{Chatterjee1997}, robust PCA \cite{Ding2006}, robust matrix factorization \cite{Ke2005}.


\clearpage
\newpage
\nocite{langley00}
\bibliography{refs}
\bibliographystyle{plain}

\clearpage
\newpage
\onecolumn
\appendix




\section{Proof of Lemma \ref{lemma:biconjugate}}

Before we prove Lemma \ref{lemma:biconjugate}, let us cite the following technical lemma.

\begin{lemma}\label{lemma:biconjugate_iff}
\cite{Hiriart-Urruty1993} The function $f$ is equal to its biconjugate $f^{**}$ if and only if $f\in\Gamma_0(\Re^n)$.
\end{lemma}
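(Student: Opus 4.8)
The plan is to prove this equivalence, which is the Fenchel--Moreau theorem, by treating the two implications separately; the substantive content lies entirely in the direction ``$f\in\Gamma_0(\Re^n)\Rightarrow f=f^{**}$.'' For the easy direction, suppose $f=f^{**}$. By the definition of the biconjugate, $f^{**}({\bf x})=\sup_{\theta\in\Re^n}\{{\bf x}^T\theta-f^*(\theta)\}$ is a pointwise supremum of the affine functions ${\bf x}\mapsto{\bf x}^T\theta-f^*(\theta)$ ranging over those $\theta$ with $f^*(\theta)<\infty$. A pointwise supremum of affine functions is automatically convex and lower semicontinuous, and properness is guaranteed by the hypothesis that the common value lies in $(-\infty,\infty]$ and is not identically $+\infty$; hence $f\in\Gamma_0(\Re^n)$.

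For the converse, assume $f\in\Gamma_0(\Re^n)$. First I would record the Fenchel--Young inequality: the definition of $f^*$ gives $f^*(\theta)\geq{\bf x}^T\theta-f({\bf x})$ for every ${\bf x}$ and $\theta$, i.e. $f({\bf x})\geq{\bf x}^T\theta-f^*(\theta)$, and taking the supremum over $\theta$ yields $f\geq f^{**}$ pointwise. This bound is immediate and uses nothing about $f$. The work is in the reverse inequality $f^{**}\geq f$, which I would establish by a separation argument in $\Re^{n+1}$.

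Fix ${\bf x}_0\in\Re^n$ and any real $r<f({\bf x}_0)$. Because $f$ is convex and lower semicontinuous, its epigraph $\mathrm{epi}\,f=\{({\bf x},t):f({\bf x})\leq t\}$ is a nonempty closed convex subset of $\Re^{n+1}$ not containing $({\bf x}_0,r)$. By the separating hyperplane theorem there exist $(\theta,s)\in\Re^{n+1}\setminus\{0\}$ and $c\in\Re$ with ${\bf x}^T\theta+ts\geq c>{\bf x}_0^T\theta+rs$ for all $({\bf x},t)\in\mathrm{epi}\,f$; letting $t\to+\infty$ forces $s\geq0$. In the generic case $s>0$ I would normalize $s=1$ and rearrange the inequality into an affine minorant ${\bf x}^T\xi-\mu\leq f({\bf x})$ whose value at ${\bf x}_0$ strictly exceeds $r$; since any affine minorant with slope $\xi$ and intercept $-\mu$ satisfies $f^*(\xi)\leq\mu$, plugging $\xi$ into the definition of $f^{**}$ gives $f^{**}({\bf x}_0)\geq{\bf x}_0^T\xi-f^*(\xi)\geq{\bf x}_0^T\xi-\mu>r$, and letting $r\uparrow f({\bf x}_0)$ yields $f^{**}({\bf x}_0)\geq f({\bf x}_0)$.

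The main obstacle is the degenerate case $s=0$, which occurs exactly when ${\bf x}_0\notin\mathrm{dom}\,f$ and the only available separating hyperplane is ``vertical.'' Here the separator gives ${\bf x}^T\theta\geq c>{\bf x}_0^T\theta$ for all ${\bf x}\in\mathrm{dom}\,f$ but carries no information about the heights $t$, so it does not directly produce an affine minorant. To repair this I would invoke properness: because $f\in\Gamma_0(\Re^n)$ is proper it admits at least one genuine affine minorant ${\bf x}^T\eta-b\leq f({\bf x})$, obtained by running the $s>0$ argument at a point of $\mathrm{dom}\,f$. For $\lambda>0$ the perturbed function ${\bf x}\mapsto{\bf x}^T\eta-b+\lambda(c-{\bf x}^T\theta)$ remains a minorant of $f$, since $c-{\bf x}^T\theta\leq0$ on $\mathrm{dom}\,f$ and $f=+\infty$ elsewhere, yet its value at ${\bf x}_0$ is ${\bf x}_0^T\eta-b+\lambda(c-{\bf x}_0^T\theta)$ with $c-{\bf x}_0^T\theta>0$, hence tends to $+\infty$ as $\lambda\to\infty$. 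Choosing $\lambda$ large makes this minorant exceed $r$ at ${\bf x}_0$, so again $f^{**}({\bf x}_0)>r$; as $r$ was arbitrary below $f({\bf x}_0)=+\infty$ we conclude $f^{**}({\bf x}_0)=+\infty=f({\bf x}_0)$. Reconciling the vertical and non-vertical separators through this properness argument is the crux of the proof; everything else is the routine separation and Fenchel--Young bookkeeping.
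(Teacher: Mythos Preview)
The paper does not actually prove this lemma: it is quoted verbatim from \cite{Hiriart-Urruty1993} as the classical Fenchel--Moreau theorem and then invoked once, in the appendix, to deduce Lemma~\ref{lemma:biconjugate}. There is therefore nothing in the paper to compare against; you have supplied a proof where the authors simply cite one.

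That said, your argument is the standard one and is correct. The reverse inequality $f^{**}\geq f$ via separation of $({\bf x}_0,r)$ from $\mathrm{epi}\,f$, the reduction $s\geq0$ from the vertical rays in the epigraph, and the tilting trick in the degenerate case $s=0$ are all sound. Your justification for the existence of a genuine affine minorant---running the separation at some ${\bf x}_1\in\mathrm{dom}\,f$---is also fine: if ${\bf x}_1\in\mathrm{dom}\,f$ and $r_1<f({\bf x}_1)$, the case $s=0$ would force ${\bf x}_1^T\theta\geq c>{\bf x}_1^T\theta$, a contradiction, so the non-vertical separator is guaranteed there.

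One minor remark on the easy direction: the literal ``only if'' in the lemma fails for the improper function $f\equiv+\infty$, since then $f^{**}\equiv+\infty=f$ yet $f\notin\Gamma_0(\Re^n)$. Your sentence about properness quietly assumes $f$ is not identically $+\infty$, which is not part of the stated hypothesis. This is a well-known wrinkle in how the theorem is phrased rather than a defect in your reasoning, and it is harmless here because the paper only ever applies the lemma to the function $\varphi$, which is finite-valued by assumption.
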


We are now ready to derive a simple proof of Lemma \ref{lemma:biconjugate} based on Lemma \ref{lemma:biconjugate_iff}.

\begin{proof}
In order to show $\varphi^{**}=\varphi$, it is enough to show $\varphi\in\Gamma_0(\Re)$ according to Lemma \ref{lemma:biconjugate_iff}. Therefore we only to check the following three conditions:

1). Properness: because $\varphi:\Re\rightarrow\Re_+$, i.e., there exists $t\in\Re$ such that $\varphi(t)$ is finite, $\varphi$ is proper.

2). Lower semi-continuality: $\varphi$ is lower semicontinuous because it is continuous.

3). Convexity: the convexity of $\varphi$ is due to the its sublinearity, see Definition \ref{def:sublinear}.

Thus, we have $\varphi\in\Gamma_0(\Re)$, which completes the proof.
\end{proof}

\section{Proof of Lemma \ref{lemma:biconjugate_indicator}}

To prove Lemma \ref{lemma:biconjugate_indicator}, we need to following results.

\begin{lemma}\label{lemma:indicator_sublinear}
\cite{Ruszczynski2006} Let $Z\subseteq\Re^n$ be a convex and closed set. Let us define the support function of $Z$ as
\begin{align}
\sigma_{Z}(s):=\sup_{{\bf x}\in Z}{\bf s}^T{\bf x},
\end{align}
and the indicator function $\iota_{Z}$ as
\begin{align}
\iota_{Z}({\bf x})=
\begin{cases}
0,\hspace{4mm}{\rm if}\hspace{2mm}{\bf x}\in Z,\\
\infty,\hspace{2mm}{\rm otherwise}.
\end{cases}
\end{align}
Then
\begin{align}
\sigma_{Z}^*=\iota_{Z},\hspace{3mm}{\rm amd}\hspace{1mm}\iota_{Z}^*=\sigma_{Z}.
\end{align}
\end{lemma}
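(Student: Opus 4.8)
The plan is to establish the two conjugacy identities separately: I would obtain $\iota_Z^*=\sigma_Z$ by a direct computation from the definition of the conjugate, and then deduce $\sigma_Z^*=\iota_Z$ from it by applying the conjugation operation once more and invoking the Fenchel--Moreau theorem (Lemma~\ref{lemma:biconjugate_iff}). Throughout I assume $Z\neq\emptyset$, which is what makes $\iota_Z$ proper and is the only case relevant to the application, where $Z$ is an interval $[\alpha,\beta]$.

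First I would prove $\iota_Z^*=\sigma_Z$. By the definition of the conjugate in \eqref{eqn:conjugate},
\[
\iota_Z^*(\theta)=\sup_{{\bf x}\in\Re^n}\big({\bf x}^T\theta-\iota_Z({\bf x})\big).
\]
Since $\iota_Z({\bf x})=0$ for ${\bf x}\in Z$ and $\iota_Z({\bf x})=\infty$ otherwise, every ${\bf x}\notin Z$ contributes $-\infty$ to the supremum and may be discarded, so it reduces to $\sup_{{\bf x}\in Z}{\bf x}^T\theta=\sigma_Z(\theta)$. This yields $\iota_Z^*=\sigma_Z$ using only nonemptiness of $Z$.

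For the remaining identity, I would chain the two conjugations. Applying the conjugate to both sides of $\iota_Z^*=\sigma_Z$ gives $\sigma_Z^*=(\iota_Z^*)^*=\iota_Z^{**}$, so it suffices to show $\iota_Z^{**}=\iota_Z$. By Lemma~\ref{lemma:biconjugate_iff} this holds exactly when $\iota_Z\in\Gamma_0(\Re^n)$, so I would verify the three defining properties: $\iota_Z$ is proper because $Z$ is nonempty (it attains the finite value $0$ and never takes $-\infty$); $\iota_Z$ is convex because $Z$ is convex (its sublevel sets are $\emptyset$ or $Z$); and $\iota_Z$ is lower semicontinuous because $Z$ is closed (its sublevel sets are again $\emptyset$ or the closed set $Z$). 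Hence $\iota_Z\in\Gamma_0(\Re^n)$, and therefore $\sigma_Z^*=\iota_Z^{**}=\iota_Z$.

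The one step carrying genuine content is the verification that $\iota_Z\in\Gamma_0(\Re^n)$, and within it the lower-semicontinuity claim: this is precisely where closedness of $Z$ is indispensable. If $Z$ were not closed, $\iota_Z$ would fail to be lsc, Lemma~\ref{lemma:biconjugate_iff} would not apply, and one would in fact recover $\sigma_Z^*=\iota_{\overline{Z}}$ rather than $\iota_Z$. I would also remark that the analytic work here is delegated to the cited biconjugate theorem; a fully self-contained alternative for $\sigma_Z^*=\iota_Z$ is a separating-hyperplane argument — for ${\bf x}\in Z$ one gets $\sigma_Z^*({\bf x})=0$ since ${\bf x}^T\theta-\sigma_Z(\theta)\le 0$ for all $\theta$, with equality at $\theta={\bf 0}$, while for ${\bf x}\notin Z$ one separates ${\bf x}$ from $Z$ to find $\theta_0$ with ${\bf x}^T\theta_0>\sigma_Z(\theta_0)$ and lets $t\theta_0$ grow, using positive homogeneity of $\sigma_Z$, forcing $\sigma_Z^*({\bf x})=\infty$.
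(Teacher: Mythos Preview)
The paper does not supply its own proof of this lemma: it is stated in the appendix with a citation to \cite{Ruszczynski2006} and then used as a black box in the proof of Lemma~\ref{lemma:biconjugate_indicator}. So there is nothing in the paper to compare your argument against.

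That said, your proof is correct and well organized. The direct computation of $\iota_Z^*=\sigma_Z$ is immediate, and deducing $\sigma_Z^*=\iota_Z$ by conjugating once more and invoking Lemma~\ref{lemma:biconjugate_iff} is clean and uses exactly the hypotheses on $Z$ (nonempty for properness, convex for convexity of $\iota_Z$, closed for lower semicontinuity). Your remark that without closedness one would only recover $\sigma_Z^*=\iota_{\overline{Z}}$ is accurate, and the sketched separating-hyperplane alternative is also valid. The explicit assumption $Z\neq\emptyset$ you add is appropriate; the paper's statement omits it, but the only use in the paper is for $Z=[\alpha,\beta]$, which is nonempty.
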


\begin{theorem}\label{theorem:sublinear_support}
\cite{Hiriart-Urruty1993} Let $\sigma\in\Gamma_0(\Re^n)$ be a sublinear function, then $\sigma$ is the support function of the nonempty closed convex set
\begin{align}
S_{\sigma}:=\{{\bf s}\in\Re^n:{\bf s}^T{\bf d}\leq\sigma({\bf d}),\,\,\forall {\bf d}\in\Re^n\}.
\end{align}
\end{theorem}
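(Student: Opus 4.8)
The plan is to prove the identity $\sigma = \sigma_{S_\sigma}$, where $\sigma_{S_\sigma}$ denotes the support function of $S_\sigma$, by routing through the conjugate $\sigma^*$ and then combining the biconjugation criterion (Lemma \ref{lemma:biconjugate_iff}) with the indicator/support duality (Lemma \ref{lemma:indicator_sublinear}). The central computation is to show that the conjugate of a sublinear function is exactly the indicator of $S_\sigma$; the remaining work is bookkeeping about closedness, convexity, and nonemptiness.

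First I would record the elementary structural facts about $S_\sigma=\{{\bf s}\in\Re^n:{\bf s}^T{\bf d}\leq\sigma({\bf d}),\,\,\forall{\bf d}\in\Re^n\}$. For each fixed ${\bf d}$ the constraint ${\bf s}^T{\bf d}\leq\sigma({\bf d})$ cuts out a closed half-space in the variable ${\bf s}$, so $S_\sigma$, being an intersection of such half-spaces, is closed and convex. I would also observe that positive homogeneity forces $\sigma(0)=0$: since $\sigma(0)=\sigma(t\cdot 0)=t\sigma(0)$ for all $t>0$, the only finite value consistent with this equation is $0$.

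The key step is the conjugate computation. By \eqref{eqn:conjugate}, $\sigma^*({\bf s})=\sup_{{\bf d}}\{{\bf s}^T{\bf d}-\sigma({\bf d})\}$. Using positive homogeneity, for any ${\bf d}$ and $t>0$ we have ${\bf s}^T(t{\bf d})-\sigma(t{\bf d})=t({\bf s}^T{\bf d}-\sigma({\bf d}))$. Hence if ${\bf s}^T{\bf d}-\sigma({\bf d})>0$ for some ${\bf d}$, letting $t\to\infty$ drives the supremum to $+\infty$; and if ${\bf s}^T{\bf d}\leq\sigma({\bf d})$ for all ${\bf d}$ (that is, ${\bf s}\in S_\sigma$), the supremum is at most $0$ and is attained at ${\bf d}=0$ because $\sigma(0)=0$. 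This shows $\sigma^*=\iota_{S_\sigma}$. Since $\sigma\in\Gamma_0(\Re^n)$, its conjugate $\sigma^*$ is again proper, so $\iota_{S_\sigma}$ is proper and therefore $S_\sigma\neq\emptyset$, which establishes the nonemptiness asserted in the theorem.

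To finish, I would apply Lemma \ref{lemma:biconjugate_iff}: because $\sigma\in\Gamma_0(\Re^n)$, we have $\sigma=\sigma^{**}=(\sigma^*)^*=(\iota_{S_\sigma})^*$. Now $S_\sigma$ is nonempty, closed, and convex, so Lemma \ref{lemma:indicator_sublinear} gives $(\iota_{S_\sigma})^*=\sigma_{S_\sigma}$, the support function of $S_\sigma$. Chaining these identities yields $\sigma=\sigma_{S_\sigma}$, i.e., $\sigma$ is the support function of $S_\sigma$, as claimed. The only place requiring genuine care is the conjugate computation, where positive homogeneity is used to collapse $\sigma^*$ onto an indicator function; once that identity is in hand, the conclusion follows immediately from the two cited lemmas.
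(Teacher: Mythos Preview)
The paper does not give its own proof of this theorem; it is quoted as a known result from \cite{Hiriart-Urruty1993} and used as a tool in the proof of Lemma~\ref{lemma:biconjugate_indicator}. Your argument is correct and is in fact the standard proof: compute $\sigma^*$ using positive homogeneity to see that it collapses to $\iota_{S_\sigma}$, deduce nonemptiness of $S_\sigma$ from properness of $\sigma^*$, and then apply Lemmas~\ref{lemma:biconjugate_iff} and~\ref{lemma:indicator_sublinear} to obtain $\sigma=\sigma^{**}=(\iota_{S_\sigma})^*=\sigma_{S_\sigma}$. There is nothing to compare against in the paper itself.
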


We are now ready to prove Lemma \ref{lemma:biconjugate_indicator}.

\begin{proof}
Due to Lemma \ref{lemma:indicator_sublinear} and Theorem \ref{theorem:sublinear_support}, we can see that, there is a nonempty closed convex set $Z\subseteq\Re$ such that
\begin{align}
\varphi(t)=\sup_{s\in Z}st,\,\,\forall t\in\Re,
\end{align}
where
\begin{align}\label{eqn:interval_bound}
Z:=\{s:st\leq\varphi(t),\,\,\forall t\in\Re\}.
\end{align}

Let $t=1$ and $-1$ respectively, \eqref{eqn:interval_bound} implies that
\begin{align}
\sup_{s\in Z} s\leq \varphi(1)\hspace{3mm} {\rm and}\hspace{3mm}\inf_{s\in Z} s\geq \varphi(-1).
\end{align}
Therefore, $Z$ is a closed and bounded interval, i.e., $Z=[\alpha,\beta]$ with $\alpha,\beta\in\Re$.

Next, let us show that $\alpha\neq\beta$. In fact, in view of the nonnegativity of $\varphi$ and \eqref{eqn:interval_bound}, it is easy to see that $0\in Z$. Therefore, if $\alpha=\beta$, we must have $Z=\{0\}$. Thus, Lemma \ref{lemma:indicator_sublinear} implies that
\begin{align}
\varphi=\iota^*_{Z}\equiv0,
\end{align}
which contradicts the fact that $\varphi$ is a nonconstant function. Hence, we can conclude that $\alpha<\beta$, which completes the proof.
\end{proof}

\section{Derivation of the KKT Condition in \eqref{eqn:KKT_general}}

The problem in (\ref{prob:general_dual1}) can be written as follows:
\begin{align}
\min_{\theta}\,\,&\frac{C}{2}\|{\bf Z}^T\theta\|^2-\langle\bar{\bf y},\theta\rangle,\\ \nonumber
{\rm s.t.}\,\,&\theta_i\in[\alpha,\beta],\,\,i=1,\ldots,l.
\end{align}

Therefore, we can see that the Lagrangian is
\begin{align}
L(\theta,\mu,\nu) = \frac{C}{2}\|{\bf Z}^T\theta\|^2-\langle\bar{\bf y},\theta\rangle+ \sum_{i=1}^l\mu_i(\alpha-\theta_i)+\sum_{i=1}^l\nu_i(\theta_i-\beta),
\end{align}
where $\mu=(\mu_1,\ldots,\mu_l)^T$, $\nu=(\nu_1,\ldots,\nu_l)^T$, and $\mu_i\geq0$, $\nu_i\geq0$ for all $i=1,\ldots,l$. $\mu$ and $\nu$ are in fact the vector of Lagrangian multipliers.

For simplicity, let us denote $\theta^*(C)$ by $\theta^*$. Then the KKT conditions \cite{Boyd04} are
\begin{align}\label{eqn:KKT_derivation}
\frac{\partial L(\theta,\mu,\nu)}{\partial \theta}|_{\theta^*}=0\Rightarrow C{\bf Z}{\bf Z}^T\theta^*-\bar{\bf y}-\mu+\nu=0,
\end{align}
\begin{align}\label{eqn:slackness}
\begin{array}{lcl}
\mu_i(\alpha-\theta_i^*)&=&0,\\
\nu_i(\theta^*_i-\beta)&=&0,
\end{array}
i=1,\ldots,l.
\end{align}
\eqref{eqn:slackness} is known as the complementary slackness condition. The equation in (\ref{eqn:KKT_derivation}) actually involves $l$ equations. We can write down the $i^{th}$ equation as follows:
\begin{align}\label{eqn:KKT_i}
C\langle{\bf Z}^T\theta^*,a_i{\bf x}_i\rangle-\mu_i+\nu_i=b_iy_i.
\end{align}
Recall that the $i^{th}$ column of ${\bf Z}$ is $a_i{\bf x}_i$. In view of \eqref{eqn:slackness} and \eqref{eqn:KKT_i}, we can see that:

1. if $\theta_i^*=\alpha$, then $\nu_i=0$ and \eqref{eqn:KKT_i} results in
\begin{align}\label{ineqn:KKT_i1}
C\langle{\bf Z}^T\theta^*,a_i{\bf x}_i\rangle \geq b_iy_i;
\end{align}

2.  if $\theta_i^*\in(\alpha,\beta)$, then $\mu_i=\nu_i=0$ and \eqref{eqn:KKT_i} results in
\begin{align}\label{ineqn:KKT_i2}
C\langle{\bf Z}^T\theta^*,a_i{\bf x}_i\rangle = b_iy_i;
\end{align}

3. if $\theta_i^*=\beta$, then $\mu_i=0$ and \eqref{eqn:KKT_i} results in
\begin{align}\label{ineqn:KKT_i3}
C\langle{\bf Z}^T\theta^*,a_i{\bf x}_i\rangle \leq b_iy_i.
\end{align}

Then, in view of the inequalities in (\ref{ineqn:KKT_i1}), (\ref{ineqn:KKT_i2}) and (\ref{ineqn:KKT_i3}), and \eqref{eqn:primal_dual}, it is straightforward to derive the KKT condition in (\ref{eqn:KKT_general}).

\section{Proof of Lemma \ref{lemma:dual_reduce}}

\begin{proof}
The first part of the statement is trivial by the definition of $\hat{\mathcal{R}}$ and $\hat{\mathcal{L}}$. Therefore, we only consider the second part of the statement.

Let ${\bf G}={\bf Z}{\bf Z}^T$.
By permuting the columns and rows of ${\bf G}$, we have
\begin{align*}
\hat{{\bf G}}=
\begin{pmatrix}
\hat{{\bf G}}_{11}&\hat{{\bf G}}_{12}\\
\hat{{\bf G}}_{21}&\hat{{\bf G}}_{22}\\
\end{pmatrix}
=
\begin{pmatrix}\vspace{1mm}
[{\bf X}^T]_{\hat{\mathcal{S}}^{\rm c}}^T[{\bf X}^T]_{\hat{\mathcal{S}}^{\rm c}} & [{\bf X}^T]_{\hat{\mathcal{S}}^{\rm c}}^T[{\bf X}^T]_{\hat{\mathcal{S}}}\\
[{\bf X}^T]_{\hat{\mathcal{S}}}^T[{\bf X}^T]_{\hat{\mathcal{S}}^{\rm c}} & [{\bf X}^T]_{\hat{\mathcal{S}}}^T[{\bf X}^T]_{\hat{\mathcal{S}}}\\
\end{pmatrix}.
\end{align*}
As a result, the objective function of problem (\ref{prob:general_dual1}) can be rewritten as
\begin{align}\label{eqn:dual_rls_reduced}
\frac{C}{2}[\theta]_{\hat{\mathcal{S}}^{\rm c}}^T\hat{{\bf G}}_{11}[\theta]_{\hat{\mathcal{S}}^{\rm c}}-\hat{\bf y}^T[\theta]_{\hat{\mathcal{S}}^{\rm c}}+R([\theta]_{\hat{\mathcal{S}}})
\end{align}
where
\begin{align}
\hat{\bf y}={\bf y}_{\hat{\mathcal{S}}^{\rm c}}-C\hat{\bf G}_{12}[\theta]_{\hat{\mathcal{S}}},\hspace{1mm},
\end{align}
\begin{align}
R([\theta]_{\hat{\mathcal{S}}})=\frac{C}{2}
[\theta]_{\hat{\mathcal{S}}}^T\hat{{\bf G}}_{22}[\theta]_{\hat{\mathcal{S}}}-{\bf y}_{\hat{\mathcal{S}}}^T[\theta]_{\hat{\mathcal{S}}}
\end{align}
Due to the assumption that $[\theta^*(C)]_{\hat{\mathcal{S}}}$ is known, $\hat{\bf y}$ and $R([\theta]_{\hat{\mathcal{S}}})$ can be treated as constants, and thus problem (\ref{prob:general_dual1})
reduces to problem (\ref{prob:dual_reduced}).
\end{proof}

\section{Improving SSNSV via VI}

In this section, we describe how to strictly improve SSNSV by using the same technique used in DVI rules in a detailed manner.

%

{\bf Estimation of ${\bf w}^*$ via VI}

We show that $\Omega_{[s_b,s_a]}$ in \eqref{eqn:jan_bound} can be strictly improved by the variational inequalities.
Consider $\mathcal{F}_{s_a}$. Because $s_a>s_b$, we can see that ${\bf w}^*(s_b)\in\mathcal{F}_{s_a}$. Therefore, by Theorem \ref{thm:vi}, we have
\begin{align}\label{ineqn:jan_vi1}
\langle{\bf w}^*(s_a),{\bf w}^*(s)-{\bf w}^*(s_a)\rangle\geq0,
\end{align}
which is the first constraint in (\ref{eqn:jan_bound}). Similarly, consider $\mathcal{F}_{s_b}$. Since $\hat{\bf w}(s_b)\in\mathcal{F}_{s_b}$, Theorem \ref{thm:vi} implies that
\begin{align*}
\langle{\bf w}^*(s),\hat{\bf w}(s_b)-{\bf w}^*(s)\rangle\geq0,
\end{align*}
which is equivalent to
\begin{align}\label{ineqn:jan_vi2}
\|{\bf w}^*(s)-\tfrac{1}{2}\hat{\bf w}(s_b)\|\leq\tfrac{1}{2}\|\hat{\bf w}(s_b)\|.
\end{align}
Clearly, the radius determined by the inequality (\ref{ineqn:jan_vi2}) is only a half of the radius determined by the second constraint in (\ref{eqn:jan_bound}).
In view of the inequalities in (\ref{ineqn:jan_vi1}) and (\ref{ineqn:jan_vi2}), we can see that ${\bf w}^*(s)$
can be bounded inside the following region:
\begin{align*}
\Omega'_{[s_b,s_a]}:=\left\{{\bf w}:
\begin{array}{l}
\langle{\bf w}^*(s_a),{\bf w}-{\bf w}^*(s_a)\rangle\geq0,\\
\|{\bf w}-\frac{1}{2}\hat{\bf w}(s_b)\|\leq\frac{1}{2}\|\hat{\bf w}(s_b)\|
\end{array}
\right\}
\end{align*}
It is easy to see that $\Omega'_{[s_b,s_a]}\subset\Omega_{[s_b,s_a]}$. As a result, the bounds in (\ref{rule1'}) and (\ref{rule2'}) with $\Omega'_{[s_b,s_a]}$ are tighter than that of $\Omega_{[s_b,s_a]}$. Thus, SSNSV \cite{Ogawa2013} can be strictly improved by the estimation in (\ref{eqn:jan_bound_vi}). In fact, we have the following theorem:

\begin{theorem}\label{thm:snsv_i}
Suppose we are given two parameters $s_a>s_b>0$, and let ${\bf w}^*(s_a)$ and $\hat{\bf w}(s_b)$ be the optimal solution at $s=s_a$ and a feasible solution at $s=s_b$, respectively. Moreover, let us define \begin{align*}
\rho &= -\|{\bf w}^*(s_a)\|^2+\tfrac{1}{2}\langle{\bf w}^*(s_a),\hat{\bf w}(s_b)\rangle\\
{\bf v}^{\perp}&={\bf v}-\tfrac{{\bf v}^T{\bf w}^*(s_a)}{\|{\bf w}^*(s_a)\|^2}{\bf w}^*(s_a), \forall {\bf v}\in\Re^n.
\end{align*}
Then, for all $s\in[s_b,s_a]$,
\begin{align}\label{snsv:rule1}
\langle{\bf w}^*(s_a),\bar{x}_i\rangle>\tfrac{2\|\bar{\bf x}_i\|}{\|\hat{\bf w}(s_b)\|}\rho\,\,{\rm and}\,\,\ell_i>1\Rightarrow i\in\mathcal{R}\Leftrightarrow \alpha_i=0,
\end{align}
where
\begin{align}\label{snsv:l}
\ell_i=-\tfrac{\langle{\bf w}^*(s_a),\bar{\bf x}_i\rangle}{\|{\bf w}^*(s_a)\|^2}\rho+\tfrac{1}{2}\langle\hat{\bf w}(s_b),\bar{\bf x}_i\rangle-\|\bar{\bf x}_i^{\perp}\|\sqrt{\frac{1}{4}\|\hat{\bf w}(s_b)\|^2-\tfrac{\rho^2}{\|{\bf w}^*(s_a)\|^2}}.
\end{align}
Similarly,
\begin{align}\label{snsv:rule2}
u_i<1\Rightarrow i\in\mathcal{L}\Leftrightarrow \alpha_i=c,
\end{align}
where
\begin{align}\label{snsv:u}
\hspace{-5mm}u_i=
\begin{cases}
\tfrac{1}{2}\left(\langle\hat{\bf w}(s_b),\bar{\bf x}_i\rangle+\|\hat{\bf w}(s_b)\|\|\bar{\bf x}_i\|\right),\\
\hspace{20mm}{\rm if}\,\,\langle{\bf w}^*(s_a),\bar{x}_i\rangle\geq-\tfrac{2\|\bar{\bf x}_i\|}{\|\hat{\bf w}(s_b)\|}\rho\\
-\tfrac{\langle{\bf w}^*(s_a),\bar{\bf x}_i\rangle}{\|{\bf w}^*(s_a)\|^2}\rho+\tfrac{1}{2}\langle\hat{\bf w}(s_b),\bar{\bf x}_i\rangle\\
\hspace{10mm}+\|\bar{\bf x}_i^{\perp}\|\sqrt{\frac{1}{4}\|\hat{\bf w}(s_b)\|^2-\tfrac{\rho^2}{\|{\bf w}^*(s_a)\|^2}},\\
\hspace{20mm}{\rm otherwise.}
\end{cases}
\end{align}
\end{theorem}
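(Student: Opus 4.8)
The plan is to recognize that, by the derivation immediately preceding the statement, ${\bf w}^*(s)\in\Omega'_{[s_b,s_a]}$ for every $s\in[s_b,s_a]$, where $\Omega'_{[s_b,s_a]}$ is the intersection of the half-space $\langle{\bf w}^*(s_a),{\bf w}-{\bf w}^*(s_a)\rangle\geq0$ with the ball $\|{\bf w}-\tfrac{1}{2}\hat{\bf w}(s_b)\|\leq\tfrac{1}{2}\|\hat{\bf w}(s_b)\|$. In view of (\ref{rule1''}) and (\ref{rule2''}), it then suffices to evaluate the two support-type quantities $\ell_i=\min_{{\bf w}\in\Omega'_{[s_b,s_a]}}\langle{\bf w},\bar{\bf x}_i\rangle$ and $u_i=\max_{{\bf w}\in\Omega'_{[s_b,s_a]}}\langle{\bf w},\bar{\bf x}_i\rangle$: once these are known, $\ell_i>1$ forces $i\in\mathcal{R}$ and $u_i<1$ forces $i\in\mathcal{L}$. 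Each is a linear program over the intersection of a ball and a half-space, and I would solve it by the standard device of first optimizing over the ball alone and then testing whether the half-space constraint is active.

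For the maximization I would start from the ball-only maximizer ${\bf w}=\tfrac{1}{2}\hat{\bf w}(s_b)+\tfrac{1}{2}\|\hat{\bf w}(s_b)\|\,\bar{\bf x}_i/\|\bar{\bf x}_i\|$, whose objective value is $\tfrac{1}{2}\langle\hat{\bf w}(s_b),\bar{\bf x}_i\rangle+\tfrac{1}{2}\|\hat{\bf w}(s_b)\|\|\bar{\bf x}_i\|$. The key algebraic identity is $\langle{\bf w}^*(s_a),\tfrac{1}{2}\hat{\bf w}(s_b)\rangle-\|{\bf w}^*(s_a)\|^2=\rho$, which exhibits $\rho$ as the signed distance, up to the factor $\|{\bf w}^*(s_a)\|$, of the ball center from the bounding hyperplane. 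Substituting the ball-only maximizer into the half-space inequality and simplifying with this identity shows that the constraint is slack exactly when $\langle{\bf w}^*(s_a),\bar{\bf x}_i\rangle\geq-\tfrac{2\|\bar{\bf x}_i\|}{\|\hat{\bf w}(s_b)\|}\rho$; in that regime the ball-only point is feasible and hence its value equals $u_i$, yielding the first branch of (\ref{snsv:u}).

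When the half-space constraint is binding, the optimizer lies on the hyperplane $\langle{\bf w}^*(s_a),{\bf w}\rangle=\|{\bf w}^*(s_a)\|^2$, so I would intersect the ball with this hyperplane to obtain a lower-dimensional ball (a disk). By the Pythagorean theorem its radius is $\sqrt{\tfrac{1}{4}\|\hat{\bf w}(s_b)\|^2-\rho^2/\|{\bf w}^*(s_a)\|^2}$, and its center is the orthogonal projection of $\tfrac{1}{2}\hat{\bf w}(s_b)$ onto the hyperplane, namely $\tfrac{1}{2}\hat{\bf w}(s_b)-\tfrac{\rho}{\|{\bf w}^*(s_a)\|^2}{\bf w}^*(s_a)$. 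Writing any feasible point as this center plus a hyperplane-tangent perturbation, and noting that only the component $\bar{\bf x}_i^{\perp}$ of $\bar{\bf x}_i$ orthogonal to ${\bf w}^*(s_a)$ pairs with that perturbation, the optimum of the linear objective over the disk equals the center value $\pm\|\bar{\bf x}_i^{\perp}\|$ times the disk radius. The $+$ sign gives the second branch of (\ref{snsv:u}) and the $-$ sign gives $\ell_i$ in (\ref{snsv:l}); the precondition $\langle{\bf w}^*(s_a),\bar{\bf x}_i\rangle>\tfrac{2\|\bar{\bf x}_i\|}{\|\hat{\bf w}(s_b)\|}\rho$ attached to the $\mathcal{R}$ rule is precisely the mirror condition guaranteeing that the ball-only minimizer is infeasible, so that the disk formula for $\ell_i$ is the true minimum rather than an over-optimistic value.

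The main obstacle I anticipate is the careful bookkeeping of the active-constraint case: one must verify that the disk is nonempty (equivalently that the quantity under the square root is nonnegative, which follows from $\Omega'_{[s_b,s_a]}\neq\emptyset$), project the center and decompose the objective through $\bar{\bf x}_i^{\perp}$ correctly, and match the two sign choices to $\max$/$\min$ and to their respective activity thresholds. The crux is establishing that the ball-only optimum is feasible if and only if the stated inner-product threshold holds, so that (\ref{snsv:u}) and (\ref{snsv:l}) are the exact support values over $\Omega'_{[s_b,s_a]}$; this exactness is what makes the rule simultaneously safe and strictly tighter than the bound built from $\Omega_{[s_b,s_a]}$.
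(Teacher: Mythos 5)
Your proposal is correct, and it reaches the same two-case closed forms as the paper but by a different route at the level of the key computation. Both you and the paper reduce the theorem to evaluating $\min$ and $\max$ of $\langle{\bf w},\bar{\bf x}_i\rangle$ over $\Omega'_{[s_b,s_a]}$ from (\ref{eqn:jan_bound_vi}) and feeding the results into (\ref{rule1''}) and (\ref{rule2''}). The paper, however, packages the subproblem as Lemma \ref{lemma:dome} and solves it by Lagrangian duality: it forms $g(\mu,\nu)$, maximizes over the multipliers, and the dichotomy ${\bf v}^T{\bf u}+\|{\bf v}\|d'/r\gtrless 0$ (equivalently your threshold $\langle{\bf w}^*(s_a),\bar{\bf x}_i\rangle\gtrless-\tfrac{2\|\bar{\bf x}_i\|}{\|\hat{\bf w}(s_b)\|}\rho$) emerges from whether the optimal $\nu$ is zero; it must also separately dispose of the degenerate $\mu=0$ case where ${\bf v}$ is antiparallel to ${\bf u}$. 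Your argument is purely geometric: test whether the ball-only optimizer satisfies the half-space, and if not, restrict to the disk (ball $\cap$ hyperplane), compute its center by orthogonal projection and its radius by the Pythagorean theorem, and decompose $\bar{\bf x}_i$ through $\bar{\bf x}_i^{\perp}$. This is more elementary and sidesteps the paper's antiparallel edge case, at the cost of needing the (standard, and correctly invoked) facts that a linear functional over ball $\cap$ half-space attains its optimum on the hyperplane whenever the ball-only optimizer is infeasible, and that the disk is then nonempty so the square root is real. You also correctly identify the role of the extra precondition in (\ref{snsv:rule1}); the paper's additional remark that in the slack case the ball-only minimum is $\le 0<1$ (so the $\mathcal{R}$ rule can never fire there) is implicit in your reading but worth stating if you write this up.
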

For convenience, we call the screening rule presented in Theorem \ref{thm:snsv_i} as the ``enhanced" SSNSV (ESSNSV).

To prove Theorem \ref{thm:snsv_i}, we first establish the following technical lemma.
\begin{lemma}\label{lemma:dome}
Consider the problem as follows:
\begin{align}\label{prob:dome}
\min_{\bf w} f({\bf w})={\bf v}^T{\bf w},\hspace{1mm}{\rm s.t.}\hspace{1mm}{\bf u}^T{\bf w}\leq d, \|{\bf w}-{\bf o}\|\leq r,
\end{align}
where $r>0$.
Let $d'=d-{\bf u}^T{\bf o}$ and the optimal solution of problem (\ref{prob:dome}) be $f^*$. Then we have
\begin{enumerate}
\item If ${\bf v}^T{\bf u}+\frac{\|{\bf v}\|d'}{r}\geq 0$, then
\begin{align*}
f^* = {\bf v}^T{\bf o}-r\|{\bf v}\|.
\end{align*}
\item Otherwise,
\begin{align*}
f^*={\bf v}^T{\bf o}-\|{\bf v}^{\perp}\|\sqrt{r^2-\frac{(d')^2}{\|{\bf u}\|^2}}+\frac{{\bf v}^T{\bf u}d'}{\|{\bf u}\|^2},
\end{align*}
where ${\bf v}^{\perp}={\bf v}-\frac{{\bf v}^T{\bf u}}{\|{\bf u}\|^2}{\bf u}$.
\end{enumerate}
Notice that, we assume problem (\ref{prob:dome}) is feasible, i.e., $\frac{|{\bf u}^T{\bf o}-d|}{\|{\bf u}\|}\leq r$.
\end{lemma}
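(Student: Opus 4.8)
The plan is to reduce problem (\ref{prob:dome}) to minimizing a linear function over a ball by translation, and then split into the two cases according to whether the half-space constraint is active at the optimum. Throughout I assume ${\bf v}\neq{\bf 0}$, since otherwise the objective is constant and $f^*={\bf v}^T{\bf o}$ agrees with both formulas trivially.

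First I would substitute ${\bf z}={\bf w}-{\bf o}$ to center the ball at the origin. Because ${\bf v}^T{\bf w}={\bf v}^T{\bf z}+{\bf v}^T{\bf o}$ and ${\bf u}^T{\bf w}\leq d$ is equivalent to ${\bf u}^T{\bf z}\leq d'$, problem (\ref{prob:dome}) becomes minimizing ${\bf v}^T{\bf z}$ subject to ${\bf u}^T{\bf z}\leq d'$ and $\|{\bf z}\|\leq r$, plus the additive constant ${\bf v}^T{\bf o}$. The key observation is that the minimizer of ${\bf v}^T{\bf z}$ over the ball alone is the unique point ${\bf z}^{\circ}=-r{\bf v}/\|{\bf v}\|$, with value $-r\|{\bf v}\|$.

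For the first case, ${\bf z}^{\circ}$ satisfies the half-space constraint exactly when ${\bf u}^T{\bf z}^{\circ}=-r\,{\bf u}^T{\bf v}/\|{\bf v}\|\leq d'$, which rearranges to ${\bf v}^T{\bf u}+\|{\bf v}\|d'/r\geq0$. Under this condition ${\bf z}^{\circ}$ is feasible and hence optimal, giving $f^*={\bf v}^T{\bf o}-r\|{\bf v}\|$, as in part~1. For the second case I would first show that the half-space constraint is active at the optimum ${\bf z}^*$: if ${\bf u}^T{\bf z}^*<d'$, then a neighborhood of ${\bf z}^*$ lies in the open half-space, so ${\bf z}^*$ would be a local, and therefore global, minimizer of ${\bf v}^T{\bf z}$ over the ball, forcing ${\bf z}^*={\bf z}^{\circ}$; but ${\bf z}^{\circ}$ is infeasible in this case, a contradiction. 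Hence ${\bf u}^T{\bf z}^*=d'$, and the problem reduces to minimizing ${\bf v}^T{\bf z}$ over the disk $\{{\bf z}:{\bf u}^T{\bf z}=d',\,\|{\bf z}\|\leq r\}$. Writing ${\bf z}=\frac{d'}{\|{\bf u}\|^2}{\bf u}+{\bf z}^{\perp}$ and ${\bf v}=\frac{{\bf v}^T{\bf u}}{\|{\bf u}\|^2}{\bf u}+{\bf v}^{\perp}$ with ${\bf z}^{\perp},{\bf v}^{\perp}$ orthogonal to ${\bf u}$, the objective splits as $\frac{{\bf v}^T{\bf u}\,d'}{\|{\bf u}\|^2}+({\bf v}^{\perp})^T{\bf z}^{\perp}$, while the norm constraint becomes $\|{\bf z}^{\perp}\|^2\leq r^2-(d')^2/\|{\bf u}\|^2$; the feasibility assumption $|d'|/\|{\bf u}\|\leq r$ guarantees this radius is real. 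Minimizing the free linear term over this ball in ${\bf u}^{\perp}$ contributes $-\|{\bf v}^{\perp}\|\sqrt{r^2-(d')^2/\|{\bf u}\|^2}$, and restoring the constant ${\bf v}^T{\bf o}$ yields the expression in part~2.

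The step I expect to be the main obstacle is the activity argument in the second case, namely establishing rigorously that the half-space constraint must bind whenever the ball minimizer ${\bf z}^{\circ}$ is infeasible; once that is settled, the remainder is the translation together with a routine orthogonal decomposition along ${\bf u}$. A minor point to keep clean is the degenerate geometry when the hyperplane is tangent to the ball, i.e. $|d'|/\|{\bf u}\|=r$, where the disk collapses to a single point and the square root vanishes; the formula in part~2 remains valid there.
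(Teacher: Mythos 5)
Your proof is correct, but it takes a genuinely different route from the paper's. The paper solves (\ref{prob:dome}) by Lagrangian duality: it forms $g(\mu,\nu)=-\frac{1}{2\mu}\|{\bf v}+\nu{\bf u}\|^2-\nu d'-\frac{\mu r^2}{2}$, maximizes first over $\nu\geq0$ (which produces exactly your case split, since $\nu=\max\{0,-({\bf v}^T{\bf u}+\mu d')/\|{\bf u}\|^2\}$ with $\mu=\|{\bf v}\|/r$ reduces to the sign of ${\bf v}^T{\bf u}+\|{\bf v}\|d'/r$), then over $\mu$, and must separately dispose of the degenerate possibility $\mu=0$, which it shows can occur only when ${\bf v}$ points opposite to ${\bf u}$. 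You instead argue in the primal: translate to center the ball, observe that the ball-only minimizer ${\bf z}^{\circ}=-r{\bf v}/\|{\bf v}\|$ is feasible precisely under the case-1 condition, and otherwise prove the half-space constraint is active (via the local-implies-global property of the convex subproblem and uniqueness of ${\bf z}^{\circ}$) and finish by an orthogonal decomposition along ${\bf u}$. Your approach is more elementary and self-contained; in particular it absorbs the ${\bf v}\parallel-{\bf u}$ corner case automatically (there ${\bf v}^{\perp}={\bf 0}$ and the disk minimization is trivial), whereas the paper needs a separate argument to rule out an unbounded dual. The paper's computation, on the other hand, yields the optimal multipliers explicitly as a byproduct. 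Two small points worth keeping explicit if you write this up: the activity argument needs ${\bf v}\neq{\bf 0}$ for uniqueness of ${\bf z}^{\circ}$ (which you note), and both formulas implicitly require ${\bf u}\neq{\bf 0}$, as does the feasibility assumption itself.
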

\begin{proof}
 Let ${\bf z}={\bf w}-{\bf o}$, problem (\ref{prob:dome}) can be rewritten as:
\begin{align}\label{prob:dome_cv}
\min_{\bf z}{\bf v}^T{\bf z}+{\bf v}^T{\bf o},\hspace{1mm}{\rm s.t.}\hspace{1mm}{\bf u}^T{\bf z}\leq d-{\bf u}^T{\bf o}, \|{\bf z}\|\leq r.
\end{align}
Problem (\ref{prob:dome_cv}) reduces to
\begin{align}\label{prob:dome_base}
\min_{\bf z}{\bf v}^T{\bf z},\hspace{1mm}{\rm s.t.}\hspace{1mm}{\bf u}^T{\bf z}\leq d', \|{\bf z}\|\leq r.
\end{align}
To solve problem (\ref{prob:dome_base}), we make use of the Lagrangian multiplier method. For notational convenience, let $\mathcal{F}:=\{{\bf z}:{\bf u}^T{\bf z}\leq d',\|{\bf z}\|\leq r\}$.
\begin{align}\label{eqn:lagrangian}
\min_{{\bf z}\in\mathcal{F}}{\bf v}^T{\bf z}&=\min_{\bf z}\max_{\substack{\mu\geq0,\\ \nonumber
\nu\geq0}}{\bf v}^T{\bf z}+\nu({\bf u}^T{\bf z}-d')+\frac{\mu}{2}(\|{\bf z}\|^2-r^2)\\ \nonumber
&=\max_{\substack{\mu\geq0,\\ \nu\geq0}}\min_{\bf z}{\bf v}^T{\bf z}+\nu({\bf u}^T{\bf z}-d')+\frac{\mu}{2}(\|{\bf z}\|^2-r^2)\\
&=\max_{\substack{\mu\geq0,\\ \nu\geq0}}-\frac{1}{2\mu}\|{\bf v}+\nu{\bf u}\|^2-\nu d'-\frac{\mu r^2}{2}.
\end{align}

Notice that, in \eqref{eqn:lagrangian}, we make the assumption that $\mu>0$. However, we can not simply exclude this possibility. In fact, if $\mu=0$, we must have
\begin{align}\label{eqn:colinear}
{\bf v}+\nu{\bf u}=0,
\end{align}
since otherwise the function value of
$$
{\bf v}^T{\bf z}+\nu({\bf u}^T{\bf z}-d')
$$
in the second line of \eqref{eqn:lagrangian} can be made arbitrarily small. As a result, we will have
$$
g(\mu,\nu)=-\infty,
$$
which contradicts the strong duality of problem (\ref{prob:dome}) \cite{Boyd04}. [Problem (\ref{prob:dome}) is clearly lower bounded since the feasible set is compact.] Therefore, in view of \eqref{eqn:colinear}, we can conclude that $\mu=0$ only if ${\bf v}$ point in the opposite direction of ${\bf u}$.

Let us first consider the general case, i.e., ${\bf v}$ does not point in the opposite direction of ${\bf u}$. In view of \eqref{eqn:lagrangian}, let $g(\mu,\nu)=-\frac{1}{2\mu}\|{\bf v}+\nu{\bf u}\|^2-\nu d'-\frac{\mu r^2}{2}$. It is easy to see that
\begin{align}
\frac{\partial g(\mu,\nu)}{\partial \nu}=0\Leftrightarrow \nu = -\frac{{\bf v}^T{\bf u}+\mu d'}{\|{\bf u}\|^2}.
\end{align}
Since $\nu$ has to be nonnegative, we have
\begin{align}
\nu=\max\left\{0,-\frac{{\bf v}^T{\bf u}+\mu d'}{\|{\bf u}\|^2}\right\}.
\end{align}
{\bf Case 1.} If $-\frac{{\bf v}^T{\bf u}+\mu d'}{\|{\bf u}\|^2}\leq 0$, then $\nu=0$ and thus
\begin{align}
\frac{\partial g(\mu,\nu)}{\partial \mu}=0\Leftrightarrow \mu=\frac{\|{\bf v}\|}{r}.
\end{align}
Then $g(\mu,\nu)=-r\|{\bf v}\|$ and the optimal value of problem (\ref{prob:dome}) is given by
\begin{align}
{\bf v}^T{\bf o}-r\|{\bf v}\|.
\end{align}
{\bf Case 2.} If $-\frac{{\bf v}^T{\bf u}+\mu d'}{\|{\bf u}\|^2}> 0$, then $\nu=-\frac{{\bf v}^T{\bf u}+\mu d'}{\|{\bf u}\|^2}$ and
\begin{align}
\hspace{-4mm}g(\mu,\nu)=-\frac{1}{2\mu}\|{\bf v}^{\perp}\|^2-\frac{\mu}{2}\left(r^2-\frac{(d')^2}{\|{\bf u}\|^2}\right)+\frac{{\bf v}^T{\bf u}d'}{\|{\bf u}\|^2},
\end{align}
Thus,
\begin{align}
\frac{\partial g(\mu,\nu)}{\partial \mu}=0\Leftrightarrow \mu=\frac{\|{\bf v}^{\perp}\|}{\sqrt{r^2-\frac{(d')^2}{\|{\bf u}\|^2}}}
\end{align}
Then $g(\mu,\nu)=-\|{\bf v}^{\perp}\|\sqrt{r^2-\frac{(d')^2}{\|{\bf u}\|^2}}+\frac{{\bf v}^T{\bf u}d'}{\|{\bf u}\|^2}$ and the optimal value of problem (\ref{prob:dome}) is given by
\begin{align}
{\bf v}^T{\bf o}-\|{\bf v}^{\perp}\|\sqrt{r^2-\frac{(d')^2}{\|{\bf u}\|^2}}+\frac{{\bf v}^T{\bf u}d'}{\|{\bf u}\|^2}.
\end{align}

Now let us consider the case with ${\bf v}$ pointing in the opposite direction of ${\bf u}$. We can see that there exists $\gamma=-\frac{{\bf u}^T{\bf v}}{\|{\bf u}\|^2}>0$ such that ${\bf v}=-\gamma{\bf u}$. By plugging ${\bf v}=-\gamma{\bf u}$ in problem (\ref{prob:dome}) and following an analogous argument as before, we can see that the statement in Lemma \ref{lemma:dome} is also applicable to this case.

Therefore, the proof of the statement is completed.
\end{proof}

We are now ready to prove Theorem \ref{thm:snsv_i}.
\begin{proof}
To prove the statements in (\ref{snsv:rule1}) and (\ref{snsv:l}), we only need to set
\begin{align*}
&\begin{array}{lll}
{\bf v}:=\bar{\bf x}_i,&{\bf u}:-{\bf w}^*(s_a),&d:=-\|{\bf w}^*(s_a)\|^2,\\
{\bf o}:=\frac{1}{2}\hat{\bf w},&r:=\frac{1}{2}\|\hat{\bf w}\|,&
\end{array}\\
&d':=\rho=-\|{\bf w}^*(s_a)\|^2+\frac{1}{2}\langle{\bf w}^*(s_a),\hat{\bf w}\rangle,
\end{align*}
and then apply Lemma \ref{lemma:dome}. Notice that, for case 1, the optimal value
$$
f^*=\frac{1}{2}\left(\langle\hat{\bf w}(s_b),\bar{\bf x}_i\rangle-\|\hat{\bf w}(s_b)\|\|\bar{\bf x}_i\|\right)\leq0,
$$
and thus none of the non-support vectors can be identified [recall that, according to (\ref{rule1'}), $f^*$ has to be larger than $1$ such that $\bar{\bf x}_i$ can be detected as a non-support vector]. As a result, we only need to consider case 2.

The statement in (\ref{snsv:rule2}) and (\ref{snsv:u}) follows with an analogous argument by noting that
$$
\max_{{\bf w}\in\Theta'_{[s_b,s_a]}}\langle{\bf w},\bar{\bf x}_i\rangle=-\min_{{\bf w}\in\Theta'_{[s_b,s_a]}}-\langle{\bf w},\bar{\bf x}_i\rangle.
$$
\end{proof}

\end{document}